\documentclass{article} % For LaTeX2e

% Optional math commands from https://github.com/goodfeli/dlbook_notation.
%%%%% NEW MATH DEFINITIONS %%%%%

\usepackage{amsmath,amsfonts,bm}

% Mark sections of captions for referring to divisions of figures

% Highlight a newly defined term

% Figure reference, lower-case.

% Figure reference, capital. For start of sentence
\def\Figref#1{Figure~\ref{#1}}

% Section reference, lower-case.

% Section reference, capital.

% Reference to two sections.

% Reference to three sections.

% Reference to an equation, lower-case.
\def\eqref#1{equation~\ref{#1}}
% Reference to an equation, upper case
\def\Eqref#1{Equation~\ref{#1}}
% A raw reference to an equation---avoid using if possible

% Reference to a chapter, lower-case.

% Reference to an equation, upper case.

% Reference to a range of chapters

% Reference to an algorithm, lower-case.

% Reference to an algorithm, upper case.

% Reference to a part, lower case

% Reference to a part, upper case

\def\1{\bm{1}}

% Random variables

% rm is already a command, just don't name any random variables m

% Random vectors

\def\rvd{{\mathbf{d}}}

\def\rvu{{\mathbf{i}}}

\def\rvn{{\mathbf{n}}}

\def\rvu{{\mathbf{u}}}
\def\rvv{{\mathbf{v}}}
\def\rvw{{\mathbf{w}}}
\def\rvx{{\mathbf{x}}}
\def\rvy{{\mathbf{y}}}
\def\rvz{{\mathbf{z}}}

% Elements of random vectors

% Random matrices

\def\rmE{{\mathbf{E}}}

% Elements of random matrices

% Vectors

% Elements of vectors

% Matrix

\def\mI{{\bm{I}}}

% Tensor
\DeclareMathAlphabet{\mathsfit}{\encodingdefault}{\sfdefault}{m}{sl}
\SetMathAlphabet{\mathsfit}{bold}{\encodingdefault}{\sfdefault}{bx}{n}

% Graph

\def\gB{{\mathcal{B}}}
\def\gC{{\mathcal{C}}}
\def\gD{{\mathcal{D}}}

\def\gN{{\mathcal{N}}}

\def\gU{{\mathcal{U}}}

% Sets

% Don't use a set called E, because this would be the same as our symbol
% for expectation.

\def\sR{{\mathbb{R}}}

% Entries of a matrix

% entries of a tensor
% Same font as tensor, without \bm wrapper

% The true underlying data generating distribution

% The empirical distribution defined by the training set

% The model distribution

% Stochastic autoencoder distributions

 % Laplace distribution

\newcommand{\E}{\mathbb{E}}

\newcommand{\R}{\mathbb{R}}

% Wolfram Mathworld says $L^2$ is for function spaces and $\ell^2$ is for vectors
% But then they seem to use $L^2$ for vectors throughout the site, and so does
% wikipedia.

 % See usage in notation.tex. Chosen to match Daphne's book.

% % Recommended, but optional, packages for figures and better typesetting:
\usepackage{graphicx}
\usepackage{subfigure}
\usepackage{bigstrut, tabularx, multirow, makecell, diagbox}
\usepackage{graphicx}
\usepackage{caption}
\usepackage{zref}
\usepackage{xcolor}
\usepackage{booktabs} % for professional tables
\usepackage{amssymb}
\usepackage{enumerate}
\usepackage{natbib}
\usepackage{pifont}
\usepackage{tabulary}
\usepackage[colorlinks=true,citecolor=brown,urlcolor=gray]{hyperref}
\usepackage{url}            % simple URL typesetting
\usepackage{amsfonts}       % blackboard math symbols
\usepackage{nicefrac}       % compact symbols for 1/2, etc.
\usepackage{microtype}      % microtypography
\usepackage{booktabs}
\usepackage{wrapfig}
\usepackage{bbm, comment}
\usepackage{color}
\usepackage{float}
\usepackage{thm-restate}
\usepackage{algorithm}
\usepackage{algorithmic}
\usepackage[utf8]{inputenc}

\newcommand{\yx}[1]{{\color{cyan!50!pink} [Yilun: #1]}}

\newcommand{\mat}[1]{\mathbf{#1}}

\newcommand{\rmin}{r_{\textrm{min}}}
\newcommand{\rmax}{r_{\textrm{max}}}
\newcommand{\tx}{\tilde{\rvx}}
\newcommand{\ty}{\tilde{\rvy}}
\newenvironment{proofs}{%
  \proof}{\endproof}
%%%%% NEW MATH DEFINITIONS %%%%%

\usepackage{amsmath,amsfonts,bm,amssymb,mathtools,amsthm}
\usepackage{color,xcolor}
\usepackage{amsmath}
\usepackage{xspace} 
\def\onedot{$\mathsurround0pt\ldotp$}
\def\eg{\emph{e.g}\onedot, }

\def\ie{\emph{i.e}\onedot, }
% \newtheorem{theorem}{Theorem}
% \newtheorem{observation}{Observation}
% \newtheorem{definition}{Definition}
% \newtheorem{assumption}{Assumption}
% \newtheorem{proposition}{Proposition}
% \newtheorem{corollary}{Corollary}
% \newtheorem{lemma}{Lemma}

% Mark sections of captions for referring to divisions of figures

% \newcommand{\figleft}{{\em (Left)}}
% \newcommand{\figcenter}{{\em (Center)}}
% \newcommand{\figright}{{\em (Right)}}
% \newcommand{\figtop}{{\em (Top)}}
% \newcommand{\figbottom}{{\em (Bottom)}}
% \newcommand{\captiona}{{\em (a)}}
% \newcommand{\captionb}{{\em (b)}}
% \newcommand{\captionc}{{\em (c)}}
% \newcommand{\captiond}{{\em (d)}}

% Highlight a newly defined term
% \newcommand{\newterm}[1]{{\bf #1}}

% Figure reference, lower-case.

% Figure reference, capital. For start of sentence
\def\Figref#1{Fig.~\ref{#1}}

% Section reference, lower-case.

% Section reference, capital.

% Reference to two sections.

% Reference to three sections.

% Reference to an equation, lower-case.
\def\eqref#1{equation~\ref{#1}}
% Reference to an equation, upper case
\def\Eqref#1{Eq.~(\ref{#1})}
% A raw reference to an equation---avoid using if possible

% Reference to a chapter, lower-case.

% Reference to an equation, upper case.

% Reference to a range of chapters

% Reference to an algorithm, lower-case.

% Reference to an algorithm, upper case.

% Reference to a part, lower case

% Reference to a part, upper case

\def\1{\bm{1}}

% \newcommand{\train}{\mathcal{D}}
% \newcommand{\valid}{\mathcal{D_{\mathrm{valid}}}}
% \newcommand{\test}{\mathcal{D_{\mathrm{test}}}}

% Random variables

% rm is already a command, just don't name any random variables m

% Random vectors

\def\rvd{{\mathbf{d}}}

\def\rvu{{\mathbf{i}}}

\def\rvn{{\mathbf{n}}}

\def\rvu{{\mathbf{u}}}
\def\rvv{{\mathbf{v}}}
\def\rvw{{\mathbf{w}}}
\def\rvx{{\mathbf{x}}}
\def\rvy{{\mathbf{y}}}
\def\rvz{{\mathbf{z}}}

% Elements of random vectors

% Random matrices

\def\rmE{{\mathbf{E}}}

% Elements of random matrices

% Vectors

% Elements of vectors

% Matrix

\def\mI{{\bm{I}}}

% Tensor
\DeclareMathAlphabet{\mathsfit}{\encodingdefault}{\sfdefault}{m}{sl}
\SetMathAlphabet{\mathsfit}{bold}{\encodingdefault}{\sfdefault}{bx}{n}
% \newcommand{\tens}[1]{\bm{\mathsfit{#1}}}

% Graph

\def\gB{{\mathcal{B}}}
\def\gC{{\mathcal{C}}}
\def\gD{{\mathcal{D}}}

\def\gN{{\mathcal{N}}}

\def\gU{{\mathcal{U}}}

% Sets

% Don't use a set called E, because this would be the same as our symbol
% for expectation.

\def\sR{{\mathbb{R}}}

% Entries of a matrix

% entries of a tensor
% Same font as tensor, without \bm wrapper
% \newcommand{\etens}[1]{\mathsfit{#1}}

\newcommand\numberthis{\addtocounter{equation}{1}\tag{\theequation}}

\usepackage{varwidth}
\newsavebox\tmpbox
\usepackage[export]{adjustbox}
\usepackage{caption}

\usepackage{spacingtricks}

\newcommand{\appropto}{\mathrel{\vcenter{
  \offinterlineskip\halign{\hfil$##$\cr
    \propto\cr\noalign{\kern2pt}\sim\cr\noalign{\kern-2pt}}}}}

% Attempt to make hyperref and algorithmic work together better:

% \newcommand{\theHalgorithm}{\arabic{algorithm}}

% \usepackage{etoolbox}
% \BeforeBeginEnvironment{wrapfigure}{\setlength{\intextsep}{0pt}}
% \setlength\intextsep{0pt}
% Use the following line for the initial blind version submitted for review:
% \usepackage[compact]{titlesec}
% \titlespacing{\section}{0pt}{*0}{*0}
% \titlespacing{\subsection}{0.pt}{*0}{*0}
% \titlespacing{\subsubsection}{0pt}{*0}{*0}
%% MARGIN HACKS 
% \addtolength\textwidth{0.25in}
%\addtolength\textheight{0.25in}
%\addtolength\textfloatsep{-8pt}
\def\setstretch#1{\renewcommand{\baselinestretch}{#1}}
\setstretch{0.965}

\usepackage[preprint]{icml2023}

% For theorems and such
\usepackage{amsmath}
\usepackage{amssymb}
\usepackage{mathtools}
\usepackage{amsthm}

% if you use cleveref..
\usepackage[capitalize,noabbrev]{cleveref}

%%%%%%%%%%%%%%%%%%%%%%%%%%%%%%%%
% THEOREMS
%%%%%%%%%%%%%%%%%%%%%%%%%%%%%%%%
\theoremstyle{plain}

\theoremstyle{definition}

\theoremstyle{remark}

\icmltitlerunning{PFGM++: Unlocking the Potential of Physics-Inspired Generative Models}

\begin{document}

\twocolumn[
\icmltitle{PFGM++: Unlocking the Potential of Physics-Inspired Generative Models}

% It is OKAY to include author information, even for blind
% submissions: the style file will automatically remove it for you
% unless you've provided the [accepted] option to the icml2023
% package.

% List of affiliations: The first argument should be a (short)
% identifier you will use later to specify author affiliations
% Academic affiliations should list Department, University, City, Region, Country
% Industry affiliations should list Company, City, Region, Country

% You can specify symbols, otherwise they are numbered in order.
% Ideally, you should not use this facility. Affiliations will be numbered
% in order of appearance and this is the preferred way.
% \icmlsetsymbol{equal}{*}

\begin{icmlauthorlist}
\icmlauthor{Yilun Xu}{mit}
\icmlauthor{Ziming Liu}{mit}
\icmlauthor{Yonglong Tian}{mit}
\icmlauthor{Shangyuan Tong}{mit}
\icmlauthor{Max Tegmark}{mit}
\icmlauthor{Tommi Jaakkola}{mit}

%\icmlauthor{}{sch}
%\icmlauthor{}{sch}
\end{icmlauthorlist}

\icmlaffiliation{mit}{Massachusetts Institute of Technology, MIT, Cambridge, MA, USA}
%\icmlaffiliation{comp}{Google}

\icmlcorrespondingauthor{Yilun Xu}{ylxu@mit.edu}

% You may provide any keywords that you
% find helpful for describing your paper; these are used to populate
% the "keywords" metadata in the PDF but will not be shown in the document
\icmlkeywords{Machine Learning, Generative Models, ICML}

\vskip 0.3in
]

% this must go after the closing bracket ] following \twocolumn[ ...

% This command actually creates the footnote in the first column
% listing the affiliations and the copyright notice.
% The command takes one argument, which is text to display at the start of the footnote.
% The \icmlEqualContribution command is standard text for equal contribution.
% Remove it (just {}) if you do not need this facility.

\printAffiliationsAndNotice{}  % leave blank if no need to mention equal contribution
%\printAffiliationsAndNotice{\icmlEqualContribution} % otherwise use the standard text.

\begin{abstract}
We introduce a new family of physics-inspired generative models termed \textit{PFGM++} that unifies diffusion models and Poisson Flow Generative Models (PFGM). These models realize generative trajectories for $N$ dimensional data by embedding paths in $N{+}D$ dimensional space while still controlling the progression with a simple scalar norm of the $D$ additional variables. The new models reduce to PFGM when $D{=}1$ and to diffusion models when $D{\to}\infty$. The flexibility of choosing $D$ allows us to trade off robustness against rigidity as increasing $D$ results in more concentrated coupling between the data and the additional variable norms. We dispense with the biased large batch field targets used in PFGM and instead provide an unbiased perturbation-based objective similar to diffusion models. To explore different choices of $D$, we provide a direct alignment method for transferring well-tuned hyperparameters from diffusion models~($D{\to} \infty$) to any finite $D$ values. Our experiments show that models with finite $D$ can be superior to previous state-of-the-art diffusion models on CIFAR-10/FFHQ $64{\times}64$ datasets, with FID scores of $1.91/2.43$ when $D{=}2048/128$. In class-conditional setting, $D{=}2048$ yields current state-of-the-art FID of $1.74$ on CIFAR-10. In addition, we demonstrate that models with smaller $D$ exhibit improved robustness against modeling errors. Code is available at \url{https://github.com/Newbeeer/pfgmpp}
\end{abstract}

\iffalse
\begin{abstract}
We present an extension of the Poisson Flow Generative Models (PFGM) framework, termed \textit{PFGM++}. Rather than augmenting the data with one additional dimension, our approach augments the $N$-dimensional data with $D$-dimensional variables. We demonstrate that this new framework remains a valid generative model and extends the domain of $D$ to positive real numbers. Remarkably, diffusion models and PFGM are special cases of PFGM++ as $D{\to} \infty$ and $D{=}1$, respectively. The flexibility of the choice of $D$ allows for controlling the trade-off between robustness and input regularity of the generative models, as increasing $D$ results in more concentrated distributions of noisy training samples. Additionally, to bypass the large batch training in PFGM, we propose a perturbation-based algorithm that enables efficient training, unbiased optimal solution, and flexible conditional generation. We further present an alignment method for directly transferring well-tuned hyperparameters from diffusion models~($D{\to} \infty$) to finite values of $D$. Through experiments, we show that models with finite $D$ beat the previous state-of-the-art diffusion models~($D{\to} \infty$ under the PFGM++ framework). Specifically, \yx{todo: some concrete results}. In addition, we demonstrate that models with smaller $D$ exhibit improved model robustness against varied kinds of errors.
\end{abstract}
\fi

\section{Introduction}

Physics continues to inspire new deep generative models such as \textit{diffusion models}~\cite{sohl2015deep, Ho2020DenoisingDP, Song2021ScoreBasedGM, Karras2022ElucidatingTD}
based on thermodynamics~\cite{Jarzynski1997EquilibriumFD} or \textit{Poisson flow generative models}~(PFGM)~\cite{Xu2022PoissonFG} derived from  electrostatics~\cite{griffiths2005introduction}. The associated generative processes involve iteratively de-noising samples by following physically meaningful trajectories. Diffusion models learn a noise-level dependent score function so as to reverse the effects of forward diffusion, progressively reducing the noise level $\sigma$ along the generation trajectory.
PFGMs in turn augment $N$-dimensional data points with an extra dimension and evolve samples drawn from a uniform distribution over a large $N{+}1$-dimensional hemisphere back to the $z{=}0$ hyperplane where the clean data~(as charges) reside by tracing learned electric field lines. Diffusion models in particular have been demonstrated across image~\cite{Song2021ScoreBasedGM, Nichol2022GLIDETP, Ramesh2022HierarchicalTI}, 3D~\cite{Zeng2022LIONLP, Poole2022DreamFusionTU}, audio~\cite{Kong2020DiffWaveAV, Chen2020WaveGradEG} and biological data~\cite{Shi2021LearningGF, Watson2022BroadlyAA} generation, and have more stable training objectives compared to GANs~\cite{Arjovsky2017WassersteinGA, Brock2019LargeSG}. More recent PFGM \cite{Xu2022PoissonFG} rival diffusion models on image generation.  

In this paper, we introduce a broader family of physics-inspired generative models that we call \textbf{\textit{PFGM++}}. These models extend the electrostatic view into higher dimensions through multi-dimensional $\rvz \in \R^D$ augmentations. 
%Due to symmetry the resulting generative steps can be still controlled with just a single scalar norm of these additional variables. 
When interpreting $N$-dimensional data points $\rvx$ as positive charges, the electric field lines define a surjection from a uniform distribution on an infinite $N{+}D$-dimensional hemisphere to the data distribution located on the $\rvz{=}\bf{0}$ hyperplane. We can therefore draw generative samples by following the electric field lines, evolving points from the hemisphere back to the $\rvz{=}\bf{0}$ hyperplane. Since the electric field has rotational symmetry on the surface of the $D$-dim cylinder $\|\rvz\|_2 = r$ for any $r>0$, we can track the sampling trajectory with a simple scalar $r$ instead of every component of $\rvz$. The use of symmetry turns the aforementioned surjection into a bijection between an easy-to-sample prior on a large $r=\rmax$ hyper-cylinder to the data distribution. The symmetry reduction also permits $D$ to take any positive values, including reals. We derive a new perturbation-based training objective akin to denoising score matching~\cite{Vincent2011ACB} that avoids the need to use large batches to construct electric field line targets in PFGM. The perturbation-based objective is more efficient, unbiased, and compatible with paired sample training of conditional generation models.
\begin{figure*}[ht]
\centering
\includegraphics[width=0.74\textwidth, trim = 7.4cm 9.4cm 7.4cm 9.4cm, clip]{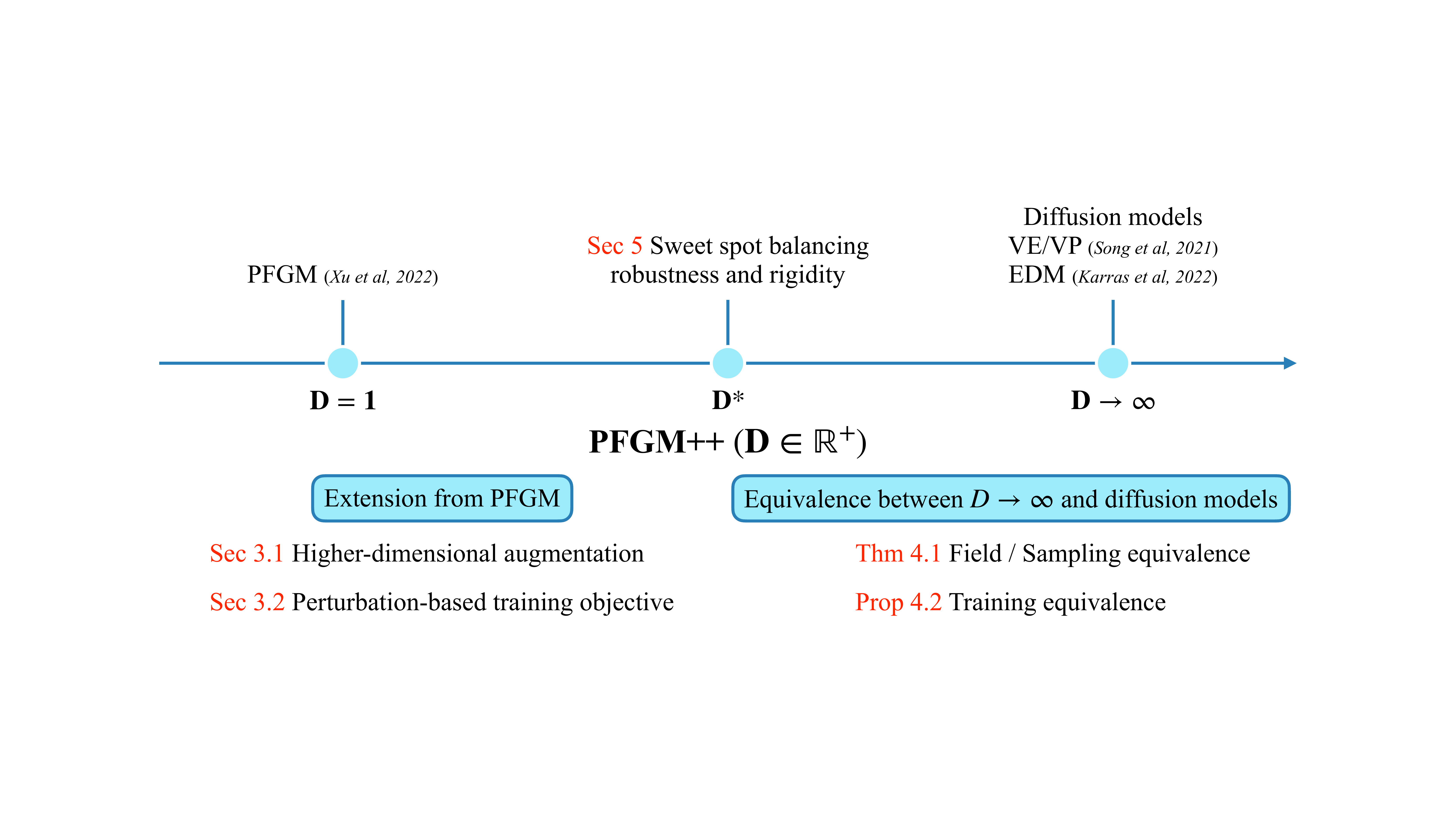}
    \caption{Overview of paper contributions and structure. PFGM++ unify PFGM and diffusion models, as well as the potential to combine their strengths (robustness and rigidity).}
    \label{fig:pfgmpp}
    \vspace{-5pt}
\end{figure*}

The models in the new family differ based on their augmentation dimension $D$ which is now a hyper-parameter. By setting $D{=}1$ we obtain PFGM while $D{\to} \infty$ leads to diffusion models. We establish $D{\to} \infty$ equivalence with popular diffusion models~\cite{Song2021ScoreBasedGM, Karras2022ElucidatingTD} both in terms of their training objectives as well as their inferential processes. We demonstrate that the hyper-parameter $D$ controls the balance between robustness and rigidity: using a small $D$ widens the distribution of noisy training sample norms in comparison to the norm of the augmented variables. However, small $D$ also leads to a heavy-tailed sampling problem at any fixed augmentation norm making learning more challenging. Neither $D{=}1$ nor $D{\to}\infty$ offers an ideal balance between being insensitive to missteps (robustness) and allowing effective learning (rigidity). Instead, we adjust $D$ in response to different architectures and tasks. To facilitate quickly finding the best $D$ we provide an alignment method to directly transfer other hyperparameters across different choices of $D$. 

Experimentally, we show that some models with finite $D$ outperform the previous state-of-the-art diffusion models~($D{\to} \infty$), \ie EDM~\cite{Karras2022ElucidatingTD}, on image generation tasks. In particular, intermediate $D{=}2048/128$ achieve the best performance among other choices of $D$ ranging from $64$ to $\infty$, with min FID scores of $1.91/2.43$ on CIFAR-10 and FFHQ $64{\times} 64$ datasets in unconditional generation, using $35/79$ NFE. In class-conditional generation, $D{=}2048$ achieves new state-of-the-art FID of $1.74$ on CIFAR-10. We further verify that in general, decreasing $D$ leads to improved robustness against a variety of sources of errors, \ie controlled noise injection, large sampling step sizes and post-training quantization.

Our contributions are summarized as follows: \textbf{(1)} We propose PFGM++ as a new family of generative models based on expanding augmented dimensions and show that symmetries involved enable us to define generative paths simply based on the scalar norm of the augmented variables~(Sec~\ref{sec:pfgmpp-intro}); \textbf{(2)} We propose a perturbation-based objective to dispense with any biased large batch derived electric field targets, allowing unbiased training~(Sec~\ref{sec:perturbed-kernel}); \textbf{(3)} We prove that the score field and the training objective of diffusion models arise in the limit $D{\to} \infty$~(Sec~\ref{sec:diffusion}); \textbf{(4)} We demonstrate the trade-off between robustness and rigidity by varying $D$~(Sec~\ref{sec:benefits}). \textit{We also detail the hyperparameter transfer procedures from EDM/DDPM~($D\to \infty$) to finite $D$s in Appendix~\ref{app:transfer-diff}}; \textbf{(5)} We empirically show that models with finite $D$ achieve superior performance to diffusion models while exhibiting improved robustness~(Sec~\ref{sec:experiment}).

\section{Background and Related Works}

\textbf{Diffusion Model}
 Diffusion models \citep{sohl2015deep, Ho2020DenoisingDP, Song2021ScoreBasedGM, Karras2022ElucidatingTD} are often presented as a pair of two processes. A fixed forward process governs the training of the model, which learns to denoise data of different noise levels. % $\sigma_i$ ($i$ can either be discrete indices \citep{Ho2020DenoisingDP}, or a continuous variable \citep{Song2021ScoreBasedGM})
A corresponding backward process involves utilizing the trained model iteratively to denoise the samples starting from a fully noisy prior distribution. %$\gN(\mathbf{0}, \sigma_\text{max}\mI)$  %Different from the original formulation of diffusion models, many popular methods do not inject noise into samples at each step, and they  %\citet{Song2021ScoreBasedGM} formulate the problem from the perspective of stochastic differential equations, and propose methods based on probability flow ordinary differential equation (ODE) as a family of sampling procedures of diffusion models.
\citet{Karras2022ElucidatingTD} propose a unifying framework for popular diffusion models~(VE/VP~\cite{Song2021ScoreBasedGM} and EDM~\cite{Karras2022ElucidatingTD}), and their sampling process can be understood as traveling in time with a probability flow ordinary differential equation (ODE):
\begin{align*}
    % \mathrm{d}\rvx = \left[\frac{\dot{s}(t)}{s(t)}\rvx-s(t)^2\dot{\sigma}(t)\sigma(t)\nabla_\rvx \log p_{\sigma(t)}\left(\frac{\rvx}{s(t)}\right)\right]\mathrm{d}t,
    % \mathrm{d}\rvx = -\dot{\sigma}(t)\sigma(t)\nabla_\rvx \log p_{\sigma(t)}(\rvx)\mathrm{d}t
        \mathrm{d}\rvx = -\dot{\sigma}(t)\sigma(t)\nabla_\rvx \log p_{\sigma(t)}(\rvx)\mathrm{d}t
\end{align*}
where $\sigma(t)$ is a predefined noise schedule w.r.t. time, and $\nabla_\rvx\log p_{\sigma(t)}(\rvx)$ is the score of noise-injected data distribution at time $t$. A neural network $f_\theta(\rvx,\sigma)$ is trained to learn the score $\nabla_\rvx\log p_{\sigma(t)}(\rvx)$ by minimizing a weighted sum of the denoising score-matching objectives~\citep{Vincent2011ACB}:
\begin{multline*}
    \E_{\sigma\sim p(\sigma)}\lambda(\sigma)\E_{\rvy\sim p(\rvy)}\E_{\rvx\sim p_\sigma(\rvx|\rvy)}\\\left[\|f_\theta(\rvx,\sigma)-\nabla_\rvx\log p_\sigma(\rvx|\rvy)\|_2^2\right]\numberthis
    \label{eq:dm_obj}
\end{multline*}
where $p(\sigma)$ defines a training distribution of noise levels, $\lambda(\sigma)$ is a weighting function, $p(\rvy)$ is the data distribution, and $p_\sigma(\rvx|\rvy)=\gN(\mathbf{0}, \sigma^2\mI)$ defines a Gaussian perturbation kernel which samples a noisy version $\rvx$ of the clean data $\rvy$. Please refer to Table~1 in \citet{Karras2022ElucidatingTD} for specific instantiations of different diffusion models.

%Building upon the ODE formulation, \citet{Karras2022ElucidatingTD} further investigate different components of diffusion models. In their framework, the forward and backward process can be described as 

%During training, diffusion models \citep{sohl2015deep, song2020denoising, Ho2020DenoisingDP} learn to denoise data of different noise levels. During sampling, the trained model is used iteratively to denoise samples starting from Gaussian noise.

%are often presented as a pair of two processes. A fixed forward process governs the training of the model, which learns to match the score of a series of marginal distributions $\nabla\rvx\log p_\sigma(\rvx)$ generated by adding Gaussian noises with different variances $\sigma_i^2$ to the data $p(\rvx)$. At the biggest noise level $\sigma_\text{max}$, the signal compared to noise is so weak that the perturbed distribution $p_{\sigma_\text{max}} \approx \gN(\mathbf{0}, \sigma_\text{max}\mI)$. Thus, a corresponding backward process involves repeatedly utilizing the trained model to denoise the samples starting from a fixed prior distribution $\gN(\mathbf{0}, \sigma_\text{max}\mI)$. %The two processes can also be formulated from the perspective of stochastic differential equations \citep{Song2021ScoreBasedGM}, where the noise levels $\sigma$ are coupled in time $t$

%Diffusion models are trained to match the score of a series of marginal distributions generated by perturbing the data with Gaussian noises with different variances.

\textbf{PFGM}
Inspired by the theory of electrostatics~\cite{griffiths2005introduction}, \citet{Xu2022PoissonFG} propose Poisson flow generative models (PFGM), which interpret the $N$-dimensional data $\rvx \in \R^N$ as electric charges in an $N{+}1$-dimensional space augmented with an extra dimension $z$: $\tilde{\rvx} = (\rvx, z)\in\R^{N+1}$. In particular, the training data is placed on the $z{=}0$ hyperplane, and the electric field lines emitted by the charges define a bijection between the data distribution and a uniform distribution on the infinite hemisphere of the augmented space~\footnote{In practice, the  hemisphere is projected to a hyperplane $z{=}z_{\rm max}$, so that all samples have the initial $z$.}. To perform generative modeling, PFGM learn the following high-dimensional electric field, which is the derivative of the electric potential in a Poisson equation:
\begin{align*}
    \rmE(\tilde{\rvx}) = \frac{1}{S_N(1)}\int\frac{\tilde{\rvx} - \tilde{\rvy}}{\|\tilde{\rvx} - \tilde{\rvy}\|^{N+1}}{p}({\rvy})\mathrm{d}\rvy \numberthis \label{eq:poisson-field}
\end{align*}
where $S_{N}(1)$ is the surface area of a unit $N$-sphere (a geometric constant), and $p(\rvy)$ is the data distribution. Samples are then generated by following the electric field lines, which are described by the ODE $\mathrm{d}{\tx} = \rmE(\tilde{\rvx})\mathrm{d}t$. In practice, %with $n$ i.i.d. samples representing the clean data distribution, 
the network is trained to estimate a normalized version of the following empirical electric field:
$
     \hat{\rmE}(\tilde{\rvx}) = c(\tilde{\rvx})\sum_{i=1}^n\frac{\tilde{\rvx}-\tilde{\rvy}_i}{\|\tilde{\rvx}-\tilde{\rvy}_i\|^{N+1}}
$,
where $c(\tilde{\rvx}) = 1/\sum_{i=1}^n\frac{1}{\|\tilde{\rvx}-\tilde{\rvy}_i\|^{N+1}}$
and $\{\tilde{\rvy}_i\}_{i=1}^n \sim \tilde{p}(\tilde{\rvy})$ is a large batch used to approximate the integral in \Eqref{eq:poisson-field}. The training objective is minimizing the $\ell_2$-loss between the neural model prediction $f_\theta(\tilde{\rvx})$ and the normalized field $\rmE(\tilde{\rvx})/\|\rmE(\tilde{\rvx})\|$ at various positions of $\tilde{\rvx}$. These positions are heuristically designed to carefully cover the regions that the sampling trajectories pass through.

\textbf{Phases of Score Field}
\citet{Xu2023StableTF} show that the score field in the forward process of diffusion models can be decomposed into three phases. When moving from the near field~(Phase 1) to the far field~(Phase 3), the perturbed data get influenced by more modes in the data distribution. They show that the posterior $p_{0|\sigma}(\rvy|\rvx) \propto p_{\sigma}(\rvx|\rvy)p(\rvy)$ serves as a phase indicator, as it gradually evolves from a delta distribution to uniform distribution when shifting from Phase 1 to Phase 3. The relevant concepts of phases have also been explored in \citet{Karras2022ElucidatingTD,choi2022perception,xiao2022tackling}. Similar to the PFGM training objective, \citet{Xu2023StableTF} approximates the score field by large batches to reduce the variance of training targets in Phase 2, where multiple data points exert comparable but distinct influences on the scores. These observations inspire us to align the phases of different $D$s in Sec~\ref{sec:diffusion}.

% Rather than the single sample training target $\nabla_\rvx\log p_\sigma(\rvx|\rvy)$ in \Eqref{eq:dm_obj}, the model is trained to minimize:\yx{cut}
% \begin{multline*}
%     \E_{\sigma\sim p(\sigma)}\lambda(\sigma)\E_{\{\rvy_i\}_{i=1}^n\sim p^n(\rvy)}\E_{\rvx\sim p_\sigma(\rvx|\rvy_1)}\bigg[\Big\lVert f_\theta(\rvx,\sigma)\\-\sum_{k=1}^n\frac{p_\sigma(\rvx|\rvy_k)}{\sum_{j=1}^n p_\sigma(\rvx|\rvy_j)}\nabla_\rvx\log p_\sigma(\rvx|\rvy_k)\Big\rVert_2^2\bigg].
% \end{multline*}
% This modified training objective ensures that the model learns the ground-truth score asymptotically and significantly reduces the variances of training targets, which makes the training of diffusion models more efficient and accurate. 

% In addition, \citet{anonymous2023stable} identify the three successive phases in the diffusion process relating to the behavior of the score field. The variance of the target $\nabla_\rvx\log p_\sigma(\rvx|\rvy)$ in \Eqref{eq:dm_obj} peaks in the intermediate Phase 2 where multiple sources $\rvy$ have comparable influences on perturbation $\rvx$. %, and STF effectively decreases the variance by eliminating the ambiguity of the training target.
% They also show that the posterior $p(\rvy|\rvx) \propto p_{\sigma}(\rvx|\rvy)p(\rvy)$ serves as a phase indicator, as it gradually evolves from a uniform distribution to delta distribution when shifting from Phase 3 to Phase 1. The relevant concepts of phases are also been explord in \cite{Karras2022ElucidatingTD,choi2022perception}.

\section{{PFGM++: A Novel Generative Framework}}

In this section, we present our new family of generative models PFGM++, generalizing PFGM \citep{Xu2022PoissonFG} in terms of the augmented space dimensionality. We show that the electric fields in $N{+}D$-dimensional space with $D \in  \mathbb{Z}^+$ still constitute a valid generative model~(Sec~\ref{sec:pfgmpp-intro}). Furthermore, we show that the additional $D$-dimensional augmented variable can be condensed into their scalar norm due to the inherent symmetry of the electric field. To improve the training process, we propose an efficient perturbation-based objective for training PFGM++ (Sec~\ref{sec:perturbed-kernel}) without relying on the large batch approximation in the original PFGM.
\subsection{Electric field in $\bm{N{+}D}$-dimensional space}
\label{sec:pfgmpp-intro}

While PFGM~\cite{Xu2022PoissonFG} consider the electric field in a $N{+}1$-dimensional augmented space, we augment the data $\rvx$ with $D$-dimensional variables $\rvz=(z_1, \dots, z_D)$, \ie $\tilde{\rvx}=(\rvx, \rvz)$ and  $D\in \mathbb{Z}^+$. Similar to the $N{+}1$-dimensional electric field~(\Eqref{eq:poisson-field}), the electric field at the augmented data $\tilde{\mat{x}}=(\rvx, \rvz) \in \sR^{N+D}$ is:
\begin{align*}
     \mat{E}(\tilde{\mat{x}}) = \frac{1}{S_{N+D-1}(1)}\int \frac{\tilde{\mat{x}}-\tilde{\rvy}}{\|\tilde{\mat{x}}-\tilde{\rvy}\|^{N+D}}{p}({\rvy}) d {\rvy}\numberthis \label{eq:poisson-field-D}
\end{align*}
Analogous to the theoretical results presented in PFGM, with the electric field as the drift term, the ODE $d\tx {=} \mat{E}(\tilde{\mat{x}})\mathrm{d}t$ defines a surjection from a uniform distribution on an infinite \textit{$N{+}D$-dim} hemisphere and the data distribution on the \textit{$N$-dim} $\rvz{=}\bf 0$ hyperplane. However, the mapping has $SO(D)$ symmetry on the surface of $D$-dim cylinder $\sum_{i=1}^D z_i^2=r^2$ for any positive $r$.
% The direction of $\mat{z}$, $\hat{\mat{n}}\equiv\mat{z}/||\mat{z}||_2$, remains unchanged along any electric field line. 
We provide an illustrative example at the bottom of \Figref{fig:D_illustration}~($D{=}2,N{=}1$), where the electric flux emitted from a line segment~(red) has rotational symmetry through the ring area (blue) on the $z_1^2+z_2^2=r^2$ cylinder. Hence, instead of modeling the individual behavior of each $z_i$, it suffices to track the norm of augmented variables --- $r(\tilde{\rvx})=\|\rvz\|_2$ --- due to symmetry. %Below we show that the ODE $\mathrm{d}\rvx/\mathrm{d}r$ can capture the dynamics of the sample $\rvx$. 
Specifically, note that $\mathrm{d}z_i = \mat{E}(\tilde{\mat{x}})_{z_i}\mathrm{d}t$, and the time derivative of $r$ is 
\begin{align*}
    \frac{\mathrm{d}r}{\mathrm{d}t} &= \sum_{i=1}^D \frac{z_i}{r}\frac{\mathrm{d}z_i}{\mathrm{d}t}
    {=} \int \frac{\sum_{i=1}^Dz_i^2}{S_{N{+}D{-}1}(1)r\|\tilde{\mat{x}}-\tilde{\rvy}\|^{N{+}D}}{p}({\rvy}) \mathrm{d} {\rvy}\\
    &=\frac{1}{S_{N{+}D{-}1}(1)}\int \frac{r}{\|\tilde{\mat{x}}-\tilde{\rvy}\|^{N{+}D}}{p}({\rvy}) \mathrm{d} {\rvy}
\end{align*}
Henceforth we replace the notation for augmented data with $\tilde{\rvx}=(\rvx, r)$ for simplicity. After the symmetry reduction, the field to be modeled has a similar form as \Eqref{eq:poisson-field-D} except that the last $D$ sub-components $\{\mat{E}(\tilde{\mat{x}})_{z_i}\}_{i=1}^D$ are condensed into a scalar $E(\tilde{\mat{x}})_{r}=\frac{1}{S_{N+D-1}(1)}\int \frac{r}{\|\tilde{\mat{x}}-\tilde{\rvy}\|^{N+D}}{p}({\rvy}) \mathrm{d} {\rvy}$. Therefore, we can use the physically meaningful $r$ as the anchor variable in the ODE $\mathrm{d}\rvx/\mathrm{d}r$ by change-of-variable:
\begin{align*}
    \frac{\mathrm{d}\rvx}{\mathrm{d}r} = \frac{\mathrm{d}\rvx}{\mathrm{d}t}\frac{\mathrm{d}t}{\mathrm{d}r} = \mat{E}(\tilde{\mat{x}})_\rvx \cdot E(\tilde{\mat{x}})_{r}^{-1} = \frac{\mat{E}(\tilde{\mat{x}})_\rvx}{E(\tilde{\mat{x}})_{r}}\numberthis \label{eq:ode-pfgmpp}
\end{align*}
%The above ODE defines a bijection between an easy-to-sample prior distribution on the $r{=}\rmax$ hyper-cylinder and the data distribution on $r{=}0$~(\ie $\rvz{=}\bf 0$) hyperplane. Recall that in the original $N{+}D$-dim augmented space, the electric field lines define a surjection from $N{+}D$-dim hemisphere to the $N$-dim data distribution, while the mapping has $SO(D)$ symmetry on the surface of $D$-dim hyper-cylinder $\sum_{i=1}^D z_i^2 =r^2$.
%Such symmetry allows us to condense the $D$ augmented variables into their collective norm $r$, which essentially turns the surjection into a bijection. 
Indeed, the ODE $\mathrm{d}\rvx/\mathrm{d}r$ turns the aforementioned surjection into a bijection between an easy-to-sample prior distribution on the $r{=}\rmax$ hyper-cylinder~\footnote{The hyper-cylinder here is consistent with the hemisphere in PFGM~\cite{Xu2022PoissonFG}, because hyper-cylinders degrade to hyper-planes for $D=1$, which are in turn isomorphic to hemispheres.} and the data distribution on $r{=}0$~(\ie $\rvz{=}\bf 0$) hyperplane. The following theorem states the observation formally:
\begin{figure}[tbp]
\centering
\includegraphics[width=0.4\textwidth]{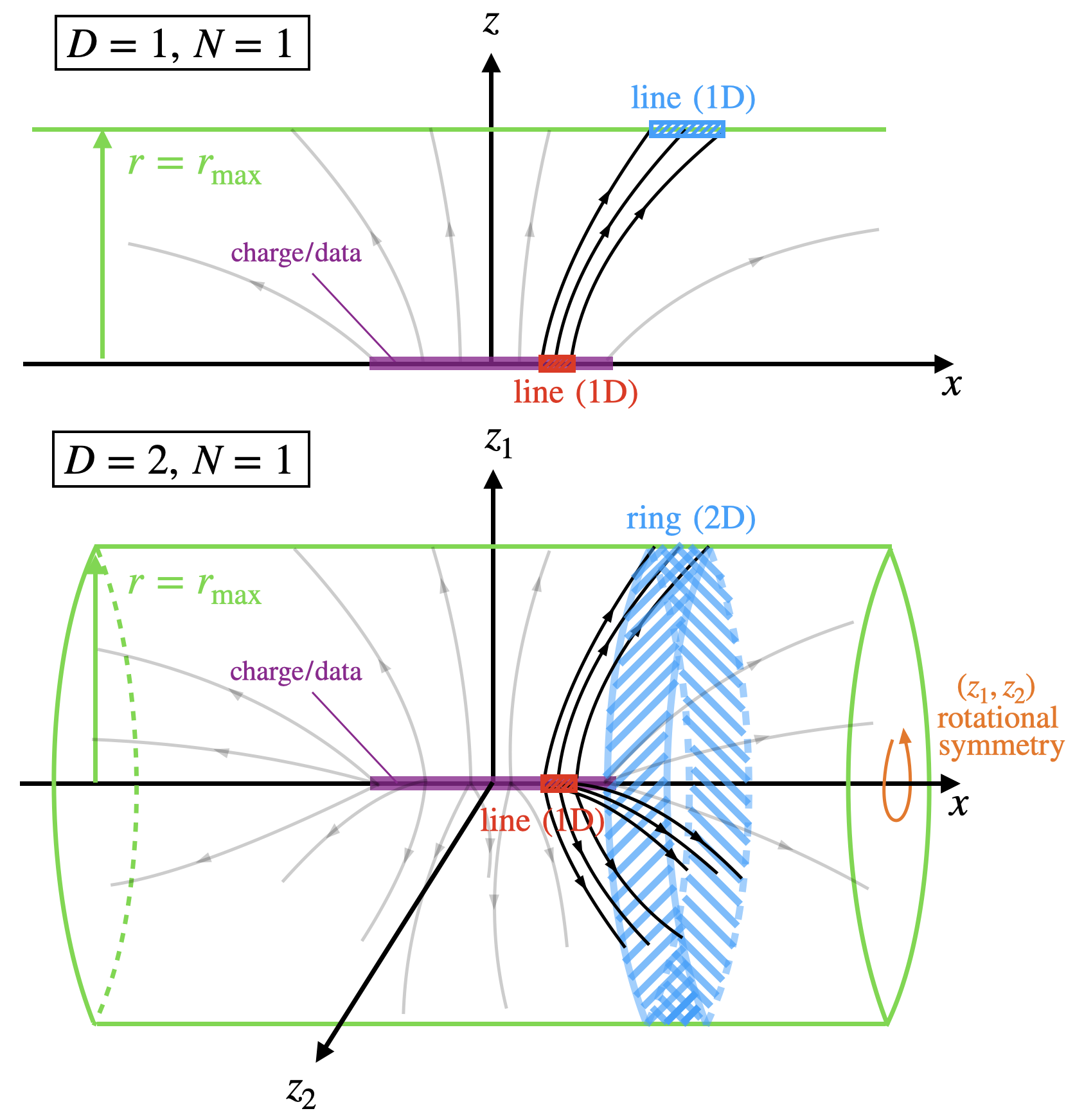}
   \vspace{-5pt}
    \caption{The augmented dimension $D$ affects electric field lines (\textcolor{gray}{gray}), which connect charge/data on a line~(\textcolor{black!30!purple}{purple}) to latent space~(\textcolor{black!30!green}{green}). When $D=1$ (top) or $D=2$ (bottom), electric field lines map the same red line segment to a blue line segment or onto a blue ring, respectively. The mapping defined by electric lines has $SO(2)$ symmetry on the surface of $z_1^2+z_2^2=r^2$ cylinder.}
    \label{fig:D_illustration}
    \vspace{-8pt}
\end{figure}

\begin{restatable}{theorem}{thmbij}
\label{thm:bijection}
Assume the data distribution $p\in \gC^1$ and $p$ has compact support. As $\rmax{\to} \infty$, for $D \in \R^+$, the ODE ${\mathrm{d}\rvx}/{\mathrm{d}r} = \mat{E}(\tilde{\mat{x}})_\rvx  /E(\tilde{\mat{x}})_{r}$ defines a bijection between $\lim_{\rmax\to \infty}p_{\rmax}(\rvx) \propto \lim_{\rmax\to \infty}{\rmax^D}/{({\|\rvx\|_2^2+ \rmax^2})^\frac{N+D}{2}}$ when $r=\rmax$ and the data distribution $p$ when $r=0$.
%\vspace{-5pt}
\end{restatable}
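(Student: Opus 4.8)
The plan is to identify the ODE $\mathrm{d}\rvx/\mathrm{d}r = \mat{E}(\tilde{\rvx})_\rvx / E(\tilde{\rvx})_r$ with the transport (continuity) equation of an explicit, analytically tractable family of densities, and then extract the boundary behaviour at $r=0$ and $r=r_{\max}\to\infty$; this mirrors the bijection proof for PFGM \cite{Xu2022PoissonFG} but the explicit kernel makes it cleaner and handles real $D$ directly. Define $k_r(\rvu) = r^D\big/\big(Z_{N,D}(\|\rvu\|_2^2+r^2)^{(N+D)/2}\big)$ with $Z_{N,D} = \int_{\R^N}(\|\rvu\|_2^2+1)^{-(N+D)/2}\,\mathrm{d}\rvu$, finite for every $D>0$; the dilation identity $k_r(\rvx) = r^{-N}k_1(\rvx/r)$ shows $\int k_r = 1$. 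Put $u_r(\rvx) = \int k_r(\rvx-\rvy)\,p(\rvy)\,\mathrm{d}\rvy$. Then $u_0 \coloneqq \lim_{r\to 0^+}u_r = p$ since $k_r$ is an approximate identity, and — using that $p$ is compactly supported — $u_{r_{\max}}(\rvx) = r_{\max}^D\big/\big(Z_{N,D}(\|\rvx\|_2^2+r_{\max}^2)^{(N+D)/2}\big)\cdot(1+O(r_{\max}^{-2}))$ pointwise, so $u_{r_{\max}}$ converges to the (normalised) prior in the statement. It therefore suffices to show the flow of the ODE from $r=r_{\max}$ to $r=0$ is a bijection carrying $u_{r_{\max}}$ onto $u_0=p$, plus a continuity argument in $r_{\max}$.

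The computational heart is two steps. First, since $p$ has compact support and $\|\tilde{\rvx}-\tilde{\rvy}\|_2^2 = \|\rvx-\rvy\|_2^2 + r^2 \ge r^2 > 0$, the integrals defining $\mat{E}(\tilde{\rvx})_\rvx$ and $E(\tilde{\rvx})_r$ converge absolutely for $r>0$, both involve the kernel $(\|\rvx-\rvy\|_2^2+r^2)^{-(N+D)/2} = (Z_{N,D}/r^D)\,k_r(\rvx-\rvy)$, and dividing one by the other gives $\mat{E}(\tilde{\rvx})_\rvx / E(\tilde{\rvx})_r = \frac{1}{r}\big(\rvx - \bar\rvy_r(\rvx)\big)$, where $\bar\rvy_r(\rvx) = \int \rvy\,k_r(\rvx-\rvy)p(\rvy)\,\mathrm{d}\rvy\big/u_r(\rvx)$ is the mean of $\rvy$ under the posterior $\propto k_r(\rvx-\rvy)p(\rvy)$; in particular $E(\tilde{\rvx})_r = \frac{Z_{N,D}}{S_{N+D-1}(1)}\,r^{1-D}u_r(\rvx)>0$ for $r>0$. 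Second, $u_r$ solves the continuity equation $\partial_r u_r + \nabla_\rvx\!\cdot\!\big(u_r\,\vv_r\big)=0$ with $\vv_r(\rvx)=\frac{1}{r}(\rvx-\bar\rvy_r(\rvx))$; this is verified at the kernel level by differentiating under the integral — a one-line calculation showing $\partial_r k_r(\rvx-\rvy) + \nabla_\rvx\!\cdot\!\big(k_r(\rvx-\rvy)\frac{\rvx-\rvy}{r}\big)=0$ for each fixed $\rvy$, in which all the $(N+D)$ factors cancel — and then integrating against $p(\rvy)$, pulling the divergence outside, and using $\int(\rvx-\rvy)k_r(\rvx-\rvy)p(\rvy)\,\mathrm{d}\rvy = u_r(\rvx)\big(\rvx-\bar\rvy_r(\rvx)\big)$. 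Hence $u_r$ is precisely the marginal density transported by the ODE, and $\vv_r$ is exactly $\mat{E}(\tilde{\rvx})_\rvx/E(\tilde{\rvx})_r$.

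To conclude, on each slab $r\in[\delta,r_{\max}]$ with $\delta>0$ the velocity $\vv_r$ is $\mathcal{C}^1$ in $\rvx$ (the kernel is smooth and $p\in\mathcal{C}^1$ has compact support) and $E(\tilde{\rvx})_r>0$ makes the change of variables $t\mapsto r$ legitimate, so Picard--Lindel\"of gives a unique flow that is a diffeomorphism between any two levels in $(0,r_{\max}]$ and, by the continuity equation, pushes $u_{r_1}$ to $u_{r_2}$. Letting $\delta\to 0$, one shows the trajectories converge as $r\to 0^+$ and the limiting endpoint map is a bijection (up to a null set) onto $\mathrm{supp}(p)$ transporting $u_{r_{\max}}$ to $p$. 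Finally, continuous dependence of the ODE flow on its initial law, together with the $L^1$ convergence of $u_{r_{\max}}$ to the explicit prior (which follows from $\|k_r\ast p-k_r\|_1\to 0$ as $r\to\infty$), shows that replacing the true marginal $u_{r_{\max}}$ by the explicit prior on the $r=r_{\max}$ cylinder perturbs the generated distribution by an amount vanishing as $r_{\max}\to\infty$, which is the asserted bijection in the limit. Nothing above uses $D\in\mathbb{Z}^+$: the kernel, the normaliser $Z_{N,D}$, the identity $Z_{N,D}(r)=r^{-D}Z_{N,D}$ and the cancellation in the continuity equation are all analytic in $D$ and need only $D>0$.

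The main obstacle is the degeneracy of the vector field at $r=0$: $\vv_r$ has speed of order $1/r$, so although the \emph{family} $u_r$ is transported correctly for every $r>0$, one must still show that \emph{individual} trajectories remain bounded and have a limit as $r\to 0^+$ and that the resulting endpoint map is injective almost everywhere — and this is precisely where $p\in\mathcal{C}^1$ and $\mathrm{supp}(p)$ compact are needed, e.g.\ via the rescaling $\rvx = \rvy + r\rvw$ to control $\bar\rvy_r$ and its derivatives near the data. A secondary but genuine subtlety is justifying the interchange of $\lim_{r_{\max}\to\infty}$ with the flow map, handled by a Gr\"onwall-type stability bound for the ODE combined with the $L^1$ convergence of $u_{r_{\max}}$ to the explicit prior.
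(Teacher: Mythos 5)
Your proposal is correct and follows essentially the same route as the paper: identify the candidate intermediate marginal $p_r \propto \int r^D\|\tilde{\rvx}-\tilde{\rvy}\|^{-(N+D)}p(\rvy)\,\mathrm{d}\rvy$, verify the continuity equation for the velocity $\mat{E}(\tilde{\rvx})_\rvx/E(\tilde{\rvx})_r$, and check the two boundary conditions (approximate identity at $r=0$, compact support for the $\rmax\to\infty$ prior). Your kernel-level cancellation $\partial_r k_r(\rvu)+\nabla_\rvu\cdot\bigl(k_r(\rvu)\,\rvu/r\bigr)=0$ is a cleaner way to obtain the identity the paper derives by direct expansion, and your Picard--Lindel\"of/degeneracy-at-$r=0$ discussion makes explicit a well-posedness point the paper leaves implicit.
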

\begin{proofs}
    The $r$-dependent intermediate distribution of the ODE~(\Eqref{eq:ode-pfgmpp}) is $p_r(\rvx) {\propto} \int {r^D}/{\|\tilde{\mat{x}}-\tilde{\rvy}\|^{N+D}}{p}({\rvy})d\rvy$, which satisfies initial/terminal conditions, \ie $p_{r=0} {=} p$, $\lim_{\rmax\to \infty}p_{\rmax}  \propto \lim_{\rmax\to \infty}{\rmax^D}/{({\|\rvx\|_2^2+ \rmax^2})^\frac{N+D}{2}}$, as well as the continuity equation of the ODE, \ie $\partial_r p_r + \nabla_\rvx\cdot(p_r \mat{E}(\tilde{\mat{x}})_\rvx /E(\tilde{\mat{x}})_{r})=0$.
    \vspace{-10pt}
\end{proofs}
We defer the formal proof to Appendix~\ref{app:proof-bij}. Note that in the theorem we further extend the domain of $D$ from positive integers to positive real numbers. In practice, the starting condition of the ODE is some sufficiently large $\rmax$ such that $p_{\rmax}(\rvx) \appropto {\rmax^D}/{({\|\rvx\|_2^2+ \rmax^2})^\frac{N+D}{2}}$. The terminal condition is $r{=0}$, which represents the generated samples reaching the data support. The proposed PFGM++ framework thus permits choosing arbitrary $D$, including $D=1$ which recovers the original PFGM formulation. Interestingly, we will also show that when $D{\to} \infty$, PFGM++ recover the diffusion models~(Sec~\ref{sec:diffusion}). In addition, as discussed in Sec~\ref{sec:benefits}, the choice of $D$ is important, since it controls two properties of the associated electric field, \ie robustness and rigidity, which affect the sampling performance.

\subsection{New objective with Perturbation Kernel}
\label{sec:kernel}

\label{sec:perturbed-kernel}

Although the training process in PFGM can be directly applied to PFGM++, we propose a more efficient training objective to dispense with the large batch in PFGM. The objective from PFGM paper~\cite{Xu2022PoissonFG} requires sampling a large batch of data $\{\rvy_i\}_{i=1}^n {\sim} {p}^n({\rvy})$ in each training step to approximate the integral in the electric field~(\Eqref{eq:poisson-field-D}):
\begin{align*}
&\E_{\tilde{p}_{\textrm{train}}(\tx)}\E_{\{\rvy_i\}_{i=1}^n\sim p^n(\rvy)}\E_{\rvx\sim p_\sigma(\rvx|\rvy_1)}\\
&\qquad\left[\bigg\lVert f_{\theta}(\tilde{\rvx})- \frac{\sum_{i=0}^{n-1} \frac{\tilde{\mat{x}}-\Tilde{\rvy}_i}{\|\tilde{\mat{x}}-\Tilde{\rvy}_i\|^{N+D}}}{\big\Vert\sum_{i=0}^{n-1} \frac{\tilde{\mat{x}}-\Tilde{\rvy}_i}{\|\tilde{\mat{x}}-\Tilde{\rvy}_i\|^{N+D}}\big\Vert_2+\gamma}\bigg\rVert^2_2 \right]
\end{align*}
where $\tilde{p}_{\textrm{train}}$ is heuristically designed to cover the regions that the backward ODE traverses. This objective has several obvious drawbacks: \textit{(1)} The large batch incurs additional overheads; \textit{(2)} Its minimizer is a biased estimator of the electric field~(\Eqref{eq:poisson-field-D}); \textit{(3)} The large batch is incompatible with typical paired sample training of conditional generation, where each condition is paired with only one sample, such as text-to-image~\cite{Rombach2021HighResolutionIS, Saharia2022PhotorealisticTD} and text-to-3D generation~\cite{Poole2022DreamFusionTU, Nichol2022PointEAS}.

To remedy these issues, we propose a perturbation-based objective without the need for the large batch, while achieving an unbiased minimizer and enabling paired sample training of conditional generation. Inspired by denoising score-matching~\cite{Vincent2011ACB}, we design the perturbation kernel to guarantee that the minimizer in the following square loss objective matches the ground-truth electric field in \Eqref{eq:poisson-field-D}:
\begin{align*}
    \E_{r\sim p(r)}\E_{p({\rvy})}\E_{p_{r}(\rvx|{\rvy})}\left[\| f_{\theta}(\tilde{\rvx})- ({{\tilde{\rvx}-\tilde{\rvy}}})\|_2^2\right] \numberthis \label{eq:new-D-aug}
\end{align*}
where $r\in(0, \infty)$, $p(r)$ is the training distribution over $r$, $p_{r}(\rvx|{\rvy})$ is the perturbation kernel and $\ty {=} (\rvy,0)$/$\tx{=}(\rvx, r)$ are the clean/perturbed augmented data. The minimizer of \Eqref{eq:new-D-aug} is $f^*_{\theta}(\tilde{\rvx}) {\propto} {\int p_r(\rvx|{{\rvy}})({\tilde{\rvx}-\tilde{\rvy}})p({\rvy}) \mathrm{d}{\rvy}}$, which matches the direction of electric field $\mat{E}(\tilde{\mat{x}}) {\propto} \int {(\tilde{\mat{x}}-\tilde{\rvy})}/{\|\tilde{\mat{x}}-\tilde{\rvy}\|^{N+D}}{p}({\rvy}) \mathrm{d} {\rvy}$ when setting the perturbation kernel to $p_{r}(\rvx|{\rvy}) {\propto}   {1}/{({\|\rvx-{\rvy}\|_2^2+ r^2})^\frac{N+D}{2}}$. Denoting the $r$-dependent intermediate marginal distribution as $p_r(\rvx) {=} \int p_r(\rvx|\rvy)p(\rvy)d\rvy$, the following proposition states that the choice of $p_r(\cdot|\rvy)$ guarantee that the minimizer of the square loss to match the direction of the electric field:
\begin{restatable}{proposition}{thmmin}
\label{thm:minimizer}
    With perturbation kernel $p_{r}(\rvx|{\rvy}) \propto {1}/{({\|\rvx-{\rvy}\|_2^2+ r^2})^\frac{N+D}{2}}$, for $\forall \rvx\in \R^N, r>0$, the minimizer $f^*_\theta(\tx)$ in the PFGM++ objective~(\Eqref{eq:new-D-aug}) matches the direction of electric field $\mat{E}(\tx)$ in \Eqref{eq:poisson-field-D}. Specifically, 
$
        f^*_\theta(\tx) = ({S_{N+D-1}(1)}/{p_r(\rvx)})\mat{E}(\tx)
$.
\end{restatable}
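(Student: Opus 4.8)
The plan is to treat $f_\theta$ as an arbitrary measurable function of $\tx=(\rvx,r)$ and reduce the minimization of the square loss in \Eqref{eq:new-D-aug} to a pointwise problem, exactly as in denoising score matching~\cite{Vincent2011ACB}. Writing the joint law of $(r,\rvy,\rvx)$ as $p(r)\,p(\rvy)\,p_r(\rvx\mid\rvy)$ and regrouping the integral by $\tx=(\rvx,r)$, the objective becomes a nonnegative weighted average of conditional square losses $\E\big[\,\|f_\theta(\tx)-(\tx-\ty)\|_2^2 \,\big|\, \tx\,\big]$. Since $v\mapsto\E[\|v-(\tx-\ty)\|_2^2\mid\tx]$ is strictly convex with unique minimizer at the conditional mean, and $f_\theta$ is free to take any value at each $\tx$, the global minimizer is
\begin{align*}
f^*_\theta(\tx) = \E[\,\tx-\ty\mid\tx\,] = \frac{\int (\tx-\ty)\,p_r(\rvx\mid\rvy)\,p(\rvy)\,\mathrm{d}\rvy}{\int p_r(\rvx\mid\rvy)\,p(\rvy)\,\mathrm{d}\rvy} = \frac{1}{p_r(\rvx)}\int (\tx-\ty)\,p_r(\rvx\mid\rvy)\,p(\rvy)\,\mathrm{d}\rvy ,
\end{align*}
where $p(r)$ cancels between numerator and denominator and $p_r(\rvx)=\int p_r(\rvx\mid\rvy)p(\rvy)\,\mathrm{d}\rvy$ is the intermediate marginal.

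Next I would substitute the chosen kernel. Since $\tx=(\rvx,r)$ and $\ty=(\rvy,0)$ we have $\|\tx-\ty\|_2^2=\|\rvx-\rvy\|_2^2+r^2$, so $p_r(\rvx\mid\rvy)=\kappa(r)\,\|\tx-\ty\|_2^{-(N+D)}$ with $\kappa(r)$ the ($\rvy$-independent) normalizing constant of the kernel. Plugging this in and comparing with the definition of $\mat{E}$ in \Eqref{eq:poisson-field-D} gives
\begin{align*}
\int (\tx-\ty)\,p_r(\rvx\mid\rvy)\,p(\rvy)\,\mathrm{d}\rvy = \kappa(r)\int \frac{\tx-\ty}{\|\tx-\ty\|_2^{N+D}}\,p(\rvy)\,\mathrm{d}\rvy = \kappa(r)\,S_{N+D-1}(1)\,\mat{E}(\tx) .
\end{align*}
Hence $f^*_\theta(\tx)$ is a strictly positive scalar times $\mat{E}(\tx)$; collecting the constant and dividing by $p_r(\rvx)$ yields the stated identity $f^*_\theta(\tx) = \big(S_{N+D-1}(1)/p_r(\rvx)\big)\,\mat{E}(\tx)$ (up to the $\rvy$-independent kernel normalization), and in particular the minimizer points along the electric field.

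The argument is essentially routine, so I expect the delicate points to be matters of care rather than of ideas: (i) justifying that the global minimizer is obtained pointwise, which relies on $f_\theta$ being unconstrained, on the integrand being nonnegative so Tonelli applies, and on all integrals being finite with $p_r(\rvx)>0$ --- these follow from $p\in\gC^1$ with compact support together with the kernel being bounded for each fixed $r>0$; and (ii) bookkeeping of the $r$-dependent normalizing constant $\kappa(r)$ of $p_r(\cdot\mid\rvy)$, which is independent of $\rvy$ and therefore enters only as an overall positive scalar, leaving the claimed direction --- the only thing that matters for the ODE in \Eqref{eq:ode-pfgmpp} --- unaffected. Step (ii) is the one place where a careless computation could drop or misplace a factor.
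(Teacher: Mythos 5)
Your proof follows essentially the same route as the paper's: identify the minimizer of the square loss as the posterior mean $\E[\,\tx-\ty\mid\tx\,]$, substitute the kernel, and recognize the resulting integral as $S_{N+D-1}(1)\,\mat{E}(\tx)$ divided by $p_r(\rvx)$. Your extra care with the normalizing constant $\kappa(r)$ is warranted --- the paper silently treats $p_r(\rvx\mid\rvy)$ as exactly equal to $\|\tx-\ty\|^{-(N+D)}$, so its stated prefactor holds only up to that $r$-dependent (but $\rvy$-independent) constant; the directional claim, which is all that matters for the ODE, is unaffected either way.
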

% \begin{proofs}
% The minimizer at $\tx$ in \Eqref{eq:new-D-aug} is
% \begin{align*}
%     f^*_{\theta}(\tilde{\rvx}) &= \frac{\int p_r(\rvx|{{\rvy}})({\tilde{\rvx}-\tilde{\rvy}})p({\rvy}) d{\rvy}}{p_r(\rvx)}
% \end{align*}
% By substituting the expression of perturbation kernel $p_{r}(\rvx|{\rvy})$, we immediately arrive at the results.
% \end{proofs}
We defer the proof to Appendix~\ref{app:thm-minimizer}. The proposition indicates that the minimizer $f^*_\theta(\tx)$ can match the direction of $\mat{E}(\tx)$ with sufficient data and model capacity. The current training target in \Eqref{eq:new-D-aug} is the directional vector between the clean data $\ty$ and perturbed data $\tx$ akin to denoising score-matching for diffusion models~\cite{Song2021ScoreBasedGM, Karras2022ElucidatingTD}. In addition, the new objective allows for conditional generations under a one-sample-per-condition setup. Since the perturbation kernel is isotropic, we can decompose $p_r(\cdot|\rvy)$ in hyperspherical coordinates to $\gU_{\psi}(\psi)p_r(R)$, where $\gU_{\psi}$ is the uniform distribution over the angle component and the distribution of the perturbed radius $R=\|\rvx - \rvy\|_2$ is
\begin{align*}
    p_r(R) \propto \frac{R^{N-1}}{({R^2+ r^2})^\frac{N+D}{2}}
\end{align*}
We defer the practical sampling procedure of the perturbation kernel to Appendix~\ref{app:sample-prior}. The mean of the $r$-dependent radius distribution $p_r(R)$ is around $r\sqrt{N/D}$. Hence we explicitly normalize the target in \Eqref{eq:new-D-aug} by $r/\sqrt{D}$, to keep the norm of the target around the constant $\sqrt{N}$, similar to diffusion models~\cite{Song2021ScoreBasedGM}. In addition, we drop the last dimension of the target because it is a constant ---  $({\tilde{\rvx}-\tilde{\rvy}})_{r}/({r}/{\sqrt{D}})=\sqrt{D}$. Together, the new objective is
\begin{align*}
    \E_{r\sim p(r)}\E_{p(\tilde{\rvy})}\E_{p_{r}(\tx|\tilde{\rvy})}\left[\Big\lVert f_{\theta}(\tilde{\rvx})- \frac{{\rvx}-{\rvy}}{{r}/{\sqrt{D}}}\Big\rVert_2^2\right] \numberthis \label{eq:pfgmpp-obj}
\end{align*}
After training the neural network through objective \Eqref{eq:pfgmpp-obj}, we can use the ODE~(\Eqref{eq:ode-pfgmpp}) anchored by $r$ to generate samples, \ie $\mathrm{d}\rvx /\mathrm{d}r = \mat{E}(\tilde{\mat{x}})_\rvx/E(\tilde{\mat{x}})_{r}= f_\theta(\tx) / \sqrt{D}$, starting from the prior distribution $p_{\rmax}$.

\section{Diffusion Models as $\bm{D{\to} \infty}$ Special Cases}
\label{sec:diffusion}

Diffusion models generate samples by simulating ODE/SDE involving the score function $\nabla_\rvx \log p_{\sigma}(\rvx)$ at different intermediate distributions $p_{\sigma}$~\cite{Song2021ScoreBasedGM, Karras2022ElucidatingTD}, where $\sigma$ is the standard deviation of the Gaussian kernel. In this section, we show that both sampling and training schemes in diffusion models are equivalent to those in $D{\to} \infty$ case under the PFGM++ framework. To begin with, we show that the electric field~(\Eqref{eq:poisson-field-D}) in PFGM++ has the same direction as the score function when $D$ tends to infinity, and their sampling processes are also identical.
\begin{restatable}{theorem}{thmfield}
\label{thm:inf-D-field}
Assume the data distribution $p\in \gC^1$. Consider taking the limit $D\to\infty$ while holding $\sigma = r/\sqrt{D}$ fixed. Then, for all $\rvx$, 
\begin{align*}
    \lim_{\substack{D\to \infty\\r=\sigma\sqrt{D}}}\bigg\lVert \frac{\sqrt{D}}{E(\tilde{\mat{x}})_{r}}\mat{E}(\tilde{\mat{x}})_{\rvx} - \sigma\nabla_\rvx\log p_{\sigma=r/\sqrt{D}}(\rvx)\bigg\rVert_2=0
\end{align*}
where $\mat{E}(\tx=(\rvx, r))_{\rvx}$ is given in \Eqref{eq:poisson-field-D}. Further, given the same initial point, the trajectory of the PFGM++ ODE~($\mathrm{d}\rvx/\mathrm{d}r{=} \mat{E}(\tilde{\mat{x}})_{\rvx}/ E(\tilde{\mat{x}})_{r}$) matches the diffusion ODE~\cite{Karras2022ElucidatingTD} ($\mathrm{d}\rvx/\mathrm{d}t {=} -\dot{\sigma}(t)\sigma(t)\nabla_\rvx \log p_{\sigma(t)}(\rvx)$) in the same limit. \vspace{-5pt}
\end{restatable}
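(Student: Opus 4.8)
The plan is to rewrite both sides as explicit integrals and pass the limit through the integral sign. After the $SO(D)$ symmetry reduction, with $\tx=(\rvx,r)$ and $\ty=(\rvy,0)$ so that $\|\tx-\ty\|^2=\|\rvx-\rvy\|_2^2+r^2$, the geometric constant $S_{N+D-1}(1)$ cancels and the scalar $r$ in $E(\tx)_r$ pulls out, giving
\[
  \frac{\sqrt{D}}{E(\tx)_r}\,\mat{E}(\tx)_\rvx
  = \frac{\sqrt{D}}{r}\cdot
  \frac{\int (\rvx-\rvy)\,(\|\rvx-\rvy\|_2^2+r^2)^{-\frac{N+D}{2}}\,p(\rvy)\,\diff\rvy}
       {\int (\|\rvx-\rvy\|_2^2+r^2)^{-\frac{N+D}{2}}\,p(\rvy)\,\diff\rvy}.
\]
Setting $r=\sigma\sqrt{D}$ turns the prefactor into $1/\sigma$, and factoring $(\|\rvx-\rvy\|_2^2+\sigma^2 D)^{-\frac{N+D}{2}}=(\sigma^2 D)^{-\frac{N+D}{2}}\,g_D(\rvy)$ with $g_D(\rvy)\defeq\big(1+\|\rvx-\rvy\|_2^2/(\sigma^2 D)\big)^{-\frac{N+D}{2}}$ cancels the $\rvy$-independent power $(\sigma^2 D)^{-\frac{N+D}{2}}$ between numerator and denominator, so everything reduces to a ratio of integrals against $g_D(\rvy)\,p(\rvy)$.

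The crux is the pointwise limit $g_D(\rvy)\to e^{-\|\rvx-\rvy\|_2^2/(2\sigma^2)}$ as $D\to\infty$, which follows by taking logs and expanding $\log(1+c/D)=c/D+O(1/D^2)$ with $c=\|\rvx-\rvy\|_2^2/\sigma^2$. I then interchange limit and integral by dominated convergence: $g_D(\rvy)\le 1$ dominates the denominator, and for the numerator $\|\rvx-\rvy\|_2\,g_D(\rvy)$ must be dominated by a $D$-independent integrable function of $\rvy$, which the compact support of $p$ (carried over from Theorem~\ref{thm:bijection}) supplies at once. The resulting limit is $\tfrac{1}{\sigma}\cdot\big[\int (\rvx-\rvy)e^{-\|\rvx-\rvy\|_2^2/(2\sigma^2)}p(\rvy)\diff\rvy\big]\big/\big[\int e^{-\|\rvx-\rvy\|_2^2/(2\sigma^2)}p(\rvy)\diff\rvy\big]$, and by differentiating $p_\sigma(\rvx)=\int(2\pi\sigma^2)^{-N/2}e^{-\|\rvx-\rvy\|_2^2/(2\sigma^2)}p(\rvy)\diff\rvy$ under the integral (the standard Gaussian-score / Tweedie identity, legitimized by $p\in\gC^1$) this matches $\sigma\nabla_\rvx\log p_\sigma(\rvx)$, which is the first displayed claim; continuity of the limit field likewise follows from $p\in\gC^1$.

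For the ODE assertion I reparametrize both flows by $\sigma$. The diffusion probability-flow ODE $\diff\rvx/\diff t=-\dot\sigma(t)\sigma(t)\nabla_\rvx\log p_{\sigma(t)}(\rvx)$ becomes $\diff\rvx/\diff\sigma=-\sigma\nabla_\rvx\log p_\sigma(\rvx)$, while the PFGM++ ODE $\diff\rvx/\diff r=\mat{E}(\tx)_\rvx/E(\tx)_r$ of \Eqref{eq:ode-pfgmpp} together with $\sigma=r/\sqrt{D}$ becomes $\diff\rvx/\diff\sigma=\sqrt{D}\,\mat{E}(\tx)_\rvx/E(\tx)_r$, i.e.\ precisely the field analyzed above. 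It then remains to upgrade pointwise field convergence to convergence of integral curves from a common initial condition: I would verify that the dominated-convergence bounds are locally uniform in $(\rvx,\sigma)$ on compacts in $\{\sigma>0\}$ (using continuity and strict positivity of $\int e^{-\|\rvx-\rvy\|_2^2/(2\sigma^2)}p(\rvy)\diff\rvy$ for a uniform lower bound on the denominator), note that the limit trajectory stays in a compact tube over any fixed $\sigma$-interval $[\sigma_{\min},\sigma_{\max}]$, and apply the standard Gr\"onwall continuous-dependence estimate for ODEs.

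I expect the interchange of limit and integral in the numerator to be the main obstacle: $\|\rvx-\rvy\|_2\,g_D(\rvy)$ has no fixed integrable majorant without some decay control on $p$, so either the compact-support hypothesis of Theorem~\ref{thm:bijection} must be imported (the cleanest option) or a quantitative bound such as $\|\rvx-\rvy\|_2\,g_D(\rvy)\le C(\rvx,\sigma)(1+\|\rvx-\rvy\|_2)^{-2}$ for all large $D$ must be established. A secondary, more tedious point is making the field convergence locally \emph{uniform} rather than merely pointwise — this is what the Gr\"onwall step consumes — which amounts to rerunning the same estimates with constants uniform over compact $(\rvx,\sigma)$-sets.
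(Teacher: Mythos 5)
Your proposal follows essentially the same route as the paper's proof: rewrite both $\sqrt{D}\,\mat{E}(\tilde{\mat{x}})_{\rvx}/E(\tilde{\mat{x}})_{r}$ and $\sigma\nabla_\rvx\log p_\sigma(\rvx)$ as ratios of integrals against the respective kernels, establish the pointwise limit $p_r(\rvx|\rvy)\to p_\sigma(\rvx|\rvy)$ via the logarithmic expansion, pass the limit through the integrals, and handle the ODE claim by the change of variable $\mathrm{d}\sigma=\mathrm{d}r/\sqrt{D}$. The only difference is that you supply justifications the paper leaves implicit (the dominated-convergence majorant via compact support, and the locally uniform convergence plus Gr\"onwall step for the trajectories), which is a refinement of the same argument rather than a different one.
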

\begin{proofs} By re-expressing the $\rvx$ component $\mat{E}(\tilde{\mat{x}})_{\rvx}$ in the electric field and the score $\nabla_\rvx \log p_{\sigma}$ in diffusion models, the proof boils down to show that $\lim_{{D\to \infty,r=\sigma\sqrt{D}}}p_r(\rvx|\rvy)\propto \exp(-{\|\rvx-\rvy\|_2^2}/{ 2\sigma^2})$ for $\forall \rvx, \rvy \in \R^{N+D}$:
\begin{align*}
    &\lim_{{D\to \infty,r=\sigma\sqrt{D}}}\frac{1}{({\|\rvx-{\rvy}\|_2^2+r^2)^\frac{N+D}{2}}} 
    % \propto \lim_{D\to \infty}\frac{1}{({||\rvx-\rvy||_2^2+ r^2})^\frac{N+D}{2}}\\
     % {\propto}\lim_{\substack{D\to \infty\\r=\sigma\sqrt{D}}}{({1+\frac{\|\rvx-\rvy\|_2^2}{ r^2}})^{-\frac{N+D}{2}}}\\
     \\ \propto &\lim_{{D\to \infty,r=\sigma\sqrt{D}}}e^{- \frac{(N+D)}{2}{\rm ln}(1+\frac{\|\rvx-\rvy\|^2}{r^2})}
     \\ = &\lim_{{D\to \infty,r=\sigma\sqrt{D}}}e^{- \frac{(N+D)\|\rvx-\rvy\|_2^2}{ 2r^2} } \numberthis  \label{eq:app-exp} 
     =e^{-\frac{\|\rvx-\rvy\|_2^2}{ 2\sigma^2}} 
\end{align*}    
The equivalence of trajectories can be proven by change-of-variable $\mathrm{d}\sigma = \mathrm{d}r /\sqrt{D}$. Their prior distributions are also the same since $\lim_{D \to \infty} p_{\rmax=\sigma_{\textrm{max}}\sqrt{D}} (\rvx)=\gN(\mathbf{0}, \sigma_{\textrm{max}}\mI)$. 
\end{proofs}
We defer the formal proof to Appendix~\ref{app:proof-thmfield}. Since $\|\rvx-\rvy\|_2^2/r^2\approx {N/D}$ when $\rvx \sim p_r(\rvx), \rvy \sim p(\rvy)$, \Eqref{eq:app-exp} approximately holds under the condition $D\gg N$. Remarkably, the theorem states that PFGM++ recover the field and sampling of previous popular diffusion models, such as VE/VP~\cite{Song2020ImprovedTF} and EDM~\cite{Karras2022ElucidatingTD}, by choosing the appropriate schedule and scale function in \citet{Karras2022ElucidatingTD}.

In addition to the field and sampling equivalence, we demonstrate that the proposed PFGM++ objective~(\Eqref{eq:pfgmpp-obj})
with perturbation kernel $p_r(\rvx|\rvy) \propto   {1}/{({\|\rvx-{\rvy}\|_2^2+ r^2})^\frac{N+D}{2}}$ recovers the weighted sum of the denoising score matching objective~\cite{Vincent2011ACB} for training continuous diffusion model~\cite{Karras2022ElucidatingTD, Song2021ScoreBasedGM} when $D {\to} \infty$. All previous objectives for training diffusion models can be subsumed in the following form~\cite{Karras2022ElucidatingTD}, under different parameterizations of the neural networks $f_\theta$:
\begin{align*}
    \E_{\sigma \sim p(\sigma)}\lambda(\sigma)\E_{p({\rvy})}\E_{p_{\sigma}({\rvx}|{\rvy})}\left[\Big\lVert f_{\theta}({\rvx}, \sigma)- \frac{{\rvx}-{\rvy}}{\sigma}\Big\rVert_2^2\right] \numberthis \label{eq:diffusion-obj}
\end{align*}
where
$
p_\sigma({\rvx}|\rvy) \propto \exp({-{\|\rvx-\rvy\|_2^2}/{2\sigma^2}})
$. The objective of the diffusion models resembles the one of PFGM++~(\Eqref{eq:pfgmpp-obj}). Indeed, we show that when $D{\to} \infty$, the minimizer of the proposed PFGM++ objective at $\tx{=}(\rvx,r)$ is $f^*_{\theta}(\rvx, r=\sigma\sqrt{D}) {=}\sigma \nabla_\rvx \log p_{\sigma}(\rvx)$, the same as the minimizer of diffusion objective at the noise level $\sigma{=}r/\sqrt{D}$.

\begin{restatable}{proposition}{propobj}
\label{prop:obj}
When $r=\sigma\sqrt{D}, D\to \infty$, the minimizer in the PFGM++ objective~(\Eqref{eq:pfgmpp-obj}) is \textbf{equaivalent to} the minimizer in the weighted sum of denoising score matching objective~(\Eqref{eq:diffusion-obj})
\end{restatable}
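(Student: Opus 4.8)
The plan is to reduce the statement to a per-noise-level limit and then carry out that limit using the convergence of the perturbation kernels already computed in the proof of Theorem~\ref{thm:inf-D-field}. First I would note that both objectives are expectations of a nonnegative, per-level squared error against a strictly positive weight ($p(\sigma)\lambda(\sigma)$ for diffusion, $p(r)$ for PFGM++), so the unconstrained minimizer over all $f_\theta$ is obtained by minimizing the inner conditional least-squares problem at each level independently, the weight being irrelevant. By the elementary identity $\argmin_f \E\lVert f(\rvx)-g(\rvx,\rvy)\rVert_2^2=\E[g(\rvx,\rvy)\mid\rvx]$, the diffusion minimizer at noise level $\sigma$ is the denoising regression function $f^{\star}_{\mathrm{diff}}(\rvx,\sigma)=\E_{p_\sigma(\rvy\mid\rvx)}\!\big[(\rvx-\rvy)/\sigma\big]$ with $p_\sigma(\rvy\mid\rvx)\propto\exp(-\lVert\rvx-\rvy\rVert_2^2/2\sigma^2)\,p(\rvy)$, and the PFGM++ minimizer at level $r$ is $f^{\star}_{\mathrm{PFGM++}}(\tx)=\E_{p_r(\rvy\mid\rvx)}\!\big[(\rvx-\rvy)/(r/\sqrt D)\big]$ with $p_r(\rvy\mid\rvx)\propto(\lVert\rvx-\rvy\rVert_2^2+r^2)^{-(N+D)/2}\,p(\rvy)$. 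Since we set $r=\sigma\sqrt D$, the scalar prefactors coincide ($r/\sqrt D=\sigma$), so it remains to prove $\lim_{D\to\infty}\E_{p_r(\rvy\mid\rvx)}[\rvx-\rvy]=\E_{p_\sigma(\rvy\mid\rvx)}[\rvx-\rvy]$ for every $\rvx$.

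This is where I would invoke the kernel limit already established in Theorem~\ref{thm:inf-D-field}. Writing the unnormalized kernel as $(\lVert\rvx-\rvy\rVert_2^2+r^2)^{-(N+D)/2}\propto(1+\lVert\rvx-\rvy\rVert_2^2/r^2)^{-(N+D)/2}=\exp(-\tfrac{N+D}{2}\ln(1+\lVert\rvx-\rvy\rVert_2^2/r^2))$ and letting $D\to\infty$ with $r=\sigma\sqrt D$, \Eqref{eq:app-exp} gives the pointwise limit $\exp(-\lVert\rvx-\rvy\rVert_2^2/2\sigma^2)$. Because the posterior is a ratio of $\rvy$-integrals, all $D$-dependent, $\rvy$-independent prefactors cancel between numerator and denominator, so $p_r(\rvy\mid\rvx)$ converges pointwise (hence, after renormalization, in total variation by Scheff\'e) to $p_\sigma(\rvy\mid\rvx)$, and therefore $\E_{p_r(\rvy\mid\rvx)}[\rvx-\rvy]\to\E_{p_\sigma(\rvy\mid\rvx)}[\rvx-\rvy]$. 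This yields $\lim_{D\to\infty}f^{\star}_{\mathrm{PFGM++}}(\rvx,\sigma\sqrt D)=f^{\star}_{\mathrm{diff}}(\rvx,\sigma)$, the claimed equivalence. Equivalently, and more slickly, one may combine Proposition~\ref{thm:minimizer} — which, after dividing its minimizer by $r/\sqrt D$ and discarding the constant last coordinate, identifies $f^{\star}_{\mathrm{PFGM++}}(\tx)=\tfrac{\sqrt D}{E(\tx)_r}\mat{E}(\tx)_\rvx$, all kernel normalizations cancelling — with Theorem~\ref{thm:inf-D-field}, which sends this to $\sigma\nabla_\rvx\log p_\sigma(\rvx)$, and with Tweedie's formula, which identifies the diffusion minimizer with the same score-based expression.

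The only genuine analytic point, and the step I would flag as the main obstacle, is the interchange of limit and $\rvy$-integration in $\E_{p_r(\rvy\mid\rvx)}[\rvx-\rvy]$: pointwise convergence of the kernel does not by itself give convergence of the integral, and the finite-$D$ kernels are heavier-tailed than their Gaussian limit. Under the compact-support hypothesis on $p$ already used in Theorem~\ref{thm:bijection} this is immediate — the integrand is uniformly bounded and $\lVert\rvx-\rvy\rVert$ bounded on the support, so dominated convergence applies to numerator and denominator separately and the denominator limit is strictly positive; without compact support one would instead need a mild uniform-integrability/tail estimate on $p$. Since this limit is exactly the content of Theorem~\ref{thm:inf-D-field}, the proof of the proposition is essentially bookkeeping: assemble the conditional-mean characterization of the minimizers, the kernel limit \Eqref{eq:app-exp}, and (optionally) Proposition~\ref{thm:minimizer}, taking care only that the $r=\sigma\sqrt D$ reparametrization is applied consistently to both the target normalization and the conditioning variable.
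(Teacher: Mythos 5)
Your argument is essentially the paper's own proof: both express each minimizer as the posterior-weighted conditional expectation of the target, observe that $r=\sigma\sqrt D$ makes the normalizations $r/\sqrt D$ and $\sigma$ coincide, and then invoke the pointwise kernel convergence $\lim_{D\to\infty}p_r(\rvx|\rvy)=p_\sigma(\rvx|\rvy)$ from Theorem~\ref{thm:inf-D-field} to pass to the limit inside the $\rvy$-integrals. Your additional remarks --- the dominated-convergence justification for the limit--integral interchange under compact support, and the alternative route via Proposition~\ref{thm:minimizer} plus Tweedie's formula --- are sound refinements of the same argument rather than a different approach.
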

We defer the proof to Appendix~\ref{app:proof-propobj}. The proposition states that the training objective of diffusion models is essentially the same as PFGM++'s when $D{\to} \infty$. Combined with Theorem~\ref{thm:inf-D-field}, PFGM++ thus recover both the training and sampling processes of diffusion models when $D{\to} \infty$.

\paragraph{Transfer hyperparameters to finite $\bm{D}$s} 
The training hyperparameters of diffusion models~($D{\to }\infty$) have been highly optimized through a series of works~\cite{Ho2020DenoisingDP, Song2021ScoreBasedGM,  Karras2022ElucidatingTD}. It motivates us to transfer hyperparameters, such as $\rmax$ and $p(r)$, of $D{\to}\infty$ to finite $D$s. Here we present an alignment method that enables a ``zero-shot" transfer of hyperparameters across different $D$s. Our alignment method is inspired by the concept of phases in \citet{Xu2023StableTF}, which is related to the variation of training targets. We aim to align the intermediate marginal distributions $p_r$ for two distinct $D_1, D_2>0$. In Appendix~\ref{app:phase-align}, we demonstrate that when $r \propto \sqrt{D}$, the phase of the intermediate distribution $p_r$ is approximately invariant to all $D>0$~(including $D{\to}\infty$). In other words, when ${r_{D_1}}/{r_{D_2}}=\sqrt{D_1/D_2}$, the phases of $p_{r_{D_1}}$ and $p_{r_{D_2}}$, under $D_1$ and $D_2$ respectively, are roughly aligned. Theorem~\ref{thm:inf-D-field} further shows that the relation $r{=}\sigma\sqrt{D}$ makes PFGM++ equivalent to  diffusion models when $D{\to} \infty$. Together, the $r{=}\sigma\sqrt{D}$ formula aligns the phases of $p_\sigma$ in diffusion models and $p_{r=\sigma\sqrt{D}}$ in PFGM++ for $\forall D{>}0$. 
Such alignment enables directly transferring the finely tuned hyperparameters $\sigma_{\textrm{max}}, p(\sigma)$ in previous state-of-the-art diffusion models~\cite{Karras2022ElucidatingTD} with $\rmax {=} \sigma_{\textrm{max}}\sqrt{D}, p(r) {=} p(\sigma {=} r/\sqrt{D})/\sqrt{D}$. \textit{We put the practical hyperparameter transfer procedures in Appendix~\ref{app:transfer-diff}.}

We empirically verify the alignment formula on the CIFAR-10~\cite{Krizhevsky2009LearningML}. \citet{Xu2023StableTF} shows that the posterior $p_{0|r}(\rvy|\rvx)\propto p_r(\rvx|\rvy)p(\rvy)$ gradually grows towards a uniform distribution from the near to the far field. As a result, the mean total variational distance~(TVD) between a uniform distribution and the posterior serves as an indicator of the phase of $p_r$:
$
    \E_{p_{r}(\rvx)}\textrm{TVD}\left(U(\cdot)\parallel p_{0|r}(\cdot|\rvx)\right)
$.
\Figref{fig:align} reports the mean TVD before and after the $r{=}\sigma\sqrt{D}$ alignment. We observe that the mean TVDs of a wide range of $D$s take similar values after the alignment, suggesting that the phases of $p_{r=\sigma\sqrt{D}}$ are roughly aligned for different $D$s.

\begin{figure}[t]
\centering
\vspace{-10pt}
\subfigure[No alignment]{\includegraphics[width=0.248\textwidth]{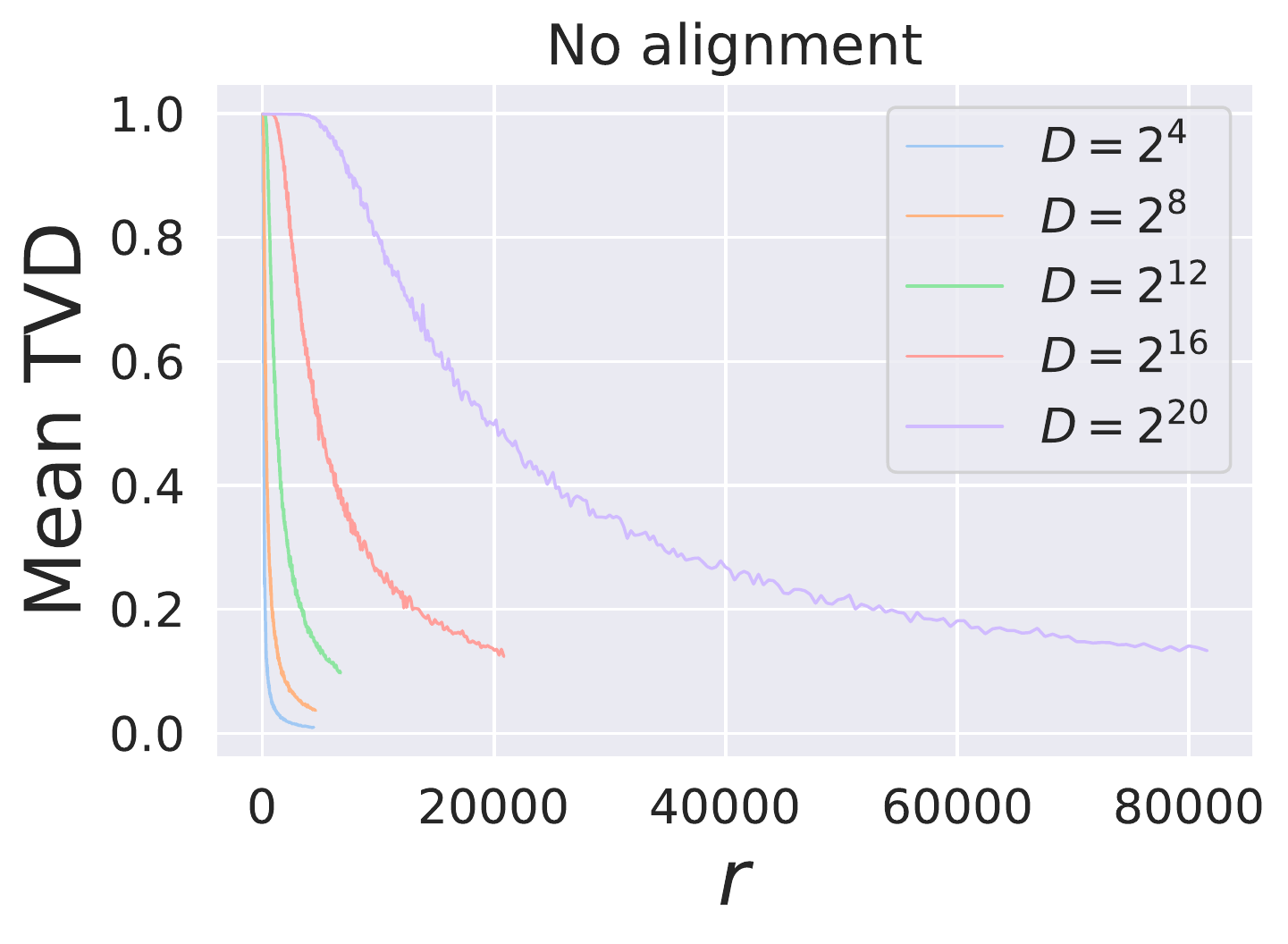}}\hfill
\subfigure[$r=\sigma\sqrt{D}$ alignment]{\includegraphics[width=0.23\textwidth]{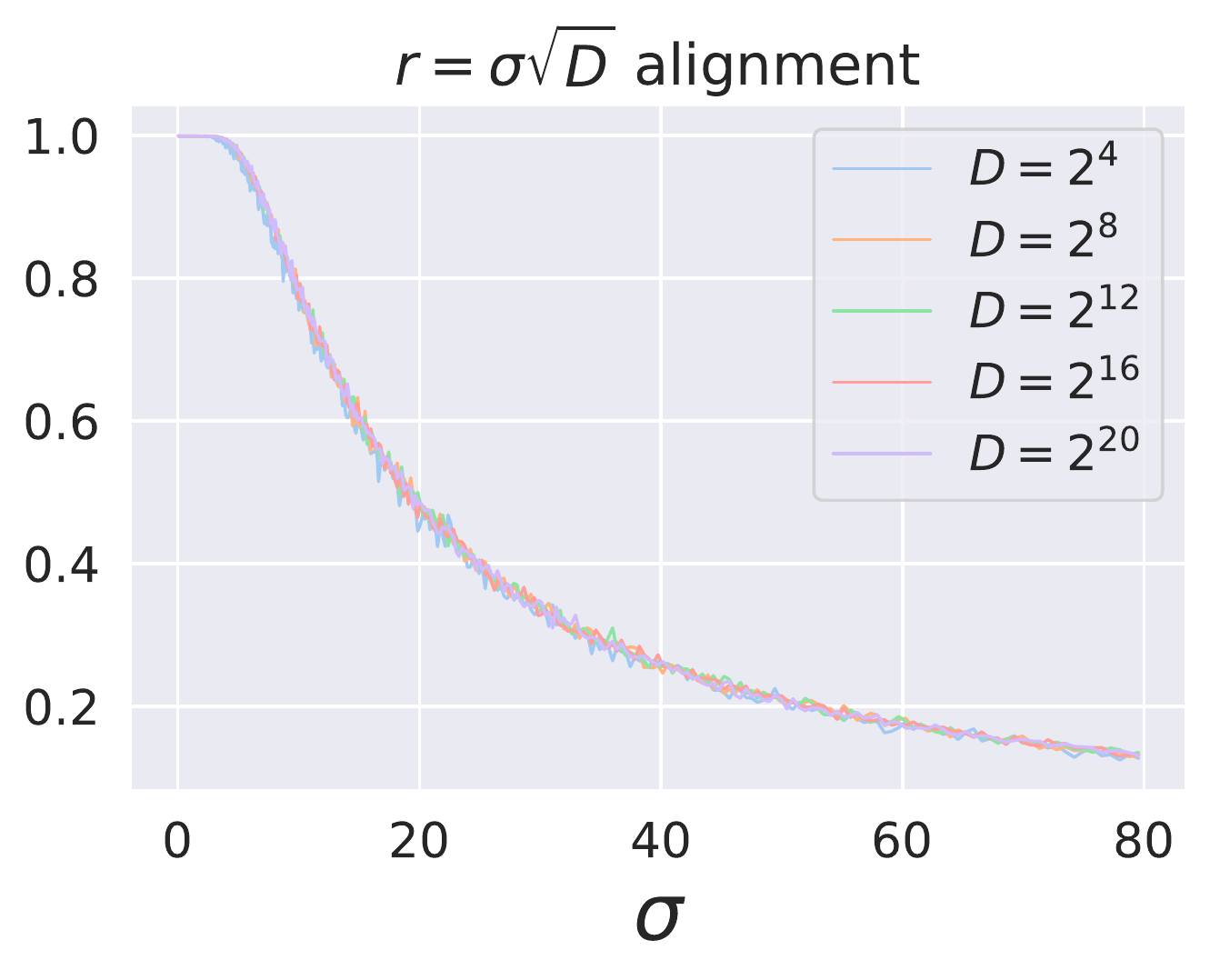}}
\vspace{-10pt}
    \caption{Mean TVD between the posterior $p_{0|r}(\cdot|\rvx)$ ($\rvx$ is perturbed sample) and the uniform prior, w/o~\textbf{(a)} and w/~\textbf{(b)} the phase alignment~($r=\sigma\sqrt{D}$).}
    \label{fig:align}
    \vspace{-20pt}
\end{figure}

\section{Balancing Robustness and Rigidity}

\label{sec:benefits}
In this section, we first delve into the behaviors of PFGM++ with different $D$s~(Sec~\ref{sec:behavior}) based on the alignment formula. Then we demonstrate how to leverage $D$ to balance the robustness and rigidity of models~(Sec~\ref{sec:balance}). We defer all experimental details in this section to Appendix~\ref{app:be-exp}.

\subsection{Behavior of perturbation kernel when varying $\bm{D}$}
\label{sec:behavior}

\begin{figure*}[htbp]
\centering
  \subfigure[]{\includegraphics[width=0.315\textwidth]{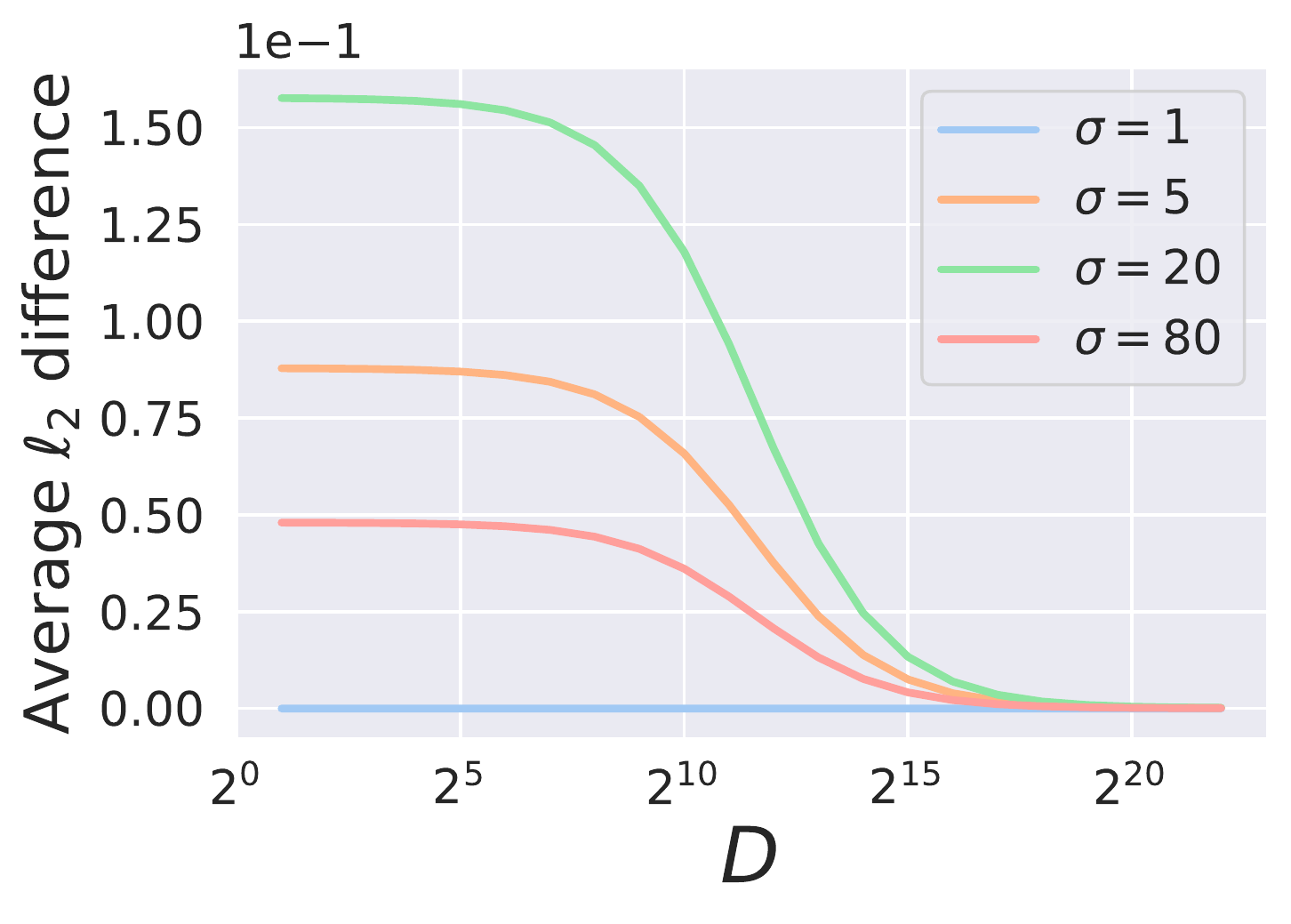}\label{fig:dis}}\hfill
   \subfigure[]{ \includegraphics[width=0.31\textwidth]{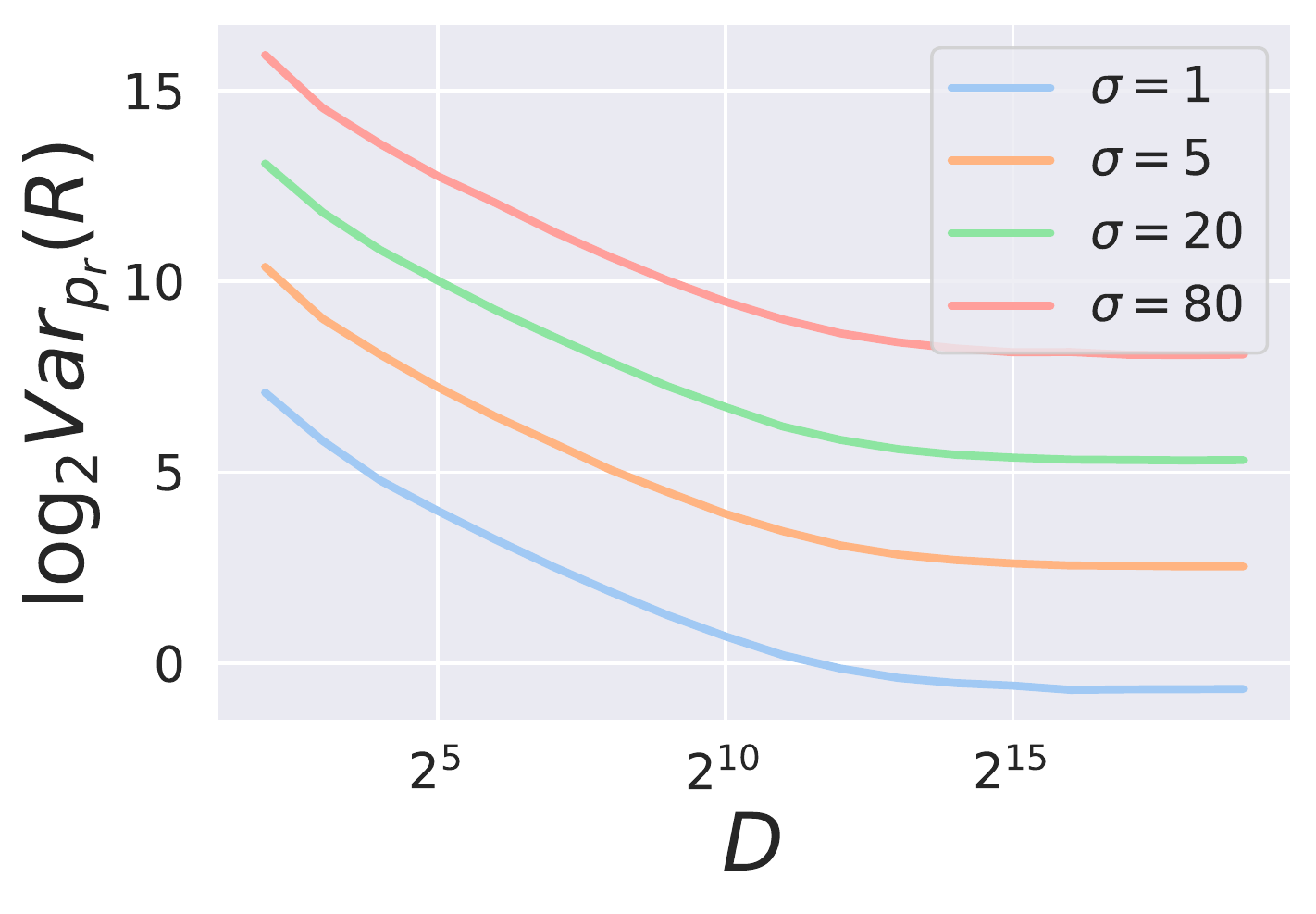}  \label{fig:var}}\hfill
  \subfigure[]{ \includegraphics[ width=0.28\textwidth, trim = 0cm 1cm 0cm 1.5cm]{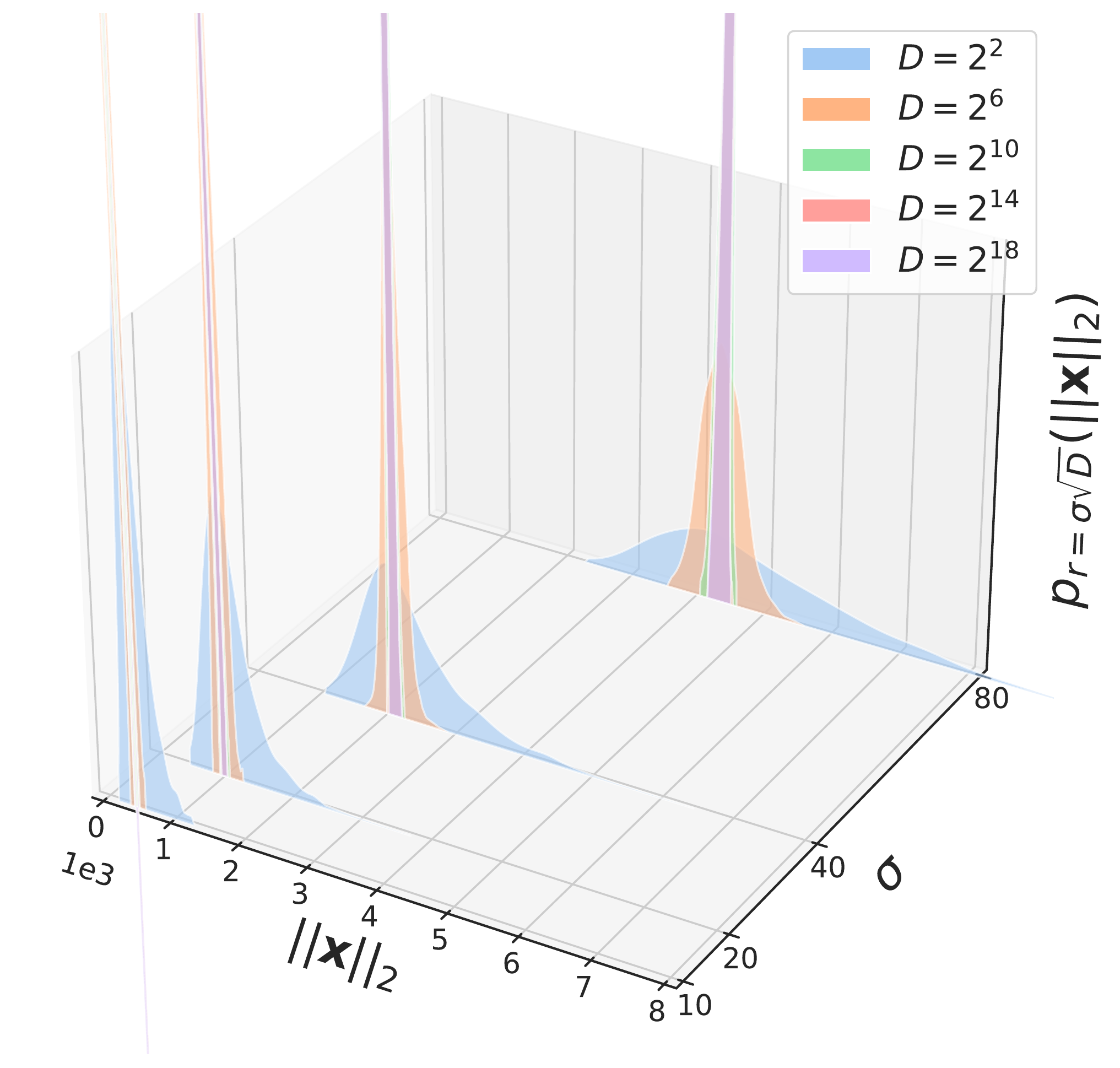}  \label{fig:norm-r}}
  \vspace{-6pt}
  \caption{\textbf{(a)} Average $\ell_2$ difference between scaled electric field and score function, versus $D$. \textbf{(b)} Log-variance of radius distribution versus $D$. \textbf{(c)} Density of radius distributions $p_{r=\sigma\sqrt{D}}(R)$ with varying $\sigma$ and $D$. }
    \vspace{-6pt}
\end{figure*}

%Our proposed PFGM++ framework recovers diffusion models when $D{\to}\infty$.
According to Theorem~\ref{thm:inf-D-field}, when $D{\to} \infty$, the field in PFGM++ has the same direction as the score function, \ie $ \sqrt{D}\mat{E}(\tilde{\mat{x}})_{\rvx}/E(\tilde{\mat{x}})_{r} {=} \sigma \nabla_\rvx\log p_{\sigma=r/\sqrt{D}}(\rvx)$. In addition to the theoretical analysis, we provide further empirical study to characterize the convergence towards diffusion models as $D$ increases. \Figref{fig:dis} reports the average $\ell_2$ difference between the two quantities, \ie $
    \E_{p_{\sigma}(\rvx)}[\|\sqrt{D}\mat{E}(\tilde{\mat{x}})_{\rvx}/E(\tilde{\mat{x}})_{r} {-} \sigma \nabla_\rvx\log p_{\sigma}(\rvx)\|_2]
$ with $r{=}\sigma\sqrt{D}$. We observe that the difference monotonically decreases as a function of $D$, and converges to $0$ as predicted by theory. For $\sigma{=}1$, the distance remains $0$ since the empirical posterior $p_{0|r}$ concentrates around a single example for all $D$.

Next, we examine the behavior of the perturbation kernel after the phase alignment. Recall that the isotropic perturbation kernel $p_r(\rvx|\rvy) \propto {1}/{({\|\rvx-{\rvy}\|_2^2+ r^2})^\frac{N+D}{2}}$ can be decomposed into a uniform angle component and a radius distribution $p_r(R) \propto {R^{N-1}}/{({R^2+ r^2})^\frac{N+D}{2}}$. \Figref{fig:var} shows the variance of the radius distribution significantly decreases as $D$ increases. The results imply that with relatively large $r$, the norm of the training sample in $p_r(\rvx)$ becomes increasingly concentrated around a specific value as $D$ increases, reaching its highest level of concentration as $D{\to}\infty$~(diffusion models). \Figref{fig:norm-r} further shows the density of training sample norms in $p_{r=\sigma\sqrt{D}}(\rvx)$ on CIFAR-10. We can see that the range of the high-mass region gradually shrinks when $D$ increases.

\subsection{Balancing the trade-off by controlling $\bm{D}$}
\label{sec:balance}

As noted in \citet{Xu2022PoissonFG}, diffusion models~($D{\to}\infty$) are more susceptible to estimation errors compared to PFGM~($D{=}1$) due to the strong correlation between $\sigma$ and the training sample norm, as demonstrated in \Figref{fig:norm-r}. When $D$ and $r$ are large, the marginal distribution $p_r(\rvx)$ is approximately supported on the sphere with radius $r\sqrt{N/D}$. The backward ODE can lead to unexpected results if the sampling trajectories deviate from this  norm-$r$  relation present in training samples. This phenomenon was empirically confirmed by \citet{Xu2022PoissonFG} for PFGM/diffusion models~($D{=}1$ and $D{\to} \infty$ cases) using a weaker architecture NCSNv2~\cite{Song2020ImprovedTF}, where PFGM was shown to be significantly more robust than diffusion models.

Smaller $D$, however, implies a heavy-tailed input distribution. \Figref{fig:norm-r} illustrates that the examples used as the input to the neural network have a broader range of norms when $D$ is small. In particular, when $D{<}2^5$, the variance of perturbation radius can be larger than $2^{10}$~(\Figref{fig:var}). This broader input range can be challenging for any finite-capacity neural network. Although \citet{Xu2022PoissonFG} introduced heuristics to bypass this issue in the $D{=}1$ case, \eg restricting the sampling/training regions, these heuristics also prevent the sampling process from faithfully recovering the data distribution.

Thus, we can view $D$ as a parameter to optimize so as to balance the robustness of generation against rigidity that helps learning. 
%As $D$ decreases, the training of neural networks becomes more challenging. However, a smaller $D$ can enhance the robustness by weakening the training sample norm-$r$ correlation. 
Increased robustness allows practitioners to use smaller neural networks, e.g., by applying post-training quantization~\cite{Han2015DeepCC, Banner2018PostT4}. %which could potentially incur larger estimation errors. 
In other words, smaller $D$ allows for more aggressive quantization/larger sampling step sizes/smaller architectures.
These can be crucial in real-world applications where computational resources and storage are limited. On the other hand, such gains need to be balanced against easier training afforded by larger values of $D$. The ability to optimize the balance by varying $D$ can be therefore advantageous. We expect that there exists a sweet spot of ${D}$ in the middle striking the balance, as the model robustness and rigidity go in opposite directions.
\iffalse
%When using small values of $D$, one should attend to the potential for the intermediate distribution to be too heavy-tailed for neural networks with limited capacity. 
The ability to optimize the balance by varying $D$ can be advantageous. A smaller $D$ allows for more aggressive quantization/larger sampling step sizes/smaller architectures. On the other hand, a larger $D$ facilitates training/optimization. 
learning ;  (2) A sweet spot of $\bm{D}$ lies between $\bm{(1, \infty)}$. Since the model robustness and rigidity go in opposite directions, there is a median $D$ that strikes the balance.
\fi

\vspace{-5pt}
\section{Experiments}

\label{sec:experiment}
In this section, we assess the performance of different generative models on image generation tasks~(Sec~\ref{sec:image}), where models with some median $D$s outperform previous state-of-the-art diffusion models~($D{\to} \infty$), consistent with the sweet spot argument in Sec~\ref{sec:benefits}. We also demonstrate the improved robustness against three kinds of error as $D$ decreases~(Sec~\ref{sec:robust}).

\subsection{Image generation}
\label{sec:image}

We consider the widely used benchmarks CIFAR-10 $32{\times} 32$~\cite{Krizhevsky2009LearningML} and FFHQ $64{\times} 64$~\cite{Karras2018ASG} for image generation. For training, we utilize the improved NCSN++/DDPM++ architectures, preconditioning techniques and hyperparameters from the state-of-the-art diffusion model  EDM~\cite{Karras2022ElucidatingTD}. Specifically, we use the alignment method developed in Sec~\ref{sec:diffusion} to transfer their tuned critical hyperparameters $\sigma_{\textrm{max}},\sigma_{\textrm{min}}, p(\sigma)$ in the $D{\to} \infty$ case to finite $D$ cases. According to the experimental results in \citet{Karras2018ASG}, the log-normal training distribution $p(\sigma)$ has the most substantial impact on the final performances. For ODE solver during sampling, we use Heun's $2^{\textrm{nd}}$ method~\cite{Ascher1998ComputerMF} as in EDM.

% \begin{table}[h]
% \vspace{-6pt}
%     \small
%     \centering
%     \caption{CIFAR-10 sample quality~(FID) and number of function evaluations~(NFE).}
%     \begin{tabular}{l c c c}
%     \toprule
%          &\scalebox{0.8}{Min FID $\downarrow$}  & \scalebox{0.8}{NFE $\downarrow$}\\
%          \midrule
%          %NCSN++ VE-SDE~\citep{Song2021ScoreBasedGM}&  $9.83$&$2.38$ &$2000$\\
%          StyleGAN2~\citep{Karras2020TrainingGA} &  $2.92$ & $1$\\
% DDPM~\citep{Ho2020DenoisingDP}& $3.17$& $1000$\\
% DDIM~\cite{Song2021DenoisingDI}  & $4.67$ & $50$ \\
%           VE-ODE~\citep{Song2021ScoreBasedGM}&  $5.29$ & $194$\\
%           VP-ODE~\citep{Song2021ScoreBasedGM} & $2.86$& $134$\\
%          PFGM~\cite{Xu2022PoissonFG} & ${2.48}$ & ${104}$\\
%         \midrule
%         \textbf{\textit{PFGM++}} \\
%         \midrule
%         $D=64$  & $1.96$& $35$\\
%         $D=128$  & $1.92$& $35$\\
%         $D=2048$  & $\bf 1.91$& $35$\\ 
%         $D=3072000$ & $1.99$& $35$\\
%     $D\to \infty$~\cite{Karras2022ElucidatingTD}  &$1.98$&  $35$\\
%         % \midrule
%         % \textit{w/ stable target field~(NCSN++)}\\
%         % \midrule
%         % $D=3072000$ &  &1.90 & 35\\
%         % $D=\infty$~\cite{anonymous2023stable} &  &1.90 & 35\\
%         \bottomrule
%     \end{tabular}
%     \label{tab:cifar}
%     \vspace{-5pt}
% \end{table}

\begin{table}[h]
%\vspace{-6pt}
    \scriptsize
    \centering
    \caption{CIFAR-10 sample quality~(FID) and number of function evaluations~(NFE).}
    \begin{tabular}{l c c c c}
    \toprule
         &{Min FID $\downarrow$}  & {Top-3 Avg FID $\downarrow$} & {NFE $\downarrow$}\\
         \midrule
         %NCSN++ VE-SDE~\citep{Song2021ScoreBasedGM}&  $9.83$&$2.38$ &$2000$\\
DDPM~\citep{Ho2020DenoisingDP}& $3.17$& -&$1000$\\
DDIM~\cite{Song2021DenoisingDI}  & $4.67$ &-& $50$ \\
          VE-ODE~\citep{Song2021ScoreBasedGM}&  $5.29$ &-& $194$\\
          VP-ODE~\citep{Song2021ScoreBasedGM} & $2.86$&-& $134$\\
         PFGM~\cite{Xu2022PoissonFG} & ${2.48}$ & -&${104}$\\
        \midrule
        \textbf{\textit{PFGM++ (unconditional)}} \\
        \midrule
        $D=64$  & $1.96$& $1.98$ &$35$\\
        $D=128$  & $1.92$& $1.94$&$35$\\
        $D=2048$  & $\bf 1.91$& $\bf 1.93$& $35$\\ 
        $D=3072000$ & $1.99$& $2.02$ &$35$\\
    $D\to \infty$~\cite{Karras2022ElucidatingTD}  &$1.98$& $2.00$  & $35$\\
        % \midrule
        % \textit{w/ stable target field~(NCSN++)}\\
        % \midrule
        % $D=3072000$ &  &1.90 & 35\\
        % $D=\infty$~\cite{anonymous2023stable} &  &1.90 & 35\\
        \midrule
        \textbf{\textit{PFGM++ (class-conditional)}} \\
        \midrule
        $D=2048$  & $\bf 1.74$& -& $35$\\ 
    $D\to \infty$~\cite{Karras2022ElucidatingTD}  &$1.79$& -  & $35$\\
        \bottomrule
    \end{tabular}
    \label{tab:cifar}
    %\vspace{-10pt}
\end{table}

\begin{table}[h]
    \small
    \centering
    \caption{FFHQ sample quality~(FID) with 79 NFE in unconditional setting}
    \begin{tabular}{l c c c}
    \toprule
         &\scalebox{0.8}{Min FID $\downarrow$}  &\scalebox{0.8}{Top-3 Avg FID} $\downarrow$\\
        \midrule
        $D=128$  &$\bf 2.43$& $2.48$\\
        $D=2048$  & $2.46$& $\bf 2.47$\\ 
        $D=3072000$ & $2.49$& $2.52$\\
    $D\to \infty$~\cite{Karras2022ElucidatingTD} &$2.53$ &$2.54$ \\
        \bottomrule
    \end{tabular}
    \label{tab:ffhq}
\end{table}

We compare models trained with $D{\to} \infty$~(EDM) and $D {\in} \{64, 128, 2048, 3072000 \}$. In our experiments, we exclude the case of $D{=}1$~(PFGM) because the perturbation kernel is extremely heavy-tailed~(\Figref{fig:var}), making it difficult to integrate with our perturbation-based objective without the restrictive region heuristics proposed in \citet{Xu2022PoissonFG}. We also exclude the small $D=64$ for the higher-resolution dataset FFHQ. We include several popular generative models for reference and defer more training and sampling details to Appendix~\ref{app:exp}.

\textbf{Results:}  In Table~\ref{tab:cifar} and Table~\ref{tab:ffhq}, we report the sample quality measured by the FID score~\cite{Heusel2017GANsTB}~(lower is better), and inference speed measured by the number of function evaluations. As in EDM, we report the minimum FID score over checkpoints. Since we empirically observe a large variation of FID scores on FFHQ across checkpoints~(Appendix~\ref{app:eval}), we also use the average FID score over the Top-3 checkpoints as another metric. Our main findings are \textbf{(1) Median $\bm{D}$s outperform diffusion models~($\bm{D{\to} \infty}$) under PFGM++ framework.} We observe that the $D{=}2048/128$ cases achieve the best performance among our choices on CIFAR-10 and FFHQ, with a current state-of-the-art min FID score of $1.91/2.43$ in unconditional setting, using the perturbation-based objective. In addition, the $D{=}2048$ case obtain better Top-3 average FID scores~($1.93/2.47$) than EDM~($2.00/2.54$) on both datasets in unconditional setting, and achieve current state-of-the-art FID score of $1.74$ on CIFAR-10 class-conditional setting. \textbf{(2) There is a sweet spot between $\bm{(1,\infty)}$.} Neither small $D$ nor infinite $D$ obtains the best performance, which confirms that there is a sweet spot in the middle, well-balancing rigidity and robustness. \textbf{(3) Model with $\bm{D{\gg} N}$  recovers diffusion models.} We find that model with sufficiently large $D$ roughly matches the performance of diffusion models, as predicted by the theory. Further results in Appendix~\ref{app:stf} show that $D{=}3072000$ and diffusion models obtain the same FID score when using a more stable training target~\cite{Xu2023StableTF} to mitigate the variations between different runs and checkpoints.

\subsection{Model robustness versus $\bm{D}$}
\begin{figure}[htbp]
    \centering
\subfigure{\includegraphics[width=0.251\textwidth]{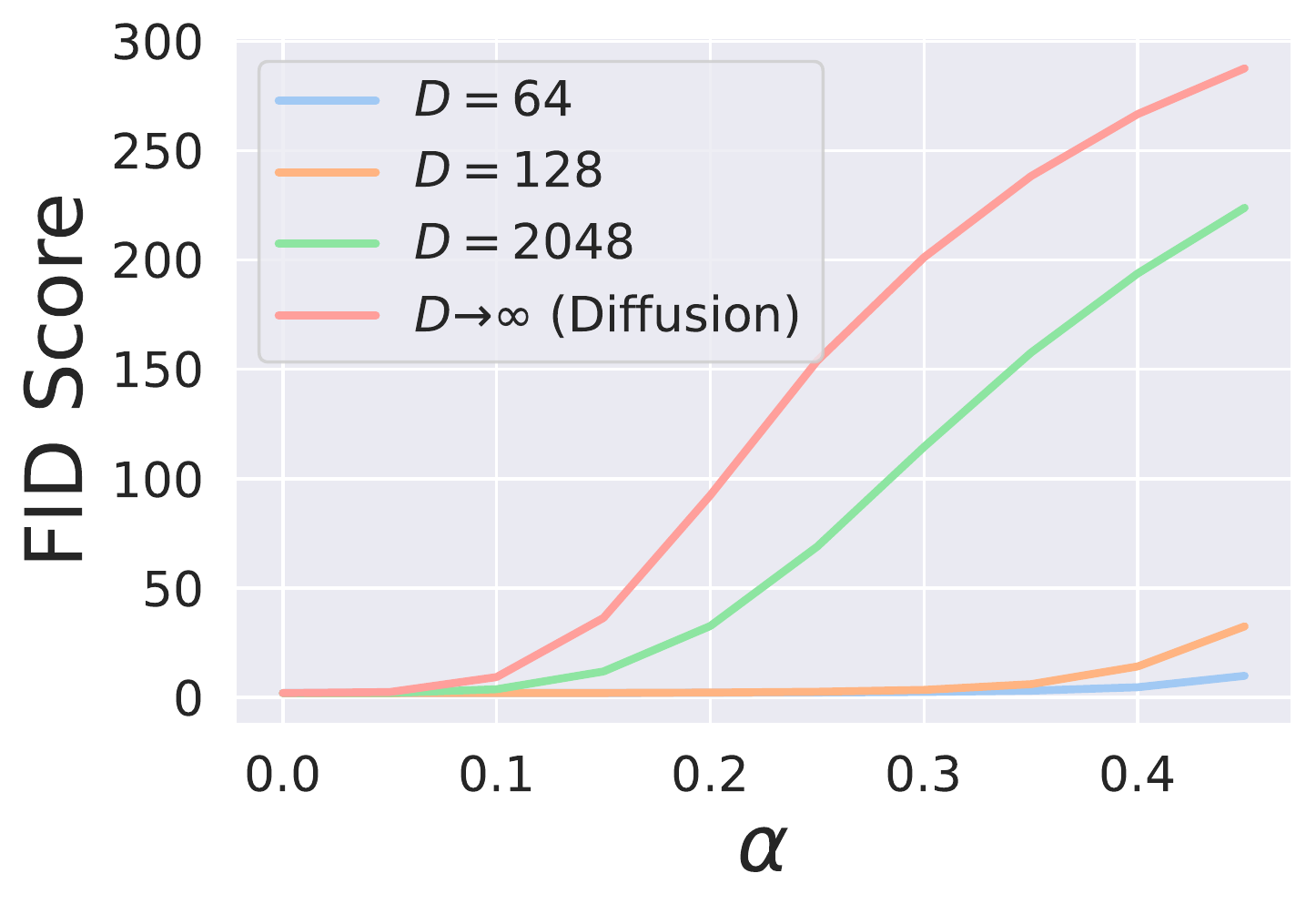}\label{fig:alpha}}\hfill
\subfigure{\includegraphics[width=0.23\textwidth]{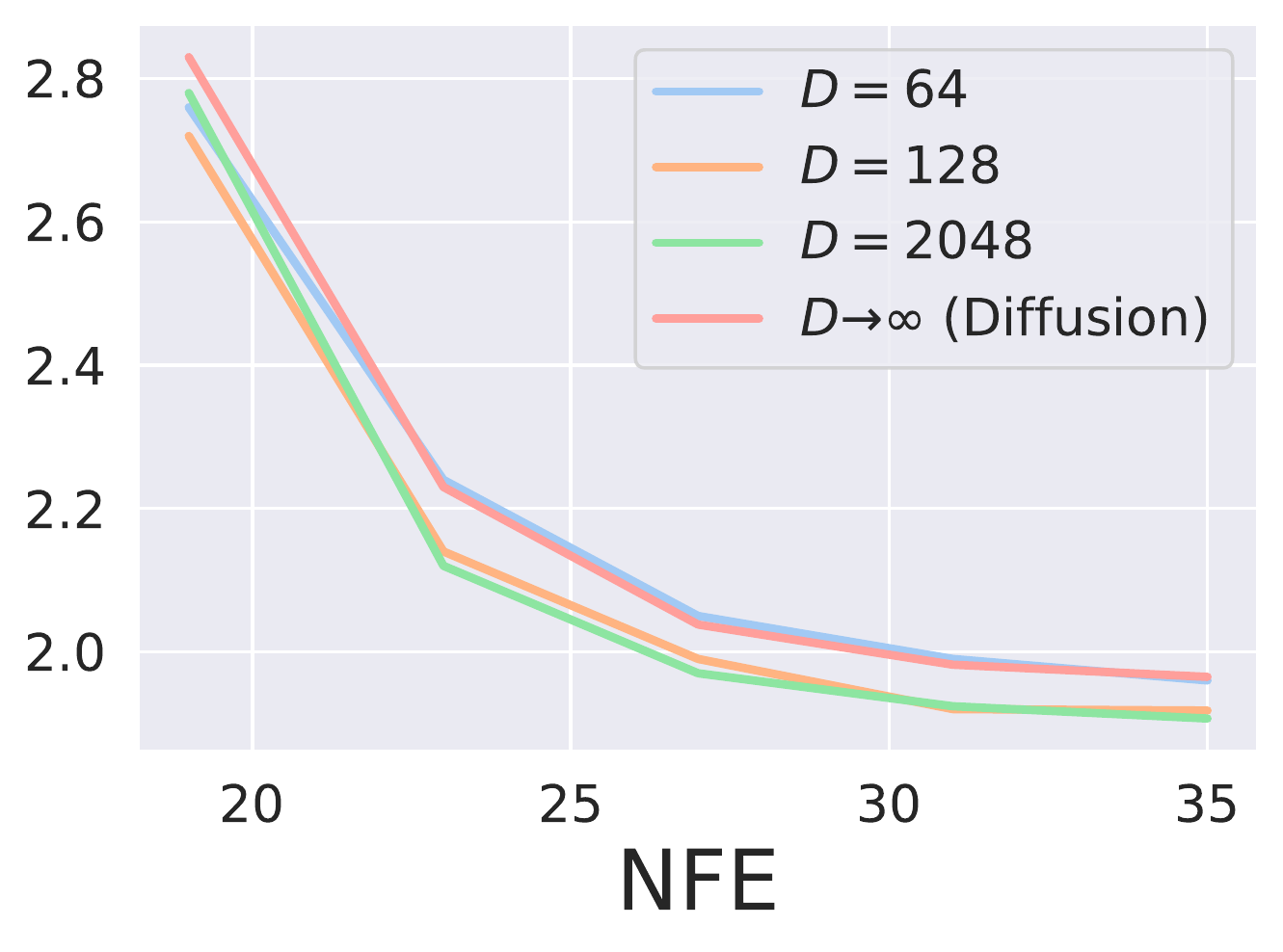}\label{fig:nfe}}
\centering
\vspace{-16pt}
\caption{FID score versus \textbf{(left)}~$\alpha$ and \textbf{(right)}~NFE on CIFAR-10.}
\vspace{-3pt}
\end{figure}
\label{sec:robust}

In Section~\ref{sec:benefits}, we show that the model robustness degrades with an increasing $D$ by analyzing the behavior of perturbation kernels. To further validate the phenomenon, we conduct three sets of experiments with different sources of errors on CIFAR-10. We defer more details to Appedix~\ref{app:robust-exp}. Firstly, we perform controlled experiments to compare the robustness of models quantitatively. To simulate the errors, we inject noise into the intermediate point $\rvx_r$ in each of the $35$ ODE steps: $\rvx_r=\rvx_r + \alpha\bm\epsilon_r$ where $\bm\epsilon_r\sim \gN(\bm 0, r/\sqrt{D}\mI)$, and $\alpha$ is a positive number controlling the amount of noise. \Figref{fig:alpha} demonstrates that as $\alpha$ increases, FID score exhibits a much slower degradation for smaller $D$. In particular, when $D{=}64,128$, the sample quality degrades gracefully. We further visualize the generated samples in Appendix~\ref{app:robust}. It shows that when $\alpha{=}0.2$, models with $D{=}64, 128$ can still produce clean images while the sampling process of diffusion models~($D{\to} \infty$) breaks down.

In addition to the controlled scenario, we conduct two more realistic experiments: \textbf{(1)} We introduce more estimation error of neural networks by applying post-training quantization~\cite{Sung2015ResiliencyOD}, which can directly compress neural networks without fine-tuning. Table~\ref{tab:quant} reports the FID score with varying quantization bit-widths for the convolution weight values. We can see that finite $D$s have better robustness than the infinite case, and a lower $D$ exhibits a larger performance gain when applying lower bit-widths quantization. \textbf{(2)} We increase the discretization error during sampling by using smaller NFEs, \ie larger sample steps. As shown in \Figref{fig:nfe}, gaps between $D{=}128$ and diffusion models gradually widen, indicating greater robustness against the discretization error. The rigidity issue of smaller $D$
also affects the robustness to discretization error, as $D{=}64$ is consistently inferior to $D{=}128$.
% We define the \textit{Robustness Radius} as the minimum $\alpha \in \{i/20\}_{i=0}^{20}$ such that the generated samples have a FID score greater than ?\yx{todo}.  

\begin{table}[h]
  %\vspace{-3pt}
    \small
    \centering
    \caption{FID score versus quantization bit-widths on CIFAR-10.}
    \begin{tabular}{l c c c c c}
    \toprule
%           \textit{Magnitude pruning ratio:} & 0.1 & 0.15 &0.2 & 0.25 & 0.3\\
%          \midrule
%         $D=128$ & 1.94& 1.99 & 2.15 & \bf{2.55} & \bf{3.74}\\
%         $D=2048$ &  \bf{1.93} & \bf{1.98} & \bf{2.14}& 2.60& 3.86\\ 
% $D=\infty$~\cite{Karras2022ElucidatingTD} & 2.00 & 2.03 & 2.17& 2.64 & 3.96 \\
%         \midrule
        \textit{Quantization bits:} & 9 & 8 & 7 & 6 & 5\\
        \midrule
        $D=64$ & 1.96& \bf 1.96& \bf 2.12& \bf 2.94& \bf 28.50\\
        $D=128$
& {1.93}
&  1.97 & 2.15
& {3.68} & {34.26} \\
        $D=2048$ & \bf{1.91}
& 1.97 & \bf 2.12
&5.67 & 47.02 \\ $D\to\infty$ 
&  1.97
& 2.04
& 2.16 & 5.91 & 50.09\\
        \bottomrule
    \end{tabular}
    \label{tab:quant}
    %\vspace{-3pt}
\end{table}

\section{Conclusion and Future Directions}
We present a new family of physics-inspired generative models called PFGM++, by extending the dimensionality of augmented variable in PFGM from $1$ to $D \in \R^+$. Remarkably, PFGM++ includes diffusion models as special cases when $D{\to}\infty$. To address issues related to large batch training, we propose a perturbation-based objective. In addition, we show that $D$ effectively controls the robustness and rigidity in the PFGM++ family. Empirical results show that models with finite values of $D$ can perform better than previous state-of-the-art diffusion models, while also exhibiting improved robustness.

There are many potential avenues for future research in the PFGM++ framework. For example, it may be possible to identify the ``sweet spot" value of $D$ for different architectures and tasks by analyzing the behavior of errors. Since PFGM++ enables adjusting robustness, another direction is to apply aggressive network compression techniques, \ie pruning and low-bit training, to smaller $D$. Furthermore, there may be opportunities to develop stochastic samplers for PFGM++, with the reverse SDE in diffusion models as a special case. Lastly, as diffusion models have been highly optimized for image generation, the PFGM++ framework may show a greater advantage over its special case~(diffusion models) in emergent fields, such as biology data.

\clearpage

\section*{Acknowledgements}

YX and TJ acknowledge support from MIT-DSTA Singapore collaboration, from NSF Expeditions grant (award 1918839) “Understanding the World Through Code”, and from MIT-IBM Grand Challenge project. ZL and MT would like to thank the Center for Brains, Minds, and Machines (CBMM) for hospitality. ZL and MT are supported by The Casey and Family Foundation, the Foundational Questions Institute, the Rothberg Family Fund for Cognitive Science and IAIFI through NSF grant PHY-2019786. ST and TJ also acknowledge support from the ML for Pharmaceutical Discovery and Synthesis Consortium (MLPDS).
\bibliography{bib}
\bibliographystyle{icml2023}
\clearpage

\appendix

\newpage
\onecolumn
{\huge Appendix}

\def\E{{\bf E}}
\def\x{{\bf x}}
\def\r{{\bf r}}
\def\rhat{\hat{r}}

\section{Proofs}

\label{app:proofs}

\subsection{Proof of Theorem~\ref{thm:bijection}}
\label{app:proof-bij}
\thmbij*
\begin{proof}
Let $q_r(\rvx) \propto \int {r^D}/{\|\tilde{\mat{x}}-\tilde{\rvy}\|^{N+D}}p(\rvy)d\rvy$. We will show that $q_r\propto \int {r^D}/{\|\tilde{\mat{x}}-\tilde{\rvy}\|^{N+D}}p(\rvy)d\rvy$ is equal to the $r$-dependent marginal distribution $p_r$ by verifying (1) the starting distribution is correct when $r{=}0$; (2) the continuity equation holds, \ie $\partial_r q_r + \nabla_\rvx\cdot(q_r \mat{E}(\tilde{\mat{x}})_\rvx  /E(\tilde{\mat{x}})_{r})=0$. The starting distribution is $\lim_{r\to 0}q_r(\rvx) \propto \lim_{r\to 0}\int {r^D}/{\|\tilde{\mat{x}}-\tilde{\rvy}\|^{N+D}}p(\rvy)d\rvy\propto p(\rvx)$, which confirms that $q_r{=}p$. The continuity equation can be expressed as:
\begin{align*}
    &\partial_r q_r + \nabla_\rvx\cdot(q_r \mat{E}(\tilde{\mat{x}})_\rvx/E(\tilde{\mat{x}})_{r})\\
    &=  \partial_r\left(\int \frac{r^D}{\|\tilde{\mat{x}}-\tilde{\rvy}\|^{N+D}}p(\rvy)d\rvy\right)+ \nabla_\rvx\cdot\left(\int \frac{r^D}{\|\tilde{\mat{x}}-\tilde{\rvy}\|^{N+D}}{p}({\rvy}) d {\rvy}\frac{\int \frac{\tilde{\mat{x}}-\tilde{\rvy}}{\|\tilde{\mat{x}}-\tilde{\rvy}\|^{N+D}}{p}({\rvy}) d {\rvy}}{\int \frac{r}{\|\tilde{\mat{x}}-\tilde{\rvy}\|^{N+D}}{p}({\rvy}) d {\rvy}}\right)\\
    &=  \int \left(\frac{Dr^{D-1}}{{\|\tilde{\mat{x}}-\tilde{\rvy}\|^{N+D}}} - \frac{(N+D)r}{{\|\tilde{\mat{x}}-\tilde{\rvy}\|^{N+D-2}}}\right)p(\rvy)d\rvy+\nabla_\rvx\cdot\left(r^{D-1}{\int \frac{\tilde{\mat{x}}-\tilde{\rvy}}{\|\tilde{\mat{x}}-\tilde{\rvy}\|^{N+D}}{p}({\rvy}) d {\rvy}}\right)\\
    &=  \int \left(\frac{Dr^{D-1}}{{\|\tilde{\mat{x}}-\tilde{\rvy}\|^{N+D}}} - \frac{(N+D)r}{{\|\tilde{\mat{x}}-\tilde{\rvy}\|^{N+D-2}}}\right)p(\rvy)d\rvy+\nabla_\rvx\cdot\left(r^{D-1}{\int \frac{\tilde{\mat{x}}-\tilde{\rvy}}{\|\tilde{\mat{x}}-\tilde{\rvy}\|^{N+D}}{p}({\rvy}) d {\rvy}}\right)\\
    &=\int \left(\frac{Dr^{D-1}}{{\|\tilde{\mat{x}}-\tilde{\rvy}\|^{N+D}}} - \frac{(N+D)r}{{\|\tilde{\mat{x}}-\tilde{\rvy}\|^{N+D-2}}}\right)p(\rvy)d\rvy\\
    &\qquad +r^{D-1}\sum_{i=1}^N{\int \frac{\|\tilde{\mat{x}}-\tilde{\rvy}\|^{N+D} - \|\tilde{\mat{x}}-\tilde{\rvy}\|^{N+D-2}(\rvx_i -\rvy_i)^2(N+D)}{\|\tilde{\mat{x}}-\tilde{\rvy}\|^{2(N+D)}}{p}({\rvy}) d {\rvy}}\\
    &=\int \left(\frac{Dr^{D-1}}{{\|\tilde{\mat{x}}-\tilde{\rvy}\|^{N+D}}} - \frac{(N+D)r^{D+1}}{{\|\tilde{\mat{x}}-\tilde{\rvy}\|^{N+D-2}}}\right)p(\rvy)d\rvy\\
    &\qquad +r^{D-1}{\int \frac{N\|\tilde{\mat{x}}-\tilde{\rvy}\|^{N+D} - \|\tilde{\mat{x}}-\tilde{\rvy}\|^{N+D-2}\|\rvx-\rvy\|^2(N+D)}{\|\tilde{\mat{x}}-\tilde{\rvy}\|^{2(N+D)}}{p}({\rvy}) d {\rvy}}\\
    &=r^{D-1}\int \frac{\|\tilde{\mat{x}}{-}\tilde{\rvy}\|^{N{+}D}D-(N{+}D)r^{2}\|\tilde{\mat{x}}-\tilde{\rvy}\|^{N{+}D{-}2}+N\|\tilde{\mat{x}}{-}\tilde{\rvy}\|^{N{+}D}-\|\tilde{\mat{x}}{-}\tilde{\rvy}\|^{N{+}D{-}2}\|\rvx{-}\rvy\|^2(N{+}D)}{{\|\tilde{\mat{x}}{-}\tilde{\rvy}\|^{2(N{+}D)}}}p(\rvy)d\rvy\\
    &=r^{D-1}\int \frac{(N+D)(\|\tilde{\mat{x}}-\tilde{\rvy}\|^{N+D}-\|\tilde{\mat{x}}-\tilde{\rvy}\|^{N+D-2}\|\rvx-\rvy\|^2)-(N+D)r^{2}\|\tilde{\mat{x}}-\tilde{\rvy}\|^{N+D-2}}{{\|\tilde{\mat{x}}-\tilde{\rvy}\|^{2(N+D)}}}p(\rvy)d\rvy\\
    &=r^{D-1}\int \frac{(N+D)r^{2}\|\tilde{\mat{x}}-\tilde{\rvy}\|^{N+D-2}-(N+D)r^{2}\|\tilde{\mat{x}}-\tilde{\rvy}\|^{N+D-2}}{{\|\tilde{\mat{x}}-\tilde{\rvy}\|^{2(N+D)}}}p(\rvy)d\rvy\\
    &=0
\end{align*}
It means that $q_r$ satisfies the continuity equation for any $r\in \R_{\ge 0}$. Together, we conclude that $q_r=p_r$. Lastly, note that the terminal distribution is 
\begin{align*}
    \lim_{\rmax\to \infty}p_{\rmax}(\rvx) &\propto \lim_{\rmax\to \infty} \int \frac{\rmax^D}{\|\tilde{\mat{x}}-\tilde{\rvy}\|^{N+D}}p(\rvy)d\rvy=\lim_{\rmax\to \infty}\int \frac{\rmax^D}{(\|\rvx-\rvy\|^2+\rmax^2)^\frac{N+D}{2}}p(\rvy)d\rvy\\
    &= \lim_{\rmax\to \infty}\frac{\rmax^D}{(\|\rvx\|^2+\rmax^2)^\frac{N+D}{2}}  + \lim_{\rmax\to \infty}\int \left(\frac{\rmax^D}{(\|\rvx-\rvy\|^2+\rmax^2)^\frac{N+D}{2}}-\frac{\rmax^D}{(\|\rvx\|^2+\rmax^2)^\frac{N+D}{2}}\right)p(\rvy)d\rvy\\
    &= \lim_{\rmax\to \infty}\frac{\rmax^D}{(\|\rvx\|^2+\rmax^2)^\frac{N+D}{2}} \qquad \text{($p$ has a compact support)}
\end{align*}
\end{proof}

\subsection{Proof of Theorem~\ref{thm:minimizer}}
\label{app:thm-minimizer}
\thmmin*
\begin{proof}
The minimizer at $\tx$ in \Eqref{eq:new-D-aug} is
\begin{align*}
    f^*_{\theta}(\tilde{\rvx}) &= \int p_r({\rvy}|\rvx) ({{\tx}-\tilde{\rvy}} )d\tilde{\rvy}= \frac{\int p_r(\rvx|{{\rvy}})({\tilde{\rvx}-\tilde{\rvy}})p({\rvy}) d{\rvy}}{p_r(\rvx)}\numberthis \label{eq:minimizer-D}
\end{align*}
The choice of perturbation kernel is
\begin{align*}
    p_r(\rvx|\rvy) \propto \frac{1}{\|\tx - \tilde{\rvy}\|^{N+D}} =  \frac{1}{({\|\rvx-{\rvy}\|_2^2+ r^2})^\frac{N+D}{2}}
\end{align*}
By substituting the perturbation kernel in \Eqref{eq:minimizer-D}, we have:
\begin{align*}
    f^*_{\theta}(\tilde{\rvx})  &=\frac{\int \frac{{\tilde{\rvx}-\tilde{\rvy}}}{({\|\rvx-{\rvy}\|_2^2+ r^2})^\frac{N+D}{2}} p({\rvy}) d{\rvy}}{p_r(\rvx)}\\
    &= \frac{\int \frac{{\tilde{\rvx}-\tilde{\rvy}}}{{\|\tx-\tilde{\rvy}\|_2}^{N+D}} p({\rvy}) d{\rvy}}{p_r(\rvx)}\\
    &= ({S_{N+D-1}(1)}/{p_r(\rvx)})\mat{E}(\tx)
\end{align*}
\end{proof}

\subsection{Proof of Theorem~\ref{thm:inf-D-field}}
\label{app:proof-thmfield}
\thmfield*
\begin{proof}
The $\rvx$ component in the Poisson field can be re-expressed as
\begin{align*}
    \mat{E}({\tx})_{\rvx} &= \frac{1}{S_{N+D-1}(1)}\int \frac{{\mat{x}}-{\rvy}}{\|\tilde{\mat{x}}-\tilde{\rvy}\|^{N+D}}{p}({\rvy}) d {\rvy} \\
    &\propto \int p_r(\rvx|\rvy)(\rvx-\rvy)p(\rvy)d\rvy
\end{align*}
 where the perturbation kernel $p_r({\rvx}|\rvy)  \propto  {1}/{({\|\rvx-{\rvy}\|_2^2+ r^2})^\frac{N+D}{2}}$. The direction of the score can also be written down in a similar form: 
 \begin{align*}
     \nabla_\rvx \log p_{\sigma}(\rvx)=\frac{\int p_\sigma(\rvx|\rvy)\frac{\rvy-\rvx}{\sigma^2}p(\rvy)d\rvy}{p_\sigma(\rvx)}\propto \int p_\sigma(\rvx|\rvy)(\rvx-\rvy)p(\rvy)d\rvy
 \end{align*}
where $ p_{\sigma}(\rvx|\rvy) \propto \exp-\frac{\|\rvx-{\rvy}\|_2^2}{ 2\sigma^2}$. Since $p\in \gC^1$, and obviously $p_r(\rvx|\rvy) \in C^1$, then $\lim_{D\to \infty}\int p_r(\rvx|\rvy)(\rvx-\rvy)p(\rvy)d\rvy=\int \lim_{D\to \infty}p_r(\rvx|\rvy)(\rvx-\rvy)p(\rvy)d\rvy$. It suffices to prove that the perturbation kernel $p_r({\rvx}|\rvy)$ point-wisely converge to the Gaussian kernel $p_{\sigma}(\rvx|\rvy)$, \ie $\lim_{D\to \infty}p_r({\rvx}|\rvy) = p_\sigma({\rvx}|\rvy) $, to ensure $\mat{E}({\mat{x}})_{\rvx} \propto \nabla_\rvx \log p_{\sigma}(\rvx)$. Given $\forall \rvx,\rvy \in \mathbb{R}^{N}$, 
\begin{align*}
    \lim_{D\to \infty}p_r({\rvx}|\rvy) &\propto \lim_{D\to \infty}\frac{1}{({\|\rvx-\rvy\|_2^2+ r^2})^\frac{N+D}{2}}\\
    &=\lim_{D\to \infty}{({\|\rvx-\rvy\|_2^2+ r^2})^{-\frac{N+D}{2}}}\\
    &\propto\lim_{D\to \infty}{({1+\frac{\|\rvx-\rvy\|_2^2}{ r^2}})^{-\frac{N+D}{2}}}\\
    &=\lim_{D\to \infty}{({1+\frac{\|\rvx-\rvy\|_2^2}{ D\sigma^2}})^{-\frac{N+D}{2}}}\qquad \tag{$r=\sigma\sqrt{D}$}\\
    &= \lim_{D\to \infty}\exp\left(-\frac{N+D}{2} {\rm ln}(1+\frac{\|\rvx-\rvy\|_2^2}{ D\sigma^2}) \right)\\
    &= \lim_{D\to \infty}\exp\left(-\frac{N+D}{2} \frac{\|\rvx-\rvy\|_2^2}{ D\sigma^2} \right) \qquad \tag{ $\lim_{D\to \infty}\frac{\|\rvx-\rvy\|_2^2}{D\sigma^2}= 0 $}\\
   &= \exp-\frac{\|\rvx-\rvy\|_2^2}{ 2\sigma^2}\\
   &\propto  p_\sigma({\rvx}|\rvy) 
\end{align*}    
Hence $\lim_{D\to \infty}p_r({\rvx}|\rvy)=p_\sigma({\rvx}|\rvy)$, and we establish that $\mat{E}(\tx)_{\rvx} \propto \nabla_\rvx \log p_{\sigma}(\rvx)$. We can rewrite the drift term in the PFGM++ ODE as 
\begin{align*}
\lim_{\substack{D\to \infty\\r=\sigma\sqrt{D}}}\sqrt{D}\mat{E}(\tx)_{\rvx} /E(\tx)_{r}&=\lim_{\substack{D\to \infty\\r=\sigma\sqrt{D}}}\frac{\sqrt{D}\int p_r(\rvx|\rvy)(\rvx-\rvy)p(\rvy)d\rvy}{\int p_r(\rvx|\rvy)(-r)p(\rvy)d\rvy}\\
&=\lim_{\substack{D\to \infty\\r=\sigma\sqrt{D}}}\frac{\sqrt{D}\int p_r(\rvx|\rvy)(\rvy-\rvx)p(\rvy)d\rvy}{rp_r(\rvx)}\\
&= \lim_{\substack{D\to \infty\\r=\sigma\sqrt{D}}}\frac{\sqrt{D}\int p_\sigma(\rvx|\rvy)(\rvy-\rvx)p(\rvy)d\rvy}{rp_\sigma(\rvx)}\\
&= {\sigma \nabla_\rvx\log p_{\sigma}(\rvx)}\qquad\qquad \text{($\nabla_\rvx \log p_{\sigma}(\rvx)=\frac{\int p_\sigma(\rvx|\rvy)\frac{\rvy-\rvx}{\sigma^2}p(\rvy)d\rvy}{p_\sigma(\rvx)}$)}\numberthis \label{eq:EEinv}
\end{align*}
which establishes the first part of the theorem. For the second part, by the change-of-variable $d\sigma = dr/\sqrt{D}$, the PFGM++ ODE is 
\begin{align*}
    \lim_{\substack{D\to \infty\\r=\sigma\sqrt{D}}}\frac{\mathrm{d}\rvx}{\mathrm{d}\sigma} &= \frac{\mathrm{d}\rvx}{\mathrm{d}r} \cdot \frac{\mathrm{d}r}{\mathrm{d}\sigma}\\
    &=\lim_{\substack{D\to \infty\\r=\sigma\sqrt{D}}}\mat{E}(\tx)_{\rvx} \cdot E(\tx)_{r}^{-1} \cdot \sqrt{D}\\
    &= \lim_{\substack{D\to \infty\\r=\sigma\sqrt{D}}}\frac{\sigma \nabla_\rvx\log p_{\sigma}(\rvx)}{\sqrt{D}}\cdot \sqrt{D}\qquad \text{(by \Eqref{eq:EEinv})}\\
    &= {\sigma \nabla_\rvx\log p_{\sigma}(\rvx)}
\end{align*}
which is equivalent to the diffusion ODE.
\end{proof}

\subsection{Proof of Proposition~\ref{prop:obj}}
\label{app:proof-propobj}
\propobj*
\begin{proof}
    For $\forall \rvx \in \mathbb{R}^N$, the minimizer in PFGM++ objective~(\Eqref{eq:pfgmpp-obj}) at point $\tx=(\rvx, r)$ is
\begin{align*}
    f^*_{\theta,\textrm{PFGM++}}(\tilde{\rvx}) &= \lim_{\substack{D\to \infty\\r=\sigma\sqrt{D}}}\frac{\int p_r(\rvx|{{\rvy}})\frac{{{\rvx}-{\rvy}}}{r/\sqrt{D}}p({\rvy}) d{\rvy}}{p_r(\rvx)}\\
    &= \lim_{\substack{D\to \infty\\r=\sigma\sqrt{D}}}\frac{\int p_\sigma(\rvx|{{\rvy}})\frac{{{\rvx}-{\rvy}}}{r/\sqrt{D}}p({\rvy}) d{\rvy}}{p_\sigma(\rvx)} \qquad \tag{By Theorem~\ref{thm:inf-D-field}, $\lim_{D\to \infty}p_r(\rvx|{{\rvy}})=p_\sigma(\rvx|{{\rvy}})$}\\
    &= \frac{\int p_\sigma(\rvx|{{\rvy}})\frac{{{\rvx}-{\rvy}}}{\sigma}p({\rvy}) d{\rvy}}{p_\sigma(\rvx)}\numberthis \label{eq:minimizer-pfgmpp}
\end{align*}

On the other hand, the minimizer in denoising score matching at point $\rvx$ in noise level $\sigma=r/\sqrt{N+D}$ is
\begin{align*}
    f^*_{\theta,\textrm{DSM}}({\rvx},\sigma) = \frac{\int p_\sigma(\rvx|{{\rvy}})\frac{{{\rvx}-{\rvy}}}{\sigma}p({\rvy}) d{\rvy}}{p_\sigma(\rvx)}\numberthis \label{eq:minimizer-diffusion}
\end{align*}

Combining \Eqref{eq:minimizer-pfgmpp} and \Eqref{eq:minimizer-diffusion}, we have 
\begin{align*}
\lim_{\substack{D\to \infty\\r=\sigma\sqrt{D}}}f^*_{\theta,\textrm{PFGM++}}(\rvx, \sigma\sqrt{N+D}) =  f^*_{\theta,\textrm{DSM}}(\rvx, \sigma)
\end{align*}

\end{proof}

\section{Practical Sampling Procedures of Perturbation Kernel and Prior Distribution}
\label{app:sample-prior}
In this section, we discuss how to simple from the perturbation kernel $p_{r}(\rvx|{\rvy}) \propto {1}/{({\|\rvx-{\rvy}\|_2^2+ r^2})^\frac{N+D}{2}}$ in practice. We first decompose $p_r(\cdot|\rvy)$ in hyperspherical coordinates to $\gU_{\psi}(\psi)p_r(R)$, where $\gU_{\psi}$ is the uniform distribution over the angle component and the distribution of the perturbed radius $R=\|\rvx - \rvy\|_2$ is
\begin{align*}
    p_r(R) \propto \frac{R^{N-1}}{({R^2+ r^2})^\frac{N+D}{2}} \numberthis \label{eq:propto}
\end{align*}
The sampling procedure of the radius distribution encompasses three steps:
% \begin{align*}
%     R_1 \sim \textrm{Beta}(\alpha, \beta)
% \end{align*}
\begin{align*}
    &R_1 \sim \textrm{Beta}(\alpha=\frac{N}{2}, \beta=\frac{D}{2})\\
    &R_2=\frac{R_1}{1-R_1}\\
    &R_3= \sqrt{r^2 R_2}
\end{align*}
Next, we prove that $p(R_3) = p_r(R_3)$. Note that the pdf of the inverse beta distribution is
\begin{align*}
    p(R_2) \propto R_2^{\frac{N}{2}-1}(1+R_2)^{-\frac{N+D}{2}}
\end{align*}
By change-of-variable, the pdf of $R_3=\sqrt{r_{max}^2 R_2}$ is
\begin{align*}
    p(R_3) &\propto R_2^{{\frac{N}{2}-1}}(1+R_2)^{-\frac{N}{2}-\frac{D}{2}}*\frac{2R_3}{r_{max}^2}\\
    &\propto\frac{R_3R_2^{\frac{N}{2}-1}}{(1+R_2)^{\frac{N+D}{2}}}\\
    &= \frac{(R_3/r)^{N-1}}{(1+(R_3^2/r^2))^{\frac{N+D}{2}}}\\
    &\propto  \frac{R_3^{N-1}}{(1+(R_3^2/r^2))^{\frac{N+D}{2}}}\\
    &\propto  \frac{R_3^{N-1}}{(r^2+R_3^2)^{\frac{N+D}{2}}} \propto p_{r}(R_3)\qquad \textrm{(By \Eqref{eq:propto})}
\end{align*}

Note that $R_1$ has mean $\frac{N}{N+D}$ and variance $O(\frac{ND}{(N+D)^3})$. Hence when $D=O(N)$,  $p_{r}(R)$ would highly concentrate on a specific value, resolving the heavy-tailed problem. We can sample the uniform angel component by $\rvu = \rvw/\|\rvw\|, \rvw \sim \gN(\mathbf{0}, \mI_{N \times N})$. Together, sampling from the perturbation kernel $p_{r}(\rvx|{\rvy}) $ is equivalent to setting $\rvx = \rvy + R_3\rvu$. On the other hand, the prior distribution is 
\begin{align*}
    p_{\rmax}(\rvx) \propto \lim_{\rmax\to \infty} \int {\rmax^D}/{\|\tilde{\mat{x}}-\tilde{\rvy}\|^{N+D}}p(\rvy)d\rvy= \lim_{\rmax\to \infty}{\rmax^D}/{(\|\rvx\|^2+\rmax^2)^\frac{N+D}{2}}
\end{align*}
We observe that $p_{\rmax}(\rvx)$ the same as the perturbation kernel $p_{\rmax}(\rvx|{\rvy}=\bf{0})$. Hence we can sample from the prior following $\rvx = R_3\rvu$ with $R_3, \rvu$ defined above and $r=\rmax$.

\begin{comment}

%     Score field 
% \begin{equation}
%     \mat{s} = \sum_i {\rm exp}(-\frac{|\mat{x}-\mat{x}_i|^2}{2\sigma^2})\frac{\mat{x}-\mat{x}_i}{\sigma^2},
% \end{equation}
% Poisson field
% \begin{equation}
%     \mat{E}_\mat{x} = \sum_i\frac{\mat{x}-\mat{x}_i}{(|\mat{x}-\mat{x}_i|^2+|\mat{z}|^2)^{\frac{N+D}{2}}}
% \end{equation}
% Note that
% \begin{equation}
%     (|\mat{x}-\mat{x}_i|^2+|\mat{z}|^2)^{\frac{N+D}{2}} = |\mat{z}|^{N+D}(1+\frac{|\mat{x}-\mat{x}_i|^2}{|\mat{z}|^2})^{\frac{N+D}{2}}\to |\mat{z}|^{N+D}{\rm exp}(\frac{(N+D)|\mat{x}-\mat{x}_i|^2}{2|\mat{z}|^2}),
% \end{equation}
% where $\to$ means $N+D\to +\infty$. In this limit, 
% \begin{equation}
%     \mat{E}_\mat{x} \to  \sum_i{\rm exp}(-\frac{(N+D)|\mat{x}-\mat{x}_i|^2}{2|\mat{z}|^2})\frac{\mat{x}-\mat{x}_i}{|\mat{z}|^{N+D}}
% \end{equation}

% Setting $|\mat{z}|^2=(N+D)\sigma^2$ makes Eq.(10) equal to Eq.(7) except for a constant factor, but this is fine since Poisson field can be rescaled without affecting the trajectory.

% One concern is that the limit is valid only when 
% \begin{equation}
%     \sqrt{\frac{N+D}{2}}\frac{|\mat{x}-\mat{x}_i|}{|\mat{z}|} \to {\rm finite\ number},
% \end{equation}
% which is not true for near-field. One possible solution is instead to prove the relative weight is equivalent:
% \begin{equation}
%     \frac{{\rm exp}(-\frac{|\mat{x}-\mat{x}_j|^2}{2\sigma^2})}{\sum_i {\rm exp}(-\frac{|\mat{x}-\mat{x}_i|^2}{2\sigma^2})}(\mat{x}-\mat{x}_j) \to \frac{\frac{1}{(|\mat{x}-\mat{x}_j|^2+|\mat{z}|^2)^{\frac{N+D}{2}}}}{\sum_i\frac{1}{(|\mat{x}-\mat{x}_i|^2+|\mat{z}|^2)^{\frac{N+D}{2}}}}(\mat{x}-\mat{x}_j)
% \end{equation}
% In the near-field, both cases are winner-take-all.

\end{comment}

\section{$r=\sigma\sqrt{D}$ for Phase Alignment}
\subsection{Analysis}
\label{app:phase-align}

In this section, we examine the phase of intermediate marginal distribution $p_r$ under different $D$s to derive an alignment method for hyper-parameters. Consider a $N$-dimensional dataset $\gD$ in which the average distance to the nearest neighbor is about $l$. We consider an arbitrary datapoint $\rvx_1 \in \gD$ and denote its nearest neighbor as $\rvx_2$. We assume $\|\rvx_1 - \rvx_2\|_2=l$, and uniform prior on $\gD$.

To characterize the phases of $p_r, \forall r>0$, we study the perturbation point $\rvy \sim p_r(\rvy|\rvx_1)$. According to Appendix~\ref{app:sample-prior}, the distance $\|\rvx_1-\rvy\|$ is roughly $r\sqrt{\frac{N}{D-1}}$. Since $p_r(\rvy|\rvx_1)$ is isotropic, with high probability, the two vectors $\rvy-\rvx_1, \rvx_2-\rvx_1$ are approximately orthogonal. In particular, the vector product $(\rvy - \rvx_1)^T(\rvx_1 - \rvx_2)= O(\frac{1}{\sqrt{N}}\|\rvy - \rvx_1\|\|\rvx_1 - \rvx_2\|)=O(\frac{rl}{\sqrt{D}})$ w.h.p. It reveals that $\|\rvy-\rvx_2\| = \sqrt{l^2+r^2\frac{N}{D-1}+O(\frac{rl}{\sqrt{D}})}$. \Figref{fig:align-analysis} depicts the relative positions of $\rvx_1, \rvx_2$ and the perturbed point $\rvy$.

\begin{figure*}[t]
\centering    \includegraphics[width=0.8\textwidth]{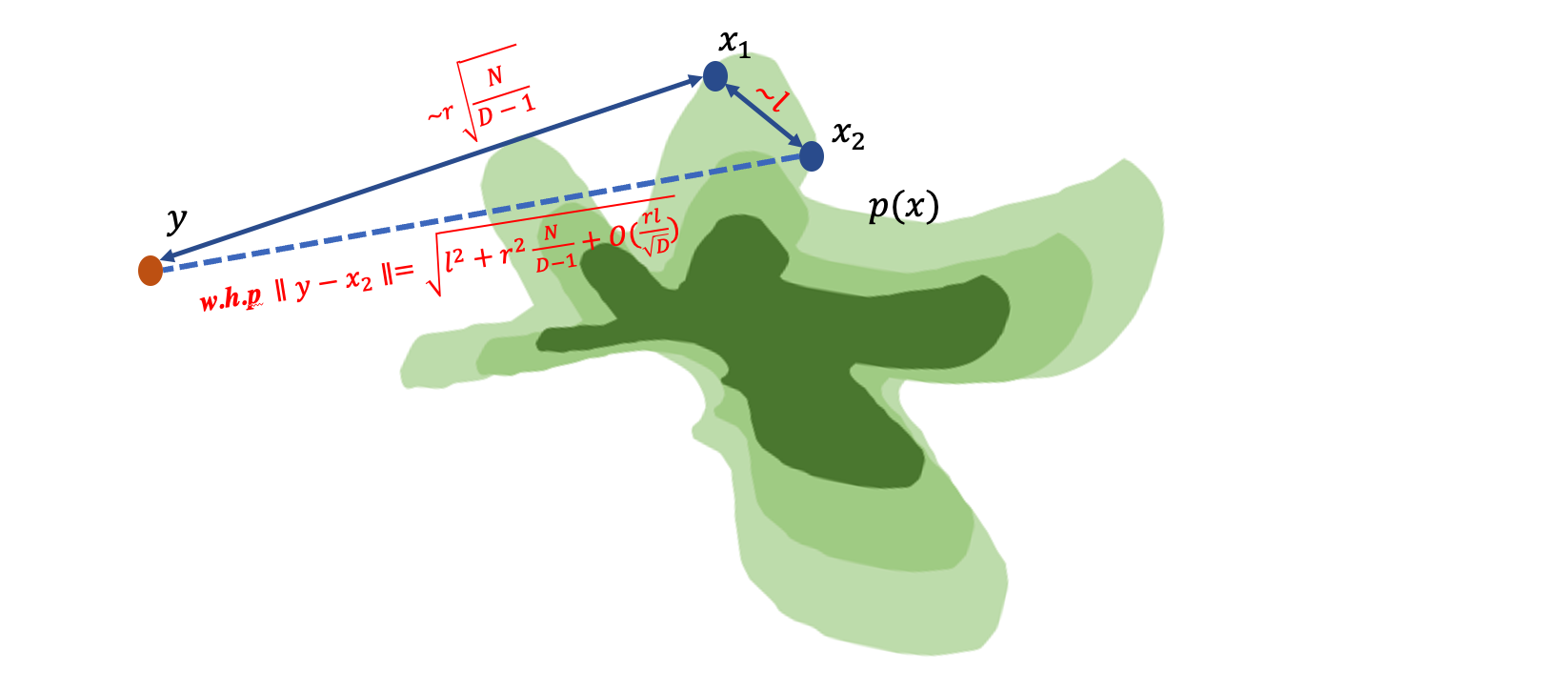}
    \caption{Illustration of the phase alignment analysis}
    \label{fig:align-analysis}
\end{figure*}

The ratio of the posterior of the $\rvx_2$ and $\rvx_1$  --- $\frac{p_r(\rvx_2|\rvy)}{p_r(\rvx_1|\rvy)}$ ---  is an indicator of different phases of field~\cite{Xu2023StableTF}: point in the nearer field tends to have a smaller ratio. Indeed, the ratio would gradually decay from $1$ to $0$ when moving from the far to the near field.  We can calculate the ratio of the coefficients after approximating the distance $\|\rvy-\rvx_2\|$:
\begin{align*}
    \frac{p_r(\rvx_2|\rvy)}{p_r(\rvx_1|\rvy)}=\frac{p_r(\rvy|\rvx_2)}{p_r(\rvy|\rvx_1)} &=  \left(\frac{l^2+r^2\frac{N}{D-1}+O(\frac{rl}{\sqrt{D}})+r^2}{r^2\frac{N}{D-1}+r^2}\right)^{\frac{N+D}{2}}\\
    &=\left(1 + \frac{l^2+O(\frac{rl}{\sqrt{D}})}{r^2\frac{N}{D-1}+r^2}\right)^{\frac{N+D}{2}}\\
    & = \exp{\left({\rm ln}(1+\frac{l^2+O(\frac{rl}{\sqrt{D}})}{r^2\frac{N}{D-1}+r^2})\cdot \frac{N+D}{2}\right)}\\
    & \approx \exp{\left(\frac{l^2+O(\frac{rl}{\sqrt{D}})}{r^2\frac{N}{D-1}+r^2} \cdot\frac{N+D}{2}\right)}\\
    &= \exp{\left(\frac{l^2+O(\frac{rl}{\sqrt{D}})}{r^2} \cdot\frac{N+D}{2(N+D-1)}\cdot(D-1)\right)}\\
    &\approx \exp{\left(\frac{l^2+O(\frac{rl}{\sqrt{D}})}{r^2}\cdot D\right)}\numberthis \label{eq:ratio}
\end{align*}

Hence the relation $r\propto \sqrt{D}$ should hold to keep the ratio invariant of the parameter $D$. On the other hand, by Theorem~\ref{thm:inf-D-field} we know that $p_{\sigma}$ is equivalent to $p_{r=\sigma\sqrt{D}}$ when $D\to \infty$. To achieve phase alignment on the dataset, one should roughly set $r=\sigma\sqrt{D}$.

\subsection{Practical Hyperparameter Transfer from Diffusion Models}
\label{app:transfer-diff}

\subsubsection{Transfer EDM training and sampling}

We list out and compare the EDM training algorithm~(Alg~\ref{algorithm-edm}) and the PFGM++ with transferred hyper-parameters~(Alg~\ref{algorithm-edm-pfgmpp}). The major modification is to replace the Gaussian noise $\rvn_i\sim \gN(0, \sigma^2 \mI)$ with the additive noise $R_i \rvv_i\sim \gU_{\psi}(\psi)p_r(R)$, where $r=\sigma\sqrt{D}$. We highlight the major modifications in \textcolor{blue}{blue}.

We also show the sampling algorithms of EDM~(Alg~\ref{algorithm-edm-sample}) and PFGM++~(Alg~\ref{algorithm-edm-pfgmpp-sample}). Note that we only change the prior sampling process while the for-loop is identical for both algorithms, since EDM~\citep{Karras2022ElucidatingTD} sets $\sigma=t$, and $\frac{\mathrm{d}\rvx}{\mathrm{d}r}=\frac{\rvx-f_{\theta}(\rvx, r)}{r}=\frac{\rvx-f_{\theta}(\rvx, r)}{\sigma\sqrt{D}}=\frac{\mathrm{d}\rvx}{\sqrt{D}\mathrm{d}\sigma}=\frac{\mathrm{d}\rvx}{\mathrm{d}\sigma}\frac{\mathrm{d}\sigma}{\mathrm{d}r}=\frac{\mathrm{d}\rvx}{\mathrm{d}\sigma}=\frac{\mathrm{d}\rvx}{\mathrm{d}t}$. Thus we can use the original samplers of EDM without further modification.

\begin{minipage}{0.46\textwidth}
\vspace{-46pt}
\begin{algorithm}[H]
    \centering
    \caption{EDM training}\label{algorithm-edm}
    \begin{algorithmic}[1]
        \STATE Sample a batch of data $\{\rvy_i\}_{i=1}^\gB$ from $p(\rvy)$
        \STATE Sample standard deviations $\{\sigma_i\}_{i=1}^\gB$ from $p(\sigma)$ 
        \STATE Sample noise vectors $\{\rvn_i \sim \gN(0, \sigma_i^2 \mI)\}_{i=1}^\gB$
        \STATE Get perturbed data $\{\hat{\rvy}_i = \rvy_i + \rvn_i\}_{i=1}^\gB$
        \STATE Calculate loss $\ell(\theta) =  \sum_{i=1}^\gB \lambda (\sigma_i)\|f_\theta(\hat{\rvy}_i, \sigma_i)-\rvy_i\|_2^2$
        \STATE Update the network parameter $\theta$ via Adam optimizer
    \end{algorithmic}
\end{algorithm}
\end{minipage}
\hfill
\begin{minipage}{0.50\textwidth}
\begin{algorithm}[H]
    \centering
    \caption{PFGM++ training with hyperparameter transferred from EDM}\label{algorithm-edm-pfgmpp}
    \begin{algorithmic}[1]
        \STATE Sample a batch of data $\{\rvy_i\}_{i=1}^\gB$ from $p(\rvy)$
        \STATE Sample standard deviations $\{\sigma_i\}_{i=1}^\gB$ from $p(\sigma)$ 
        \STATE \textcolor{blue}{Sample $r$ from $p_r$: $\{r_i = \sigma_i\sqrt{D}\}_{i=1}^\gB$}
        \STATE  \textcolor{blue}{Sample radiuses $\{R_i \sim p_{r_i}(R)\}_{i=1}^\gB$}
        \STATE  \textcolor{blue}{Sample uniform angles $\{\rvv_i =\frac{\rvu_i}{\|\rvu_i\|_2}\}_{i=1}^\gB$, with $\rvu_i \sim \gN(\bf{0}, \mI)$}
        \STATE \textcolor{blue}{Get perturbed data $\{\hat{\rvy}_i = \rvy_i + R_i\rvv_i\}_{i=1}^\gB$}
        \STATE {Calculate loss $\ell(\theta) =  \sum_{i=1}^\gB \lambda (\sigma_i)\|f_\theta(\hat{\rvy}_i, \sigma_i)-\rvy_i\|_2^2$}
        \STATE Update the network parameter $\theta$ via Adam optimizer
    \end{algorithmic}
\end{algorithm}
\end{minipage}

\begin{minipage}{0.46\textwidth}
\vspace{-47pt}
\begin{algorithm}[H]
    \centering
    \caption{EDM sampling~(Heun’s $2^{\textrm{nd}}$ order method)}\label{algorithm-edm-sample}
    \begin{algorithmic}[1]
\STATE $\rvx_0 \sim \gN(\bm{0}, \sigma_{\textrm{max}}^2\mI)$
\FOR{$i=0, \dots, T-1$}
\STATE $\rvd_i = ({\rvx_i -f_{\theta}(\rvx_i,t_i)})/{t_i}$
\STATE $\rvx_{i+1} =\rvx_i + (t_{i+1}-t_i)\rvd_i$
\IF{$t_{i+1}>0$}
\STATE $\rvd_i' = ({\rvx_{i+1} -f_{\theta}(\rvx_{i+1},t_{i+1})})/{t_{i+1}}$
\STATE $\rvx_{i+1}=\rvx_i + (t_{i+1}-t_i)(\frac12\rvd_i+\frac12\rvd_i')$
\ENDIF
\ENDFOR
    \end{algorithmic}
\end{algorithm}
\end{minipage}
\hfill
\begin{minipage}{0.50\textwidth}
\begin{algorithm}[H]
    \centering
    \caption{PFGM++ training with hyperparameter transferred from EDM}\label{algorithm-edm-pfgmpp-sample}
    \begin{algorithmic}[1]
 \STATE \textcolor{blue}{Set  $\rmax = \sigma_{\textrm{max}}\sqrt{D}$}
        \STATE  \textcolor{blue}{Sample radius $R \sim p_{\rmax}(R)$
       and uniform angle $\rvv =\frac{\rvu}{\|\rvu\|_2}$, with $\rvu \sim \gN(\bf{0}, \mI)$}
        \STATE \textcolor{blue}{Get initial data $\rvx_0 = R\rvv$}
\FOR{$i=0, \dots, T-1$}
\STATE $\rvd_i = ({\rvx_i -f_{\theta}(\rvx_i,t_i)})/{t_i}$
\STATE $\rvx_{i+1} =\rvx_i + (t_{i+1}-t_i)\rvd_i$
\IF{$t_{i+1}>0$}
\STATE $\rvd_i' = ({\rvx_{i+1} -f_{\theta}(\rvx_{i+1},t_{i+1})})/{t_{i+1}}$
\STATE $\rvx_{i+1}=\rvx_i + (t_{i+1}-t_i)(\frac12\rvd_i+\frac12\rvd_i')$
\ENDIF
\ENDFOR
    \end{algorithmic}
\end{algorithm}
\end{minipage}

\subsubsection{Transfer DDPM~(continuous) training and sampling}

Here we demonstrate the ``zero-shot" transfer of hyperparameters from DDPM to PFGM++, using the $r=\sigma\sqrt{D}$ formula. We highlight the modifications in \textcolor{blue}{blue}. In particular, we list the DDPM training/sampling algorithms~(Alg~\ref{algorithm-ddpm}/Alg~\ref{algorithm-ddpm-sample}), and their counterparts in PFGM++~(Alg~\ref{algorithm-ddpm-pfgmpp}/Alg~\ref{algorithm-ddpm-pfgmpp-sample}) for comparions. Let $\beta_T$ and $\beta_1$ be the maximum/minimum values of $\beta$ in DDPM~\cite{Ho2020DenoisingDP}. Similar to \citet{Song2021ScoreBasedGM}, we denote $\alpha_{t}=e^{-\frac12t^2(\bar{\beta}_{\textrm{max}}-\bar{\beta}_{\textrm{min}}) - t\bar{\beta}_{\textrm{min}}}$, with $\bar{\beta}_{\textrm{max}}=\beta_T  \cdot T$ and $\bar{\beta}_{\textrm{min}}=\beta_1\cdot T$. For example, on CIFAR-10, $\bar{\beta}_{\textrm{min}}=1e-1$ and $\bar{\beta}_{\textrm{max}}=20$ with $T=1000$. We would like to note that the $t_i$s in the sampling algorithms~(Alg~\ref{algorithm-ddpm-sample} and Alg~\ref{algorithm-ddpm-pfgmpp-sample}) monotonically decrease from $1$ to $0$ as $i$ increases.

\begin{minipage}{0.46\textwidth}
\vspace{-69pt}
\begin{algorithm}[H]
    \centering
    \caption{DDPM training}\label{algorithm-ddpm}
    \begin{algorithmic}[1]
        \STATE Sample a batch of data $\{\rvy_i\}_{i=1}^\gB$ from $p(\rvy)$
        \STATE Sample time $\{t_i{=}t_i'/T\}_{i=1}^\gB$ with $t_i'{\sim} \gU(\{1, \dots, T\})$ 
        \STATE Get perturbed data $\{\hat{\rvy}_i = \sqrt{\alpha_{t_i}}\rvy_i +\sqrt{1-\alpha_{t_i}}\bm{\epsilon}_i\}_{i=1}^\gB$,
        where $\bm{\epsilon}_i \sim \gN(\bf{0}, \mI)$
        \STATE Calculate loss $\ell(\theta) =  \sum_{i= 1}^\gB \lambda (t_i)\|f_\theta(\hat{\rvy}_i, t_i)-\bm{\epsilon}_i\|_2^2$
        \STATE Update the network parameter $\theta$ via Adam optimizer
    \end{algorithmic}
\end{algorithm}
\end{minipage}
\hfill
\begin{minipage}{0.50\textwidth}
\begin{algorithm}[H]
    \centering
    \caption{PFGM++ training with hyperparameter transferred from DDPM}\label{algorithm-ddpm-pfgmpp}
    \begin{algorithmic}[1]
        \STATE Sample a batch of data $\{\rvy_i\}_{i=1}^\gB$ from $p(\rvy)$
       \STATE Sample time $\{t_i\}_{i=1}^\gB$ from $\gU[0,1]$
       \STATE \textcolor{blue}{Get $\sigma_{i}$ from $t_i$: $\{\sigma_{i}=\sqrt{\frac{1-\alpha_{t_i}}{\alpha_{t_i}}}\}$}
        \STATE \textcolor{blue}{Sample $r$ from $p_r$: $\{r_i = \sigma_i\sqrt{D}\}_{i=1}^\gB$}
        \STATE  \textcolor{blue}{Sample radiuses $\{R_i \sim p_{r_i}(R)\}_{i=1}^\gB$}
        \STATE  \textcolor{blue}{Sample uniform angles $\{\rvv_i =\frac{\rvu_i}{\|\rvu_i\|_2}\}_{i=1}^\gB$, with $\rvu_i \sim \gN(\bf{0}, \mI)$}
        \STATE \textcolor{blue}{Get perturbed data $\{\hat{\rvy}_i = \sqrt{\alpha_{t_i}}(\rvy_i + R_i\rvv_i)\}_{i=1}^\gB$}
        \STATE \textcolor{blue}{Calculate loss $\ell(\theta) =  \sum_{i=1}^\gB \lambda (t_i)\|f_\theta(\hat{\rvy}_i, t_i)-\frac{\sqrt{D}R_i\rvv_i}{r}\|_2^2$}
        \STATE Update the network parameter $\theta$ via Adam optimizer
    \end{algorithmic}
\end{algorithm}
\end{minipage}

\begin{minipage}{0.46\textwidth}
\vspace{-44pt}
\begin{algorithm}[H]
    \centering
    \caption{DDIM sampling}\label{algorithm-ddpm-sample}
    \begin{algorithmic}[1]
\STATE $\rvx_T \sim \gN(\bm{0}, \mI)$
\FOR{$i=T, \dots, 1$}
\STATE $\rvx_{i-1} =\sqrt{\frac{\alpha_{t_{i-1}}}{\alpha_{t_i}}}\rvx_i $\\
$ \qquad +(\sqrt{1-\alpha_{t_{i-1}}}{-}\sqrt{\frac{\alpha_{t_{i-1}}}{\alpha_{t_i}}}\sqrt{1-\alpha_{t_i}})f_{\theta}(\rvx_i, t_i)$
\ENDFOR
    \end{algorithmic}
\end{algorithm}
\end{minipage}
\hfill
\begin{minipage}{0.50\textwidth}
\begin{algorithm}[H]
    \centering
    \caption{PFGM++ sampling transferred from DDIM}\label{algorithm-ddpm-pfgmpp-sample}
    \begin{algorithmic}[1]
     \STATE \textcolor{blue}{Set $\sigma_{\textrm{max}}=\sqrt{\frac{1-\alpha_{1}}{\alpha_{1}}}, \rmax = \sigma_{\textrm{max}}\sqrt{D}$}
        \STATE  \textcolor{blue}{Sample radius $R \sim p_{\rmax}(R)$
       and  uniform angle $\rvv =\frac{\rvu}{\|\rvu\|_2}$, with $\rvu \sim \gN(\bf{0}, \mI)$}
        \STATE \textcolor{blue}{Get initial data $\rvx_T = \sqrt{\alpha_1}R\rvv$}
\FOR{$i=T, \dots, 1$}
\STATE $\rvx_{i-1} =\sqrt{\frac{\alpha_{t_{i-1}}}{\alpha_{t_i}}}\rvx_i $\\
$ \qquad +(\sqrt{1-\alpha_{t_{i-1}}}{-}\sqrt{\frac{\alpha_{t_{i-1}}}{\alpha_{t_i}}}\sqrt{1-\alpha_{t_i}})f_{\theta}(\rvx_i, t_i)$
\ENDFOR
    \end{algorithmic}
\end{algorithm}
\end{minipage}

% \subsection{Empirical comparison}

% \begin{table*}[h]
% % \parbox{.45\linewidth}{
%     \small
%     \centering
%     \caption{CIFAR-10 FID scores for different alignments.}
%     \begin{tabular}{l c c }
%     \toprule
%          & $r=\sigma\sqrt{D}$& $r=\sigma\sqrt{N+D}$ \\
%          \midrule
%         $D=2048$ & 1.91 & 2.17\\
%         \bottomrule
%     \end{tabular}
%     \label{tab:align}
% \end{table*}

\section{Experimental Details}
\label{app:exp}
We show the experimental setups in section~\ref{sec:benefits}, as well as the training, sampling, and evaluation details for PFGM++. All the experiments are run on four NVIDIA A100 GPUs or eight NVIDIA V100 GPUs.

\subsection{Experiments for the Analysis in Sec~\ref{sec:benefits}}
\label{app:be-exp}
In the experiments of section~\ref{sec:diffusion} and section~\ref{sec:behavior}, we need to access the posterior $p_{0|r}(\rvy|\rvx)\propto p_r(\rvx|\rvy)p(\rvy)$ to calculate the mean TVD. We sample a large batch $\{\rvy_i\}_{i=1}^n$ with $n=1024$ on CIFAR-10 to empirically approximate the posterior:
\begin{align*}
    p_{0|r}(\rvy_i|\rvx) = \frac{p_r(\rvx|\rvy_i)p(\rvy_i)}{p_r(\rvx)}\approx \frac{p_r(\rvx|\rvy_i)}{\sum_{j=1}^n p_r(\rvx|\rvy_j)}=\frac{{1}/{({\|\rvx-{\rvy_i}\|_2^2+ r^2})^\frac{N+D}{2}}}{\sum_{j=1}^n {1}/{({\|\rvx-{\rvy_j}\|_2^2+ r^2})^\frac{N+D}{2}}}
\end{align*}

We sample a large batch of $256$ to approximate all the expectations in section~\ref{sec:benefits}, such as the average TVDs.
\subsection{Training Details}

We borrow the architectures, preconditioning techniques, optimizers, exponential moving average~(EMA) schedule, and hyper-parameters from previous state-of-the-art diffusion model EDM~\cite{Karras2022ElucidatingTD}. We apply the alignment method in section~\ref{sec:diffusion} to transfer their well-tuned hyper-parameters. 

For architecture, we use the improved NCSN++~\cite{Karras2022ElucidatingTD} for the CIFAR-10 dataset~(batch size $512$), and the improved DDPM++ for the FFHQ dataset~(batch size $256$). For optimizers, following EDM, we adopt the Adam optimizer with a learning rate of $10e-4$. We further incorporate the EMA schedule, learning rate warm-up, and data augmentations in EDM. Please refer to Appendix F in EDM paper~\cite{Karras2022ElucidatingTD} for details.

The most prominent improvements in EDM are the preconditioning and the new training distribution for $\sigma$, \ie $p(\sigma)$. Specifically, adding these two techniques to the vanilla diffusion objective~(\Eqref{eq:diffusion-obj}), their effective training objective can be written as:
\begin{align*}
    \E_{\sigma \sim p(\sigma)}\lambda(\sigma)c_{\textrm{out}}(\sigma)^2\E_{p({\rvy})}\E_{p_{\sigma}({\rvx}|{\rvy})}\left[\Big\lVert F_{\theta}({c_{\textrm{in}}(\sigma) \cdot \rvx}, c_{\textrm{noise}}(\sigma))- \frac{1}{c_{\textrm{out}}(\sigma)}(\rvy-c_{\textrm{skip}}(\sigma)\cdot \rvx)\Big\rVert_2^2\right] \numberthis \label{eq:edm-obj}
\end{align*}
with the predicted normalized score function in the vanilla diffusion objective~(\Eqref{eq:diffusion-obj}) re-parameterized as 
\begin{align*}
    f_{\theta}(\rvx, \sigma) = \frac{c_{\textrm{skip}}(\sigma)\rvx+c_{\textrm{out}}(\sigma)F_{\theta}({c_{\textrm{in}}(\sigma) \rvx}, c_{\textrm{noise}}(\sigma)) - x}{\sigma} \approx \sigma\nabla_\rvx \log p_\sigma(x)
\end{align*}
$c_{\textrm{in}}(\sigma)=1/\sqrt{\sigma^2+\sigma_{\textrm{data}}^2}, c_{\textrm{out}}(\sigma)=\sigma\cdot\sigma_{\textrm{data}}/\sqrt{\sigma^2+\sigma_{\textrm{data}}^2}, c_{\textrm{skip}}(\sigma)=\sigma_{\textrm{data}}^2/(\sigma^2+\sigma_{\textrm{data}}^2), c_{\textrm{noise}}(\sigma)=\frac14 \ln(\sigma)$, with $\sigma_{\textrm{data}}=0.5$. $\{c_{\textrm{in}}(\sigma),c_{\textrm{out}}(\sigma),c_{\textrm{skip}}(\sigma),c_{\textrm{data}},c_{\textrm{noise}}(\sigma)\}$ are all the hyper-parameters in the preconditioning. The training distribution $p(\sigma)$ is the log-normal distribution with $\ln(\sigma) \sim \gN(-1.2, {1.2}^2)$, and the loss weighting $\lambda(\sigma) = 1/c_{\textrm{out}}(\sigma)^2$. 

Recall that the hyper-parameter alignment rule $r=\sigma\sqrt{D}$ can transfer the hyper-parameter from diffusion models~($D{\to} \infty$) to finite $D$s. Hence we can directly set $\sigma=r/\sqrt{D}$ in those hyper-parameters for preconditioning. In addition, the training distribution $p(r)$ can be derived via the change-of-variable formula, \ie $p(r)=p(\sigma=r/\sqrt{D})/\sqrt{D}$. The final PFGM++ objective after incorporating these techniques into \Eqref{eq:pfgmpp-obj} is:
\begin{align*}
    \E_{r \sim p(r)}\lambda(r/\sqrt{D})c_{\textrm{out}}(r/\sqrt{D})^2\E_{p({\rvy})}\E_{p_{r}({\rvx}|{\rvy})}\left[\Big\lVert F_{\theta}({c_{\textrm{in}}(r/\sqrt{D}) \cdot \rvx}, c_{\textrm{noise}}(r/\sqrt{D}))- \frac{1}{c_{\textrm{out}}(\sigma)}(\rvy-c_{\textrm{skip}}(r/\sqrt{D})\cdot \rvx)\Big\rVert_2^2\right]
\end{align*}
with the predicted normalized electric field in the vanilla PFGM++ objective~(\Eqref{eq:pfgmpp-obj}) re-parameterized as 
\begin{align*}
    f_{\theta}(\tx) = \frac{c_{\textrm{skip}}(r/\sqrt{D})\rvx+c_{\textrm{out}}(r/\sqrt{D})F_{\theta}({c_{\textrm{in}}(r/\sqrt{D}) \rvx}, c_{\textrm{noise}}(r/\sqrt{D})) - x}{r/\sqrt{D}} \approx {\sqrt{D}}\frac{\mat{E}(\tilde{\mat{x}})_{\rvx}}{E(\tilde{\mat{x}})_{r}}
\end{align*}
\subsection{Sampling Details}

For sampling, following EDM~\cite{Karras2022ElucidatingTD}, we also use Heun's $2^{\textrm{nd}}$ method~(improved Euler method)~\cite{Ascher1998ComputerMF} as the ODE solver for $\mathrm{d}\rvx /\mathrm{d}r = \mat{E}(\tilde{\mat{x}})_\rvx /E(\tilde{\mat{x}})_{r}= f_\theta(\tx) / \sqrt{D}$.

We adopt the same parameterized scheme in EDM to determine the evaluation points during $N$-step ODE sampling:
\begin{align*}
    r_i = ({\rmax}^\frac{1}{\rho} + \frac{i}{N-1}({\rmin}^\frac{1}{\rho}-{\rmax}^\frac{1}{\rho}))^\rho \quad \textrm{and} \quad r_N=0
\end{align*}
where $\rho$ controls the relative density of evaluation points in the near field. We set $\rho=7$ as in EDM, and $\rmax=\sigma_{\textrm{max}}\sqrt{D}=80\sqrt{D}, \rmin=\sigma_{\textrm{min}}\sqrt{D}=0.002\sqrt{D}$~($\sigma_{\textrm{max}},\sigma_{\textrm{min}}$ are the hyper-parameters in EDM, controlling the starting/terminal evaluation points) following the $r=\sigma\sqrt{D}$ alignment rule.
\subsection{Evaluation Details}
\label{app:eval}
For the evaluation, we compute the Fréchet distance between 50000 generated samples and the pre-computed statistics of CIFAR-10 and FFHQ. On CIFAR-10, we follow the evaluation protocol in EDM~\cite{Karras2022ElucidatingTD}, which repeats the generation three times with different seeds for each checkpoint and reports the minimum FID score. However, we observe that the FID score has a large fluctuation across checkpoints, and the minimum FID score of EDM in our re-run experiment does not align with the original results reported in \cite{Karras2022ElucidatingTD}. \Figref{fig:ffhq} shows that the FID score could have a variation of $\pm 0.2$ during the training of a total of 200 million images~\cite{Karras2022ElucidatingTD}. To better evaluate the model performance, Table~\ref{tab:ffhq} reports the average FID over the Top-3 checkpoints instead. In \Figref{fig:ffhq-avg}, we further demonstrate the moving average of the FID score with a window of $10000$K images. It shows that $D=2048$ consistently outperforms other baselines in the same training iterations, in agreement with the results in Table~\ref{tab:ffhq}.

\begin{figure*}
\centering
    \subfigure[w/o moving average]{\includegraphics[width=0.8\textwidth]{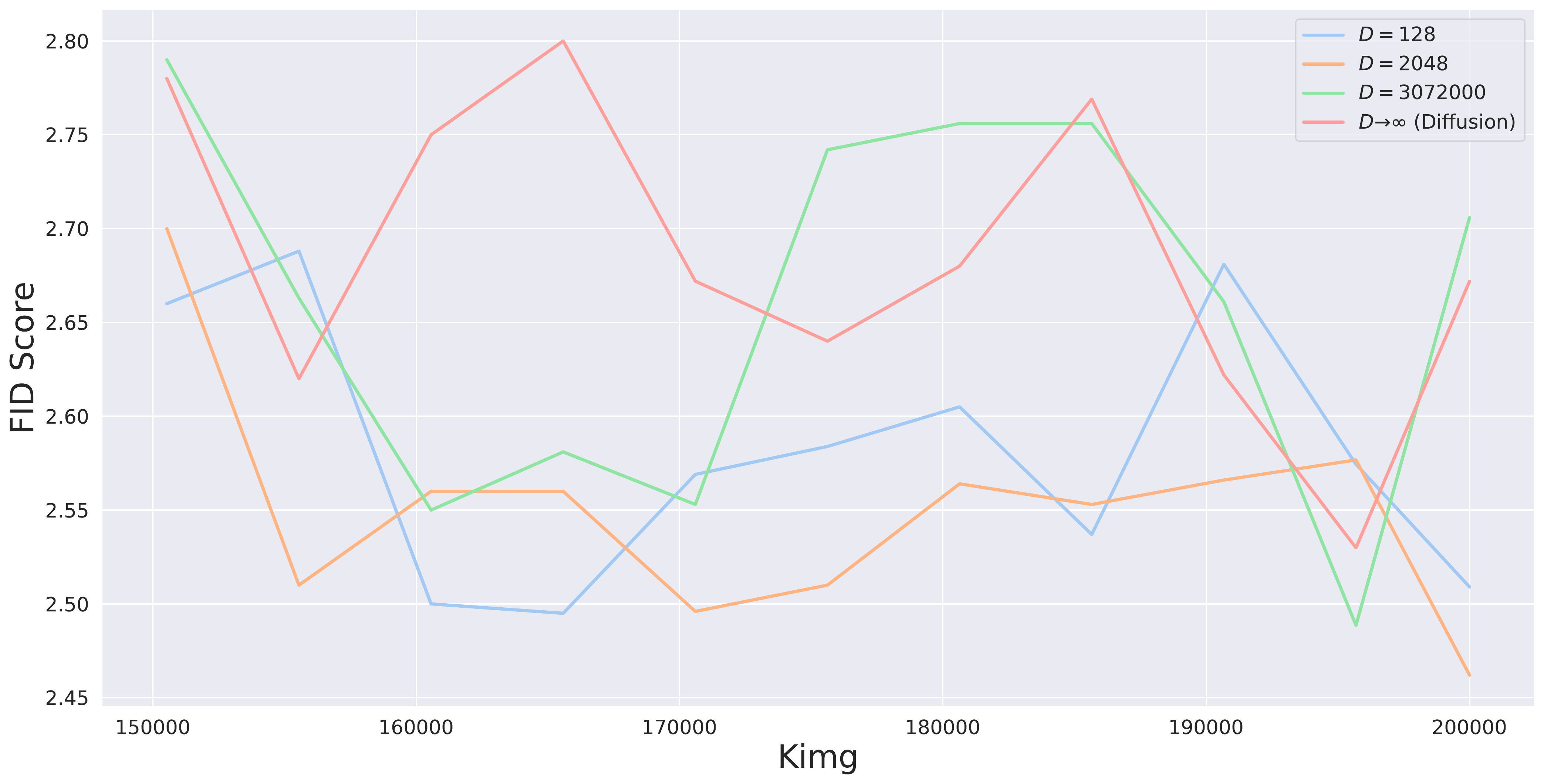}\label{fig:ffhq}}
        \subfigure[w/ moving average]{\includegraphics[width=0.8\textwidth]{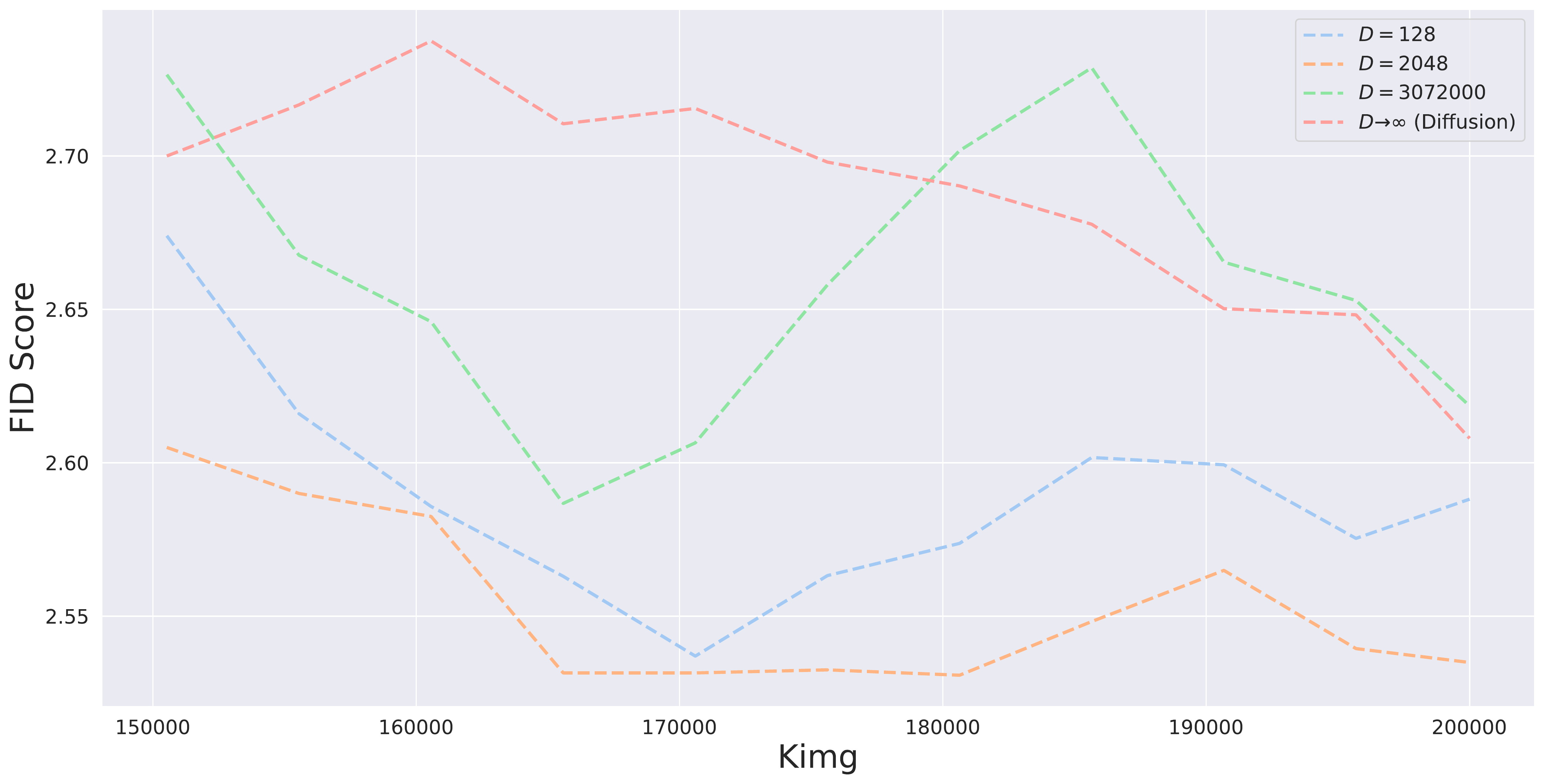}\label{fig:ffhq-avg}}
        \caption{FID score in the training course when varying $D$, \textbf{(a)} w/o and \textbf{(b)} w/ moving average.}
\end{figure*}

\subsection{Experiments for Robustness}
\label{app:robust-exp}

\paragraph{Controlled experiments with $\alpha$} In the controlled noise setting, we inject noise into the intermediate point $\rvx_r$ in each of the $35$ ODE steps by $\rvx_r=\rvx_r + \alpha\bm\epsilon_r$ where $\bm\epsilon_r\sim \gN(\bm 0, r/\sqrt{D}\mI)$. Since $p_r$ has roughly the same phase as $p_{\sigma=r/\sqrt{D}}$ in diffusion models, we pick $r/\sqrt{D}$ standard deviation of $\bm \epsilon_r$ when the intermediate step is $r$.

\paragraph{Post-training quantization} In the post-training quantization experiments on CIFAR-10, we quantize the weights of convolutional layers excluding the $32\times 32$ layers, as we empirically observe that these input/output layers are more critical for sample quality.
\section{Extra Experiments}

\subsection{Stable Target Field}
\label{app:stf}

\citet{Xu2023StableTF} propose a Stable Target Field objective for training the diffusion models:
\begin{align*}
    \nabla_{\rvx}\log p_\sigma(\rvx) \approx \E_{\rvy_1 \sim p_{0|t}(\cdot|\rvx)}\E_{\{\rvy_i\}_{i=2}^{n} \sim p^{n-1}}\left[{\sum_{k=1}^n \frac{p_{t|0}(\rvx|\rvy_k)}{\sum_{j} p_{t|0}(\rvx|\rvy_j)} }\nabla_{\rvx}\log p_{t|0}(\rvx|\rvy_k)\right]
\end{align*}
where they sample a large batch of samples $\{\rvy_i\}_{i=2}^n$ from the data distribution to approximate the score function at $\rvx$. They show that the new target can enhance the stability of converged models in different runs/seeds. PFGM++ can be trained in a similar fashion by replacing the target $\frac{{\rvx}-{\rvy}}{{r}/{\sqrt{D}}}$ in perturbation-based objective~(\Eqref{eq:pfgmpp-obj}) with
\begin{align*}
    \frac{1}{{r}/{\sqrt{D}}}\left(\rvx - \E_{p_{0|r(\rvy|\rvx)}}[\rvy]\right) \approx \frac{1}{{r}/{\sqrt{D}}}\left(\rvx - \E_{\rvy_1 \sim p_{0|r}(\cdot|\rvx)}\E_{\{\rvy_i\}_{i=2}^{n} \sim p^{n-1}}\left[{\sum_{k=1}^n \frac{{1}/{({\|\rvx-{\rvy_k}\|_2^2+ r^2})^\frac{N+D}{2}}}{\sum_j{1}/{({\|\rvx-{\rvy_j}\|_2^2+ r^2})^\frac{N+D}{2}}} }\rvy_k\right]\right)
\end{align*}
When $n=1$, the new target reduces to the original target. Similar to \cite{Xu2023StableTF}, one can show that the bias of the new target together with its trace-of-covariance shrinks to zero as we increase the size of the large batch. This new target can alleviate the variations between random seeds. With the new STF-style target, Table~\ref{tab:cifar-stf} shows that when setting $D=3072000\gg N=3072$, the model obtains the same FID score as the diffusion models~(EDM~\cite{Karras2022ElucidatingTD}). It aligns with the theoretical results in Sec~\ref{sec:diffusion}, which states that PFGM++ recover the diffusion model when $D\to \infty$. 
\begin{table}[htbp]
    \small
    \centering
    \caption{FID and NFE on CIFAR-10, using the Stable Target Field~\cite{Xu2023StableTF} in training objective.}
    \begin{tabular}{l c c c}
    \toprule
         &FID $\downarrow$ & NFE $\downarrow$\\
        \midrule
        $D=3072000$  &1.90 & 35\\
        $D\to \infty$~\cite{Karras2022ElucidatingTD}  &1.90 & 35\\
        \bottomrule
    \end{tabular}
    \label{tab:cifar-stf}
\end{table}

\subsection{Extended CIFAR-10 Samples when varying $\alpha$}
\label{app:robust}

To see how the sample quality varies with $\alpha$, we visualize the generative samples of models trained with $D\in \{64, 128, 2048\}$ and $D\to \infty$. We pick $\alpha \in \{0, 0.1, 0.2\}$. \Figref{fig:robust_vis} shows that the smaller $D$s produce better samples compared to larger $D$. Diffusion models~($D\to\infty$) generate noisy images that appear to be out of the data distribution when $\alpha=0.2$, in contrast to the clean images by $D=64, 128$.
\begin{figure*}
    \centering
\subfigure[$D{=}64, \alpha=0$~(FID=1.96)]{\includegraphics[width=0.26\textwidth]{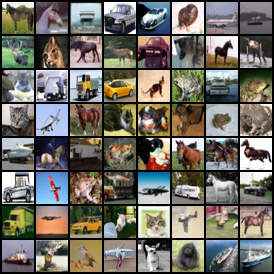}}\hfill
\subfigure[$D{=}64, \alpha=0.1$~(FID=1.97)]{\includegraphics[width=0.26\textwidth]{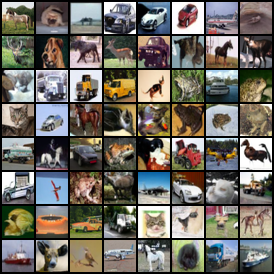}}\hfill
\subfigure[$D{=}64, \alpha=0.2$~(FID=2.07)]{\includegraphics[width=0.26\textwidth]{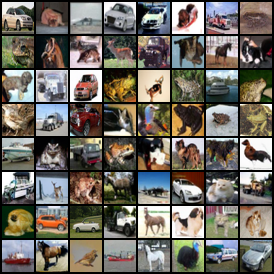}}

\subfigure[$D{=}128, \alpha=0$~(FID=1.92)]{\includegraphics[width=0.26\textwidth]{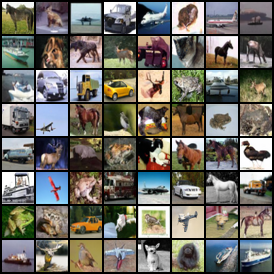}}\hfill
\subfigure[$D{=}128, \alpha=0.1$~(FID=1.95)]{\includegraphics[width=0.26\textwidth]{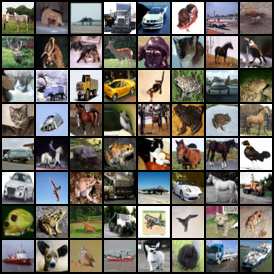}}\hfill
\subfigure[$D{=}128, \alpha=0.2$~(FID=2.19)]{\includegraphics[width=0.26\textwidth]{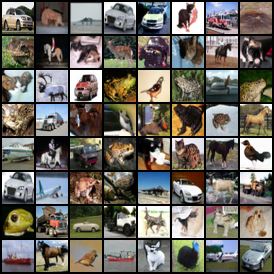}}

\subfigure[$D{=}2048, \alpha=0$~(FID=1.92)]{\includegraphics[width=0.26\textwidth]{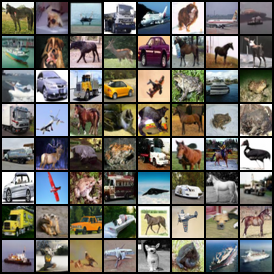}}\hfill
\subfigure[$D{=}2048, \alpha=0.1$~(FID=1.95)]{\includegraphics[width=0.26\textwidth]{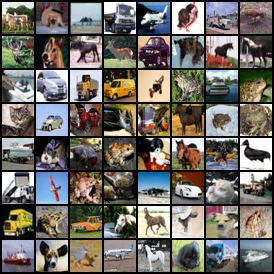}}\hfill
\subfigure[$D{=}2048, \alpha=0.2$~(FID=2.19)]{\includegraphics[width=0.26\textwidth]{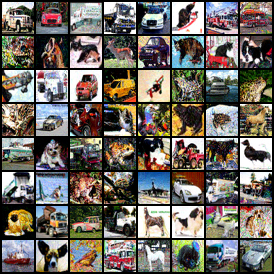}}

\subfigure[$D\to\infty, \alpha=0$~(FID=1.98)]{\includegraphics[width=0.26\textwidth]{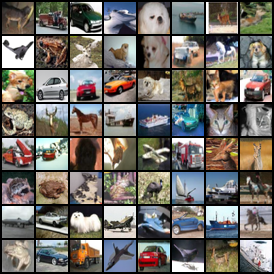}}\hfill
\subfigure[$D\to\infty, \alpha=0.1$~(FID=9.27)]{\includegraphics[width=0.26\textwidth]{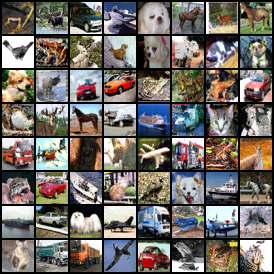}}\hfill
\subfigure[$D\to\infty, \alpha=0.2$~(FID=92.41)]{\includegraphics[width=0.26\textwidth]{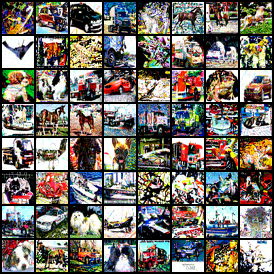}}
    \caption{Generated samples on CIFAR-10 with varied hyper-parameter for noise injection~($\alpha$). Images from top to bottom rows are produced by models trained with $D=64/128/2048/\infty$. We use the same random seeds for finite $D$s during image generation.}
    \label{fig:robust_vis}
\end{figure*}

\subsection{Extended FFHQ Samples}

In \Figref{fig:ffhq-vis}, we provide samples generated by the $D=128$ case and EDM~(the $D\to \infty$ case).
\begin{figure*}
\centering
    \subfigure[$D=128$~(FID=2.43)]{\includegraphics[width=0.45\textwidth]{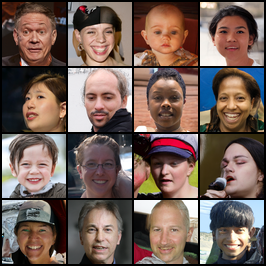}}\hfill
    \subfigure[EDM~($D\to \infty$)~(FID=2.53)]{\includegraphics[width=0.45\textwidth]{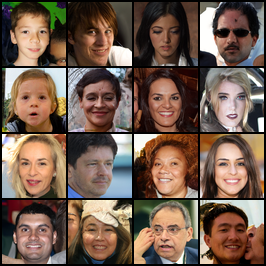}}
    \caption{Generated images on FFHQ $64\times 64$ dataset, by \textbf{(left)} $D=128$ and \textbf{(right)} EDM~($D\to \infty$).}
    \label{fig:ffhq-vis}
\end{figure*}
% \begin{figure}[htbp]
%     \centering
% {\includegraphics[width=0.31\textwidth]{}}
% \caption{Relative std($R$)/mean($R$) with $r=\sigma\sqrt{D}$.}
% \end{figure}

\section{Potential Negative Social Impact}
\label{app:impact}

The deep generative model is a burgeoning field and has significant potential for shaping our society. Our work presents a novel family of generative models, the PFGM++, which subsume previous high-performing models and provide greater flexibility. The PFGM++ have many potential applications, particularly in areas that require both robustness and high-quality output. However, it is important to note that the usage of these models can have both positive and negative implications, depending on the specific application. For instance, the PFGM++ can be used to create realistic image and audio samples, but it can also contribute to the development of deepfake technology and potentially lead to social scams. Additionally, the data-collecting process for generative models may infringe upon intellectual property rights. To address these concerns, further research is needed to provide robustness guarantees for generative models and to foster collaborations with experts in socio-technical fields.

\end{document}